\documentclass[year=26,pdfa]{fmcad}

\usepackage{graphicx} 

\title{Property-driven Causal Abstractions\\ for Markov Decision Processes}
\author{Jule Schmidt$^1$\orcid{0009-0004-2109-9975}, Maximilian Weininger$^1$\orcid{0000-0002-0163-2152}, Clemens Dubslaff$^2$\orcid{0000-0001-5718-8276}, David Parker$^3$\orcid{0000-0003-4137-8862}, Nils Jansen$^{1,4}$\orcid{0000-0003-1318-8973} \\
\{jule.schmidt, maximilian.weininger, n.jansen\}@rub.de, c.dubslaff@tue.nl, david.parker@cs.ox.ac.uk \\
\footnotesize{
$^1$Ruhr-University Bochum, Germany, $^2$Eindhoven University of Technology, The Netherlands,}
\\
\footnotesize{$^3$Oxford University, UK, $^4$Radboud University, Nijmegen, The Netherlands}}

\newcommand{\Nat}{\ensuremath\mathbb{N}\xspace}

\newcommand{\mdp}{\mathsf{M}}         
\newcommand{\imdp}{\mathcal{M}} 
\newcommand{\sg}{\mathcal{G}} 

\newcommand{\absmdp}{\hat{\mdp}} 
\newcommand{\absimdp}{\hat{\imdp}} 
\newcommand{\abssg}{\hat{\sg}} 

\newcommand{\states}{\mathcal{S}}      
\newcommand{\absstates}{\hat{\states}} 
\newcommand{\istates}{\states}
\newcommand{\absistates}{\absstates}
\newcommand{\state}{\mathit{s}}        
\newcommand{\absstate}{\hat{\state}}   
\newcommand{\action}{\mathit{a}}
\newcommand{\q}{q} 
\newcommand{\absq}{\hat{\q}} 
\newcommand{\statevar}{\mathit{X}} 
\newcommand{\statevarval}{\mathit{x}}  

\newcommand{\Max}{\vartriangle}
\newcommand{\Min}{\triangledown}

\newcommand{\statesMax}{\states_{\Max}}
\newcommand{\statesMin}{\states_{\Min}}
\newcommand{\absstatesMax}{\absstates_{\Max}}
\newcommand{\absstatesMin}{\absstates_{\Min}}

\newcommand{\actions}{\mathcal{A}}   
\newcommand{\absactions}{\hat {\actions}}

\newcommand{\transition}{\mathcal{P}}  
\newcommand{\abstransition}{\hat{\transition}} 
\newcommand{\uncertaintyset}{\mathcal{U}}
\newcommand{\itransitionlb}{\transition_{\mathit{L}}}
\newcommand{\absitransitionlb}{\abstransition_{\mathit{L}}}
\newcommand{\itransitionub}{\transition_{\mathit{U}}}
\newcommand{\absitransitionub}{\abstransition_{\mathit{U}}}

\newcommand{\distribution}{\mathit{D}} 

\newcommand{\fullmdp}{\mdp = (\states, \actions, \transition)}
\newcommand{\fullimdp}{\imdp = (\states, \actions, \itransitionlb, \itransitionub)}
\newcommand{\fullSG}{\sg = (\states,\statesMax,\statesMin, \actions, \transition)}

\newcommand{\Paths}{\mathrm{Paths}}

\newcommand{\Path}{\rho}

\newcommand{\targetset}{\textsf{T}}

\newcommand{\abstargetset}{\hat{\targetset}}

\newcommand{\reachprob}{\mathbb{P}}

\newcommand{\policy}{\pi}
\newcommand{\policies}{\Pi}

\newcommand{\intervalpolicy}{\policy_2}

\newcommand{\abspolicy}{\hat{\policy}}

\newcommand{\semantics}[1]{\llbracket#1\rrbracket}

\newcommand{\opt}{\mathrm{opt}}
\newcommand{\nopt}{\overline{\opt}}

\newcommand{\boolvar}{\ensuremath{F}\xspace}
\newcommand{\Boolvar}{\ensuremath{\mathsf{F}}\xspace}
\newcommand{\boolvarval}{\ensuremath{f}\xspace}
\newcommand{\partint}{\delta} 
\newcommand{\partintset}{\Delta} 
\newcommand{\totint}{\theta} 
\newcommand{\totintset}{\Theta} 

\newcommand{\Effect}{\mathsf{Effect}}
\newcommand{\Reach}{\mathsf{Valid}}

\newcommand{\Causes}{\mathsf{Causes}}
\newcommand{\featcause}{\gamma} 

\newcommand{\partition}{\mathit{P}}

\newcommand{\val}{\mathsf{V}}

\newcommand{\wval}{\mathsf{W}} 

\newcommand{\bval}{\mathsf{B}} 

\newcommand{\abs}[1]{\lvert #1 \rvert}

\newcommand{\eqdef}{\vcentcolon=}

\usepackage{xargs}
\usepackage[textwidth=15mm,
	backgroundcolor=white,
    tickmarkheight=2pt,
	textsize=tiny]{todonotes}
\usepackage{silence}
\newcommandx{\todomacro}[4][4=]{%
    \todo[bordercolor=#2, linecolor=#2, , backgroundcolor=#2!4, #4]{\textcolor{#2}{\textbf{#1:} #3}}%
}
\newcommandx{\js}[2][2=]{\todomacro{JS}{violet}{#1}[#2]}
\newcommandx{\mw}[2][2=]{\todomacro{MW}{blue}{#1}[#2]}
\newcommandx{\nj}[2][2=]{\todomacro{NJ}{red}{#1}[#2]}
\newcommandx{\cd}[2][2=]{\todomacro{CD}{cyan}{#1}[#2]}
\newcommandx{\dave}[2][2=]{\todomacro{DP}{teal}{#1}[#2]}

\newcommand{\para}[1]{\smallskip\noindent\emph{#1}}

\newcommand{\predpart}{\mathsf{PredPart}}

\newcommand{\cgpart}{\mathsf{CGPart}}

\newcommand{\xpos}{\mathit{x}}
\newcommand{\ypos}{\mathit{y}}
\newcommand{\passenger}{\mathit{passenger}}
\newcommand{\battery}{\mathit{battery}}
\usepackage{csquotes}
\usepackage{amsmath}
\usepackage{amsfonts}
\usepackage{amssymb,amsthm}
\usepackage{mathtools}
\usepackage{placeins}

\theoremstyle{plain}
\newtheorem{theorem}{Theorem}
\newtheorem{lemma}{Lemma}

\theoremstyle{definition}
\newtheorem{definition}{Definition}

\theoremstyle{plain}
\newtheorem{example}{Example}
\theoremstyle{remark}
\newtheorem{remark}{Remark}

\usepackage{thm-restate}

\usepackage{algpseudocode}
\usepackage{algorithm}

\usepackage{stmaryrd}

\usepackage{xcolor}
\usepackage{enumitem}

\usepackage{hyperref}
\usepackage[capitalize]{cleveref}
\usepackage{wrapfig}

\usepackage{tikz}
\usepackage{pgfplots}

\usetikzlibrary{shapes,positioning,arrows,arrows.meta,calc,automata,matrix,fit,backgrounds,fadings,colorbrewer,plotmarks}
\usepackage{tikzsymbols}

\pgfplotsset{compat=1.17}

\tikzstyle{loopaction}=[font=\small,outer sep=2pt]
\tikzstyle{actionnode}=[circle,draw=black,fill=black,minimum size=1mm,inner sep=0,outer sep=0]
\tikzstyle{actionedge}=[draw,-]
\tikzstyle{prob}=[font=\scriptsize,outer sep=1pt]
\tikzstyle{probedge}=[draw,->]
\tikzstyle{directedge}=[draw,->]

\tikzset{chainarrow/.tip={Stealth[length=3pt]}}
\tikzset{>=chainarrow}

\tikzset{
	action/.style={font=\small,outer sep=2pt,inner sep=0pt},
	state/.style={thick,align=center,minimum width=0.8cm,minimum height=0.8cm,inner sep=0.2em,rectangle,rounded corners,black,draw},
	triangle/.style={regular polygon, regular polygon sides=3},
	max vertex/.style={state,path picture={\draw[gray,rounded corners=0,fill,fill opacity=0.2,thick] (path picture bounding box.south west) -- (path picture bounding box.north) -- (path picture bounding box.south east);}},
	min vertex/.style={state,path picture={\draw[gray,rounded corners=0,fill,fill opacity=0.2,thick] (path picture bounding box.north west) -- (path picture bounding box.south) -- (path picture bounding box.north east);}},
	staterewardbox/.style={draw,fill=white,rounded corners=1,inner sep=2pt}
}

\usepackage{csvsimple}
\usepackage{booktabs}
\usepackage{subcaption}

\usepackage{fontawesome5}

\usepackage[table]{xcolor}

\usepackage{etoolbox} 
\newtoggle{arxiv}
\newcommand{\ifarxivelse}[2]{\iftoggle{arxiv}{#1}{#2}}
\toggletrue{arxiv} 

\renewcommand{\textcolor}[2]{#2}
\begin{document}
\pagestyle{plain}
\maketitle


\begin{abstract}
Markov Decision Processes (MDPs) are widely used as decision-making models, commonly specified over factored state spaces through state variables and their valuations.
The exponential blowup in the number of states renders many reasoning tasks in MDPs challenging.
Abstractions are promising techniques to reduce MDPs and thus mitigate scalability issues.
In this work, we introduce a notion of causality on factored MDPs and a novel property-driven causal abstraction technique that retains many characteristics of the original MDP model.
For this, we rely on causal relations over state variable predicates and identify those states that share the same reasons for fulfilling or violating a given abstraction property.
We theoretically and empirically compare various causal MDP abstractions using different model types such as MDPs, interval MDPs, or stochastic games.
Our evaluation demonstrates the potential of our approach: 
For several standard benchmarks, we obtain small abstractions that allow us to compute near-optimal policies for the original MDP.
Furthermore, our causal abstractions often generalize to related large-scale MDP models.
\end{abstract}


\section{Introduction}
\label{sec:1-introduction}
Markov Decision Processes (MDPs) are a common model for sequential decision-making under uncertainty.
A standard way to specify an MDP is by factoring its state space, where each state corresponds to a valuation of a vector of state variables~\cite{DBLP:conf/aaai/StrehlDL07}. 
These variables inherently carry meaning, capturing structural properties of the environment and providing a compact description of the system dynamics.
Probabilistic model checking tools like PRISM~\cite{DBLP:conf/cav/KwiatkowskaNP11} or Storm~\cite{hensel2022storm} have input languages that follow this principle and provide mature implementations for formally analyzing MDPs.

\emph{Abstractions.}
As the number of variables increases, the induced state space quickly grows beyond the reach of exact analysis, which motivates the use of abstractions.
Abstractions reduce the size of the original MDP model, simplifying the solving while preserving the information relevant to the property of interest.
Predicate abstraction~\cite{GraSai97} groups states according to the truth values of predicates over the state variables. 
However, choosing suitable predicates is difficult, domain-dependent, and may result in abstractions that are either still too large to be solved or too coarse to be informative. 

    \begin{figure}
        \centering
        \begin{tikzpicture}[
        box/.style={rectangle,draw=black,thick, minimum size=0.98cm},
    ]
	\draw[] (0,0) grid (3,3);

    \node (x) at (0, 3.25) {x};
    \node (x0) at (0.5, 3.25) {0};
    \node (x1) at (1.5, 3.25) {1};
    \node (x2) at (2.5, 3.25) {2};
    \node (y) at (-0.25, 3) {y};
    \node (y0) at (-0.25, 2.5) {0};
    \node (y1) at (-0.25, 1.5) {1};
    \node (2) at (-0.25, 0.5) {2};

	\node (car) [scale=0.9] at (0.5, 2.5) {\huge\faIcon{car}};
    \node (charge) at (2.5, 0.5) {\huge\faIcon{charging-station}};
    \node (parking) at (0.5, 0.5) {\huge\faIcon{flag-checkered}};
    \node (passenger) at (2.5, 2.5) {\huge\faIcon{female}};
    \node [scale=0.35] (battery) at (0.25, 2.85) {\huge\faIcon{battery-full}};
\end{tikzpicture}
        \caption{The electric taxi MDP model, used as our running example throughout the paper.}
        \label{fig:fuel_taxi_mdp}
    \end{figure}
\begin{example}
    Throughout this paper, we use the electric taxi MDP~\cite{simao2021alwayssafe}, visualized in \cref{fig:fuel_taxi_mdp}, as our illustrative running example.
    In brief, the taxi must navigate a 2D grid-world, to bring a passenger to a destination, while ensuring that it does not run out of battery.
    Passing by a charging station allows the battery to be recharged. 
    Already for a $3\times3$ grid, this MDP consists of $93$ states.
    A grid of size $10\times 10$ has ca. $8200$ states, a  $100\times 100$  grid has ca. $10^7$, and even larger state spaces are to be expected in real-world scenarios. 
    Many states, however, carry similar information:
    For example, the agent's grid position can be identical across different battery levels and regardless of the passenger's status.
    Slightly different position valuations may also induce the same level of safety criticality.
    An abstraction only focused on safety can merge these states, leaving a smaller model for exact analysis.
\end{example}

\begin{figure}[t]
    \centering
    \begin{tikzpicture}[
            box/.style={draw, rounded corners, 
            minimum width=1cm, 
            minimum height=0.7cm, 
            align=center},
            arrow/.style={->, thick},
            font=\scriptsize,
            yscale=0.8
        ]
        
    \node[box] (MDP) at (0,1)  {MDP};
    \node[box] (prop) at (0,0)  {Property};
    \node[box] (effect) at (2.5,0)  {Effect};
    \node[box] (cp) at (5,0)  {Causal\\Partition};
    \node[box] (abs) at (7.7,0)  {Abstraction};

    \draw[->,rounded corners=3pt] (MDP) to ($(MDP)+(0.65,0)$) to[out=-90,in=90] ($(MDP) + (0.65,-1)$) to node[above,yshift=-1mm] {Analysis} (effect);
    \draw[->] (prop) to (effect);

    \draw[->,rounded corners=3pt] (MDP) to ($(MDP)+(3.15,0)$) to ($(MDP) + (3.15,-1)$) to node[above,yshift=-0.9mm,align=center] {Causal\\Reasoning} (cp);
    \draw[->] (effect) to (cp);

    \draw[->,rounded corners=3pt] (MDP) to ($(MDP)+(5.65,0)$) to ($(MDP) + (5.65,-1)$) to node[above,yshift=-0.9mm,align=center] {Aggregation} (abs);
    \draw[->] (cp) to (abs);
    
    \end{tikzpicture}
    \caption{The property-driven causal abstraction workflow.}
    \label{fig:pipeline}
\end{figure}

\emph{Causality.}
The example illustrates that property-relevant information often depends on concrete valuations of specific state variables only.
To reveal such information, we employ approaches from causal inference~\cite{pearl2009causalitybook,halpern2016actualcausality}.
In our context of MDPs, such reasoning helps identify state variables and valuations that cause the satisfaction or violation of a property.

\textbf{Our approach: Property-driven causal abstractions.}
We provide formal concepts of causality for factored MDPs, establishing feature causality~\cite{dubslaff2024featurecausality} over state variable predicates.
We develop abstraction methods that (1) find causes for a given property of interest in a factored MDP, (2) use these causes to obtain state-space partitions, from which we (3) construct abstract models that provide a suitable tradeoff between model size and preservation of property-relevant information.
We now detail the steps of our approach as visualized in \cref{fig:pipeline}.

\emph{Effect sets and predicates.} 
In the first step, we analyze the input MDP regarding the property at hand, such as reaching certain unsafe states. 
Based on that, we build so-called effect sets that distinguish states with respect to (potentially multiple different) probabilities to satisfy (\enquote{good effects}) or violate (\enquote{bad effects}) the property, or more fine-grained notions such as nearly satisfying or violating.
Then, we employ a Boolean encoding of the factored state towards a predicate abstraction describing the membership of states to the various effect sets.

\emph{Causes.}
The second step introduces the novel concept of feature causality for factored MDPs.
Our notion of \emph{causes} takes into account specific valuations of state variables and not just the states, exploiting the inherent meaning of variables.
Using causal inference, we determine such causes as minimal sets of predicates sufficient to show an effect.

\emph{Causal partition.}
We then partition the state space based on causally relevant parts.
In particular, we consider three approaches towards such a partition: \textit{one-shot}, which groups states that satisfy the same causes; \textit{iterative}, which iteratively considers multiple effect sets; and \textit{causal graph}, a standard way to aggregate states that neglect specific state valuations.

\emph{Abstraction.}
Given a partition and the original MDP, we finally build an abstraction by aggregating states in the same subset of the partition into one abstract state.
To achieve a thorough evaluation of the approach, we compare three methods for abstracting an MDP's transition function: (1) using a weighted average, resulting in an MDP~\cite{simao2021alwayssafe}, (2) constructing an interval MDP where the intervals capture all possible transition probabilities~\cite{DBLP:journals/ai/GivanLD00,DBLP:conf/birthday/SuilenBB0025,DBLP:journals/sttt/BadingsSSJ23}, and (3) using a stochastic game where an opponent chooses original states from the partition~\cite{kattenbelt2010gamebasedabstraction}.

\emph{Results.}
Our paper introduces property-driven causal abstractions and demonstrates their potential: 
Our experimental evaluation shows that, on many models, causal abstractions reduce model size, while hardly compromising the quality of optimal policies.
Further, causes can generalize from small model variants to larger ones.
Our detailed analysis of different choices in analyzing the property, performing causal reasoning, and aggregating states reveals that usually, focusing on states that are close to satisfying the property leads to the best partitions, 
and that using SG-based abstractions leads to the best performance.
Overall, our contributions are the following:
 \begin{itemize}
     \item A new notion of causality exploiting the meaning of state variables in factored MDPs.
     \item Property-driven causal abstractions, including interval MDPs, and their theoretical analysis.
     \item An empirical evaluation of the tradeoff between abstraction and policy performance on standard benchmarks.
 \end{itemize}

\noindent\emph{Limitations.}
Currently, our approach requires expensive steps, namely analyzing the whole MDP to find effect sets and computing the causes.
However, our main goal is to achieve a fundamental understanding of feature causality and its potential for abstracting MDPs.
Then, given this knowledge, the next step is to develop efficient ways to approximate the effect set and its causes, for example by reinforcement learning or by solving small MDPs precisely to obtain causes and then using them to abstract larger model variants.
We remark that no automated abstraction method can be expected to work for every MDP, independent of property and structure. 
\textcolor{violet}{We view our approach as a step on the route towards scalability instead of a complete practical solution.}


\section{Related Work}
\emph{Causality} has been extensively studied in philosophy, social sciences, and artificial intelligence~\cite{Eells91,pearl2009causalitybook,Peters2017}.
Seminal work by Halpern and Pearl led to \emph{actual causality}~\cite{halpern2015causalitymodified} establishing cause-effect relationship-based structural causal models (SCMs).
\emph{Feature causality}~\cite{DubWeiBai22} is a form of actual causality that does not rely on learning SCMs~\cite{dubslaff2024featurecausality} and can serve as abductive explanations under constraints~\cite{GorRub22}.
Our approach relates most to work on causal reasoning in Markovian models, recently considered in formal verification and reinforcement learning (RL).
\emph{Probabilistic causes} were defined as state sets for which the probability of an effect increases~\cite{BaiDubFun21,baier2024foundationsprcausality,oura2025prcausalityuncertainmdp}. 
Other approaches rely on hyperproperties~\cite{abraham2018hyperpctl,DimFinkbeinerTorfah-probHyper-MDP-ATVA2020}, using counterfactual logic~\cite{Kazemi_Lally_Paoletti_2025}, or smallest prefix paths~\cite{ziemek2022probcausesmc}.
Due to their path dependency, they are not directly suitable for local causal state abstractions as we present here.
In the context of RL, causal information is mainly used to improve the performance -- cf. surveys on causal RL~\cite{zeng2023surveycrl,deng2023crlsurvey}.
Most approaches there rely on given SCMs and adapt existing model-free~\cite{bareinboim2015banditscausal,mendezmolina2020causalqlearning} and model-based methods~\cite{sun2022swartd,gasse2021crlobsintdata}.

\textit{Abstracting Markovian models.}
Various state-abstraction and refinement methods tackle the state-space-explosion problem in MDP-based analysis~\cite{li2006stateabstractionmdp}.
Probabilistic \emph{bisimulation}~\cite{LarSko91} is prevalent, and has been extended to MDPs with factored state spaces~\cite{GivDeaGre03} or using bisimulation metrics~\cite{FerPanPre04,RuaComPan15}.
PrIC3~\cite{BatJunKam20} generates ad-hoc local on-the-fly abstractions to find inductive invariants, but does not generate a-priori sufficient or necessary conditions for these.
Simão et al. considered abstractions in constrained factored MDPs during RL at the level of variables~\cite{simao2021alwayssafe}. 
Our abstraction method is more fine-grained, relying on the impact of variable valuations.

\textit{Causal abstractions.}
Lally et~al.~\cite{lally2026robustcounterfactualinferencemarkov} consider causal abstractions towards interval MDP with theoretical guarantees, however relying on SCMs compatible with an underlying MDP.
Zhang et~al.~\cite{zhang2020invariantcausalblockmdp} identify causal state features in so-called \emph{block MDPs} using invariant causal prediction to improve RL but rely on different notions of causality depending on conditional probabilities.
Another line of work learns causal graphs over state variables to derive abstractions~\cite{wang2022causaldynamicsabstraction,wang2024causalstateabstractions}. 
Clustering states in an MDP with similar causal importance for an RL objective has shown to improve learning~\cite{PraChoTap24}.
Similarly, we propose a property-driven abstraction method lumping explanatory equivalent states.

In contrast to the aforementioned abstraction approaches, we provide a foundational formal framework for property-driven causal abstraction in factored MDPs that does not rely on SCM learning, statistical data, nor is limited to the application of RL.

\section{Preliminaries}
We recall Markov decision processes (MDPs, \cref{sec:2-mdp}) and their extensions to interval MDPs and stochastic games (\cref{sec:2-imdp}) that we use for abstractions.
Finally, we provide the basic notions of feature causality (\cref{sec:2-causality}).

For bounds $\ell,u\in\Nat$, let us denote by $[\ell..u]$ the interval $\{n \in\Nat \mid \ell\leq n \leq u\}$.
Given a finite set $Y$, a \emph{partition} of $Y$ is a set $\partition = \{\partition_1, \ldots,\partition_k \subseteq Y\}$ of pairwise disjoint sets such that $\biguplus_{i\in[1,k]} \partition_i = Y$.
A \emph{probability distribution} over $Y$ is a function $\mathsf d \colon Y \to [0,1]$ where 
$\sum_{y\in Y} \mathsf d(y) = 1$. 
The set of all probability distributions over $Y$ is denoted by $\distribution(Y)$, and $Y^\omega$ is the set of all infinite sequences of elements in $Y$.

\subsection{Markov Decision Processes}\label{sec:2-mdp}

\begin{definition}[Markov decision process (MDP)]
    \label{def:mdp}
    An MDP~\cite{puterman} is a tuple $\fullmdp$ with finite sets of states $\states$ and actions $\actions$, 
    and a (partial) transition function $\transition \colon \states \times \actions \to \distribution(\states)$.
\end{definition}

We say that an MDP $\fullmdp$ is \emph{factored} 
if every state $\state\in\states$ is a vector of $n$ state valuations $(\statevarval_1, \ldots, \statevarval_n)\in \statevar_1 \times \ldots \times \statevar_n$ 
over state variables $\statevar_i$ associated with a finite domain of $m_i$ elements $\statevarval_{i,j}$ for $i\leq n$, $j<m_i$.
We write $\actions(\state)$ for the set of \emph{available actions} in state $\state\in\states$, i.e., those where $\transition$ is defined, and assume $\actions(\state) \neq \emptyset$ for all $\state\in\states$.
We write $\transition(\state,\action,\state')$ for $\transition(\state, \action)(\state')$. 

\begin{example}\label{ex:mdp}
    We model the taxi example from the introduction (see \cref{fig:fuel_taxi_mdp}) as the following MDP:
    The state space is made up of four state variables $\statevar_1 \times \statevar_2 \times \statevar_3 \times \statevar_4 \in [0..2]\times[0..2]\times[0..5]\times\{0,1\}$, with two variables for the position, one for the battery, and one for the passenger's status.
    The actions are $\actions = \{u,d,l,r,e\}$, where the first four move the taxi \textbf{u}p, \textbf{d}own, \textbf{l}eft, or \textbf{r}ight, and $e$ allows the passenger to \textbf{e}nter or \textbf{e}xit the taxi.
    Action $e$ is only available in states with $\battery>0$ at the pickup location ($x=0, y=0$).
    For a moving action, the transition function $\transition(s,a)$ assigns probability $0.9$ to the state in the chosen direction  
    and probability $0.1$ of getting stuck in traffic; in both cases, the battery decharges.
    States with $\battery=0$ are self-looping sink states, and the taxi resets $\battery$ to 5 by passing the charging station at $\xpos=2,\ypos=2$.
\end{example}

The \emph{semantics} of MDPs is defined as usual through policies and paths.
A \emph{policy} is a function $\policy \colon \states \to \actions$ assigning to each state $s$ an available action $a\in \actions(s)$.
We restrict to memoryless deterministic policies
, as these suffice for optimality under the objectives we consider~\cite{BK08}. 
A \emph{path} is an infinite sequence $\Path = \state_0 \action_0 \state_1 \action_1 \ldots \in (\states\times\actions)^\omega$ where $\transition(\state_i,\action_i,\state_{i+1}) >0$ for all $i\in\mathbb{N}_0$.
We write $\Path_i$ 
for the state $s_i$ of a path $\Path$.
$\policies$ is the set of all policies and $\Paths_\mdp$ the set of all paths.
Applying a policy $\policy$ to an MDP $\mdp$ induces for each state $\state$ a unique probability measure over paths $\reachprob^{\policy}_{\state, \mdp}$ \cite[Chapter 10.1]{BK08}. 

\para{Property, objective and value.} 
A \emph{reachability} property combines a set of target states $\targetset\subseteq \states$ with an optimization objective $\opt\in\{\min,\max\}$ that captures the optimal probability to reach $\targetset$, denoted $\lozenge \targetset \eqdef \{\Path\in\Paths_\mdp \mid \exists i\in\mathbb{N}_0\colon \Path_i\in \targetset\}$.
The \emph{value of a state}~$\state$ is $\val_{\mdp}(s) \eqdef \opt_{\policy\in\policies} \reachprob^{\policy}_{\state, \mdp}[\lozenge \targetset]$.

\begin{example}\label{ex:objective}
    An example property for the taxi model uses $\opt=\min$ and $\targetset=\{(\xpos, \ypos, \battery, \passenger) \mid \battery = 0\}$.
    Intuitively, the value of a state captures the minimum probability for it to run out of battery.
\end{example}

\subsection{Interval Markov Decision Processes and Stochastic Games}\label{sec:2-imdp}

\begin{definition}[Interval MDP (IMDP)]
    \label{def:imdp}
    An IMDP~\cite{DBLP:journals/ai/GivanLD00} is a tuple $\fullimdp$, where $\istates$ and $\actions$ are as for MDPs and
    $\itransitionlb,\itransitionub\colon\states\times\actions\times\states \to [0,1]$ are functions providing lower and upper bounds on transition probabilities.
\end{definition}

We write $\mdp\in\imdp$ to indicate that an MDP $\mdp$ is \emph{consistent} with an IMDP $\imdp$.
That is, they share state and action spaces, and for all states $\state,\state'$ and actions $\action$ we have 
$\itransitionlb(\state,\action,\state') \leq \transition(\state,\action,\state') \leq \itransitionub(\state,\action,\state')$
Note that if $\itransitionlb(\state,\action,\state')>\itransitionub(\state,\action,\state')$, there exists no consistent MDP.

\begin{example}
    Recall from \cref{ex:mdp} that the chances of being stuck in traffic are precisely known to be 0.1.
    In an IMDP, we can instead have, e.g., $\transition(\state_0, l, \state_1) = [0.8, 0.95]$ and $\transition(\state_0, l, \state_2) = [0.05, 0.2]$, adding uncertainty to the transition probability.
    In \cref{sec:4-title}, we will abstract IMDPs from MDPs by grouping states.
    Then, the bounds on the transition probabilities are determined by the extremal transition probabilities in the group of states.
\end{example}

\para{IMDP semantics and value.}
The semantics of an IMDP is the set of all consistent MDPs. 
We focus on consistent MDPs with best- and worst-case value:
For a given optimization direction $\opt\in\{\min,\max\}$, we write $\nopt$ for its complement, i.e., $\nopt \eqdef \min$ if $\opt=\max$ and $\nopt \eqdef \max$ otherwise.
Then, 
$\bval_{\imdp}(\state) \eqdef \opt_{\mdp\in\imdp} \val_{\mdp}(\state)$
and
$\wval_{\imdp}(\state) \eqdef \nopt_{\mdp\in\imdp} \val_{\mdp}(\state)$
are the best- and worst-case value of a state $\state$, respectively.

\begin{definition}[Stochastic game (SG)]
    \label{def:sg}
    An SG~\cite{Condon92} is a tuple $\fullSG$, where $\states$, $\actions$, and $\transition$ are as for MDPs and $\states = \statesMax \uplus \statesMin$, i.e., the state space is partitioned into states of the agent ($\Max$) and opponent ($\Min$).
\end{definition}

\para{SG semantics and value.} 
In SGs, we separately consider the sets of policies 
$\policies_\Max$ and $\policies_\Min$ of the agent and opponent, where $\policy_\Max \in \policies_\Max$ is a function $\statesMax \to \actions$ and analogously for the opponent.
Fixing a pair of policies $(\policy_\Max,\policy_\Min)$ induces a probability measure $\reachprob^{\policy_\Max,\policy_\Min}_{\state, \sg}$ as in MDPs.
The value of a state is $\val_{\sg}(\state) \eqdef \opt_{\policy_\Max\in\policies_\Max} \nopt_{\policy_\Min\in\policies_\Min} \reachprob^{\policy_\Max,\policy_\Min}_{\state, \sg}[\lozenge \targetset]$.
We additionally define an analogue of the best-case value of IMDPs, where the opponent states also play in favour of the agent: $\bval_{\sg}(\state) \eqdef \opt_{\policy_\Max\in\policies_\Max} \opt_{\policy_\Min\in\policies_\Min} \reachprob^{\policy_\Max,\policy_\Min}_{\state, \sg}[\lozenge \targetset]$.

\subsection{Feature Causality}\label{sec:2-causality}
Feature causality~\cite{dubslaff2024featurecausality} 
reasons about Boolean \emph{features} and their influence on properties.
For features $\boolvar$, let $\Boolvar=\{a,\overline{a} \mid a\in\boolvar\}$ denote the set of literals indicating that a feature $a$ is active ($a$) or inactive ($\overline{a}$). 
An \emph{assignment} $\partint\subseteq\Boolvar$ is a set of literals where no feature occurs twice, called \emph{total} if every feature occurs exactly once.
We denote by $\partintset(\boolvar)$ and $\totintset(\boolvar)$ the set of assignments and total assignments, respectively.
The \textit{semantics} $\semantics{\partint}$ of an assignment $\partint$ is the set of consistent total assignments, i.e., $\semantics{\partint} \eqdef \{\totint \in \totintset(\boolvar) \mid \partint\subseteq \totint\}$.
Let $\Reach \subseteq \totintset(\boolvar)$ be the set of \emph{valid} assignments satisfying domain constraints, and $\Effect \subseteq \Reach$ a set of \emph{effect} assignments. 
The goal is now to determine those assignments that are \emph{sufficient to guarantee an effect} among those that are valid. 

\begin{definition}[Cause~\cite{dubslaff2024featurecausality}]
    \label{def:featcause}
    Let $\Effect \subseteq \Reach\subseteq \totintset(\boolvar)$. 
    A \emph{cause} of $\Effect$ w.r.t. $\Reach$ is an assignment $\featcause \in \partintset(\boolvar)$, where
    \begin{enumerate}[label=\textup{\textbf{(C\arabic*)}},leftmargin=*,ref=\textup{\textbf{(C\arabic*)}}]
        \item\label{def:fc1} $\emptyset \neq \semantics{\featcause} \cap \Reach \subseteq \Effect$ (\textbf{sufficiency})
        \item\label{def:fc2} $\semantics{\delta} \cap \Reach \nsubseteq \Effect$ for all $\delta\subset\featcause$ (\textbf{subset minimality}).
    \end{enumerate}
\end{definition}
Inspired by actual causality~\cite{halpern2016actualcausality}, \ref{def:fc1} implements the sufficiency condition, ensuring that all total assignments from $\featcause$ are in $\Effect$, and \ref{def:fc2} ensures subset minimality.
A cause $\featcause$ thus can serve as explanation for any assignment $\totint\in\semantics{\featcause}\cap\Effect$.
$\Causes$ is the set of all causes of $\Effect$ w.r.t $\Reach$.

\para{Cause-effect covers.}
A \emph{cause-effect cover} is a set of causes $C\subseteq\Causes$ that explain the whole set $\Effect$, i.e., $\Effect\subseteq \bigcup_{\gamma \in C} \semantics{\gamma}$.
We call $C$ a \emph{minimal cover} if there is no cause-effect cover $C'$ where $\abs{C'}<\abs{C}$~\cite[Section 4.2]{dubslaff2024featurecausality}.
Computing an exact minimal cover is expensive,
therefore heuristics based on \emph{most general causes}~\cite[Definition 4]{dubslaff2024featurecausality} are used to establish small cause-effect covers.


\section{Causal Partitions for Factored MDPs}\label{sec:3-title}

In this section, we establish several approaches to derive partitions of MDP state spaces using property-driven causal predicates.
We thereby define the concept of feature causality for MDPs. 
We fix a factored MDP $\fullmdp$ and introduce predicates over its state variables $\statevar_i$ as a basis for predicate abstractions of its state space $\states$.
We assume that state variables $\statevar_i$ are either \emph{ordinal} or \emph{categorical}.
Ordinal variables are interpretable with a meaningful order, like a coordinate of a grid position.
Let $O\subseteq[1..n]$ be the index set of all ordinal state variables in $\mdp$.
Categorical state variables represent distinct categories like the color of a signal.

\subsection{Predicate Abstraction for Factored MDPs}\label{sec:3-MDP2partialassignment}
We define a set of Boolean features $\boolvar_\mdp$\textcolor{violet}{, which includes one Boolean for each state variable valuation except the smallest one in ordinal variables}. 
\textcolor{violet}{Then, } 
every state $\state\in\states$ uniquely corresponds to a total assignment $\totint_\state\in\totintset(\boolvar_\mdp)$.
Hence, the semantics of an assignment over the features $\boolvar_\mdp$ corresponds to a set of states in $\mdp$.
\textcolor{violet}{We refer to the Boolean features as $\boolvarval_{i,k}$ where $i$ is the index \textcolor{violet}{of the state variable $\statevar_i$} and $k \in [0..\abs{\statevar_i}]$ \textcolor{violet}{its $k$-th valuation.}
}
For an ordinal state variable $\statevar_i$ we assume the order $\statevarval_{i,j}\leq\statevarval_{i,k}$ iff $j\leq k$. 
We now describe sets of states through assignments over $\boolvar_\mdp$ modulo a theory over ordinal and categorical predicates.
\[
\boolvar_\mdp \enskip=\enskip \bigcup_{i\in O} \big\{\,\boolvarval_{i,k} \mid k\in [0..m_i]\,\big\}\setminus\big\{\,\boolvarval_{i,0} \mid i\in O\, \big\}.
\]
\textcolor{violet}{
Let $\state = (\mathfrak{s}_0, \ldots, \mathfrak{s}_n)$ be a concrete state in $\mdp$. 
The interpretation of state $\state$, $\totint_\state$, distinguishes ordinal and categorical features: }
If $\statevar_i$ is ordinal, then we define
\textcolor{violet}{$\boolvarval_{i,k} =  1$ if $\mathfrak{s}_i \geq \statevarval_{i,k}$ and $\boolvarval_{i, k}=0$ otherwise. That means that }
$\boolvarval_{i,k}$ is interpreted as a predicate $\statevar_i \geq \statevarval_{i,k}$. 
Note that binary variables $\statevar_i$ with domain $\{0,1\}$ are covered by our definition of ordinal features.
In this case, we simplify notation and write $\statevar_i$ instead of $\statevar_i \geq 1$.
If $\statevar_i$ is categorical, then \textcolor{violet}{
$\boolvarval_{i,k}=1$ iff $\mathfrak{s}_i = \statevarval_{i,k}$,} i.e., $\boolvarval_{i,k}$ is interpreted as a predicate $\statevar_i = \statevarval_{i,k}$.
Given an assignment $\partint\in\partintset(\boolvar_\mdp)$ and a state $\state$, we write $\partint(\state) = 1$ if $\totint_\state$ 
\textcolor{violet}{is in the semantics of $\partint$}, and otherwise $\partint(\state) = 0$.

\begin{example}
    \textcolor{violet}{
    In our taxi MDP running example, all state variables are ordinal. 
    We focus here on the battery: if $\battery = 3$, that implies $\battery \geq 2$, $\battery \geq 1$, and $\battery \geq 0$. 
    In our definition of taxi, the domain of this variable is $\battery \in [5]$.
    Therefore, we get five boolean variables: $\boolvarval_{\battery, k}, k \in [1..5]$. 
    For states in which $\battery=3$, these are assigned as follows: $\boolvarval_{\battery, k} = 1, k \in \{1, 2, 3\}$ and $\boolvarval_{\battery, k} = 0, k \in \{4, 5\}$.
    }
\end{example}

\subsection{One-shot Causal Partition} 

We propose a partition technique that is conceptually simple yet powerful.
For an MDP $\mdp$ and a set of predicate assignments $C \subseteq \partintset(\boolvar_\mdp)$, our partition groups states that are in the semantics of the same subset of assignments.
We refer to the partition algorithm as $\predpart$, as it 
is well-defined independent of the method used to generate $C$.
As we use causal predicates, and the input is exactly one set of~predicates~$C$, we call partitions generated using $\predpart$ \emph{one-shot causal partitions}, abbreviated to \texttt{C-1} in the evaluation. We define: 
\[
    \predpart(\mdp,C) = 
    \{
        \{ \state' \in \states \mid \forall \partint \in C \colon \partint (\state') = \partint(\state) \} \mid \state \in \states
    \}
\]

\begin{example}
    \cref{fig:taxi-partition} shows a predicate partition of the taxi MDP, in three dimensions ($x$, $y$, and $\battery$) as the passenger is causally irrelevant.
    Circles represent states in the original MDP, and colors indicate partition subsets.
    For subset 0, the predicate is $(\battery\geq1) \land (x \geq 2) \land (y \geq 2)$, which covers all states where $x$ and $y$ are $2$, and the battery level is $\geq 1$.

    \begin{figure}
        \centering
        \includegraphics[width=.5\linewidth]{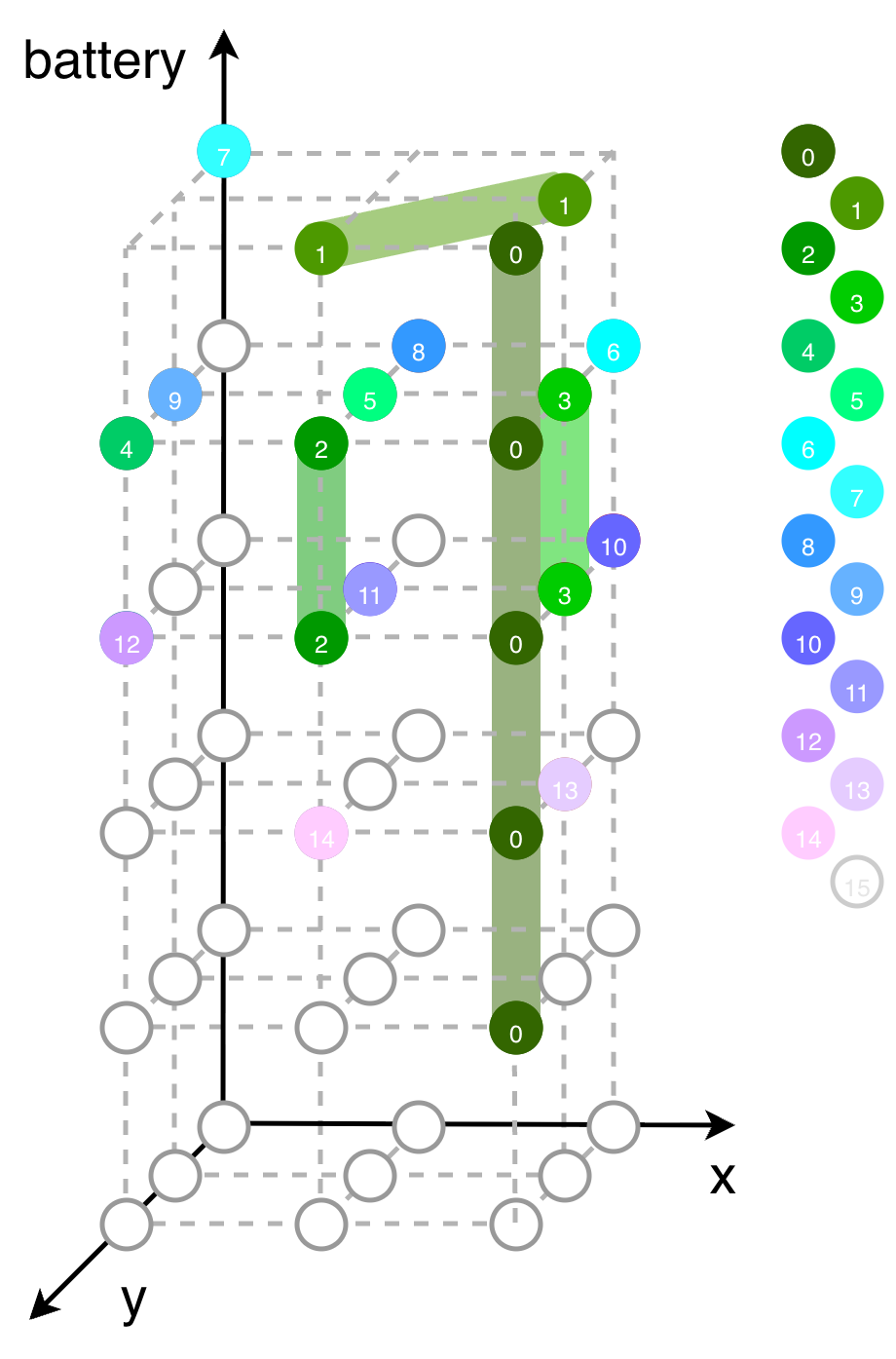}
        \caption{Predicate partition of the taxi running example.}
        \label{fig:taxi-partition}
    \end{figure}
\end{example}

When \textcolor{violet}{we use the partition} as input for aggregations (\cref{sec:4-title}), 
\textcolor{violet}{its size }determines the size of the abstract model.
There are at most $2^{\abs{C}}$ groups in a $\predpart$ partition, if the semantics of every combination of assignments are not empty. 
If $C$ is a minimal cause-effect cover, the size of the partition is minimal and still fully covers $\Effect$.
That means when $\predpart$ is used with a minimal cause-effect cover, it is optimal in size and thus serves as a theoretical milestone of what causal partitions can achieve in terms of size.
It is, however, expensive to compute, since it requires solving the model for the $\Effect$ set and computing a minimal cover of $\Effect$.
We formalize the suggested minimality characteristic in \cref{thm:3-min-cov}.

\begin{remark}
    \label{thm:3-min-cov}
    Given an MDP $\mdp$ and sets $\Reach$ and $\Effect$,
    let $C$ be a minimal cover of $\Effect$ w.r.t. $\Reach$.
    Then, $\predpart(\mdp, C)$ is the coarsest (i.e., lowest cardinality) causal partition induced by $\Reach$ and $\Effect$.
    This follows directly from the definition of minimal covers.
\end{remark}

\para{Choosing assignment sets.}
The $\Reach$ and $\Effect$ sets determine the state space partition and are 
built according to the given property and objective, such as the minimal reachability probability.
The property induces a value for each state, and we use different \emph{thresholds} on those values to partition states into the $\Reach$ and $\Effect$ sets.
We consider percentile and relative thresholds.
For percentile thresholds, $\varepsilon$ fixes the percentage of states to include. 
For example, if $\varepsilon=0.1$, and the 10\% of states with lowest value have $\val(\state) \leq 0.25$ and the highest 10\% have $\val(\state) \geq 0.93$, then we use those values as thresholds $\tau_0$ and $\tau_1$.
Relative thresholds depend not only on $\varepsilon$, but also the value of a state of interest $\state_0$, e.g.\ an initial state.
Then, the relative threshold is $\tau = \val(\state_0) \cdot (1 \pm \varepsilon)$.
For example, if $\val(\state_0)=0.5$ and $\varepsilon=0.1$, we get lower and upper relative thresholds $\tau_{0}=0.45$ and $\tau_{1}=0.55$.

Next to the thresholds, we also consider different ways to use them to construct $\Reach$ and $\Effect$.
For $\Reach$, we can either use all reachable states as $\Reach = \{\totint_\state \in \totintset(\boolvar_\mdp) \mid \state\in\states: \state \text{ is reachable}\}$ ($\Reach$ 0 in \cref{fig:3-valid_effect}), or exclude states with values larger ($\Reach$ 2) or smaller ($\Reach$ 1) than $\tau$.
Further, we formalize the $\Effect$ set using the predicates in $\boolvar_\mdp$: Given an interval $I\subseteq [0,1]$, $\Effect(I)=\{\totint_\state \in \totintset(\boolvar_\mdp)\mid \val(\state)\in I\}$.
\cref{fig:3-valid_effect} shows different effect sets for intervals under two thresholds $\tau_0$ and $\tau_1$, e.g., $\Effect$ 4 is provided by $I_4=[0,\tau_1)$.

\begin{example}
    For an MDP with a reachability property and maximizing objective, $\Reach$ 1 in \cref{fig:3-valid_effect} excludes the \enquote{worst}, and $\Reach$ 2 the \enquote{best} states.
    $\Effect$ 0 and 4 capture \enquote{bad} states. 
    The meaning of $\Effect$ 2 depends on $\Reach$.
    \textcolor{violet}{
    For the specific example in \cref{fig:taxi-partition}, we consider a \enquote{good} $\Effect$ set with $\Reach$ 0 and $\Effect$ 3.
    }
\end{example}

\begin{figure}
    \centering
    \includegraphics[width=\linewidth]{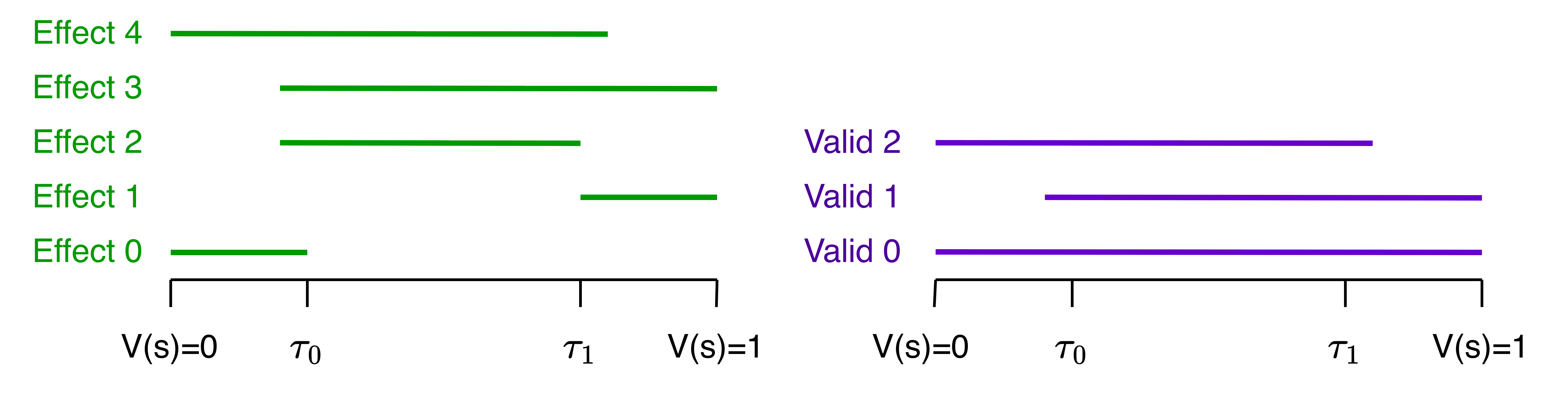}
    \caption{Different settings of $\Effect$ and $\Reach$ for $\predpart$.}
    \label{fig:3-valid_effect}
\end{figure}

\subsection{Iterative Causal Partition}

Abstractions built from $\predpart$ tend to be small and to fixate on states with extreme values, motivating more elaborate approaches.
We propose to iteratively consider finer slices of the state space as $\Effect$, compute causes for each, and build a partition from the causal slices, as shown in \cref{fig:3-iterative-partition}. 
\textcolor{violet}{This is done by solving multiple reachability queries, where the previous iteration's $\Effect$ set becomes the next target set. }
Formally, for $k$ iterations, with a threshold $\tau$ and starting from states of value 1, the initial sets are $\Effect_0 = \{\state \in \states \mid \val(\state) = 1\}$ and $\Reach_0 = \states$.
Then, for the following iterations:

\vspace{-12pt}
\begin{align*}
    \Effect_i &= \{\state \in \states \mid \reachprob_s(\lozenge \Effect_{i-1}) > \tau\}, i \in [1, k] \\
    \Reach_i &= \Reach_{i-1} \backslash \Effect_{i-1}
\end{align*}

\begin{figure}
    \centering
	\scalebox{0.8}{

\begin{tikzpicture}
  \def\W{8}    
  \def\H{4}    
  \def\cx{7.3} 
  \def\cy{0.7} 

  \fill[green!20] (0,0) rectangle (\W,\H);

  \begin{scope}
    \clip (0,0) rectangle (\W,\H);
    \fill[green!50] (\cx,\cy) circle (7.2);
    \fill[yellow!60] (\cx,\cy) circle (5);
    \fill[orange!50] (\cx,\cy) circle (3);
    \fill[red!45] (\cx,\cy) circle (1);

    \draw[thick] (\cx,\cy) circle (7.2);
    \draw[thick] (\cx,\cy) circle (5);
    \draw[thick] (\cx,\cy) circle (3);
    \draw[thick] (\cx,\cy) circle (1);
  \end{scope}
  
  \draw[thick] (0,0) rectangle (\W,\H);

  \node at (1,-0.5)  {$E_k$};
  \node at (2,-0.5)  {$\ldots$};
  \node at (3,-0.5)  {$E_2$};
  \node at (5,-0.5)  {$E_1$};
  \node at (7,-0.5)  {$E_0$};

  \fill (0.5, 0.5) circle (2pt);
  \draw[-{Stealth[length=6pt]}, thick]
    (-.4, 0.5) -- (0.5, 0.5);

  \fill (4.5, 3) circle (2pt);
  \fill (6, 2.5) circle (2pt);
  \draw[-{Stealth[length=6pt]}, thick]
    (4.5, 3) to [bend left = 40] (6, 2.5);
  \draw[-{Stealth[length=6pt]}, thick]
    (6, 2.5) to [bend left = 40] (7.3, 1.25);

  \fill (5.5, 2) circle (2pt);
  \draw[-{Stealth[length=6pt]}, thick]
    (5.5, 2) to [bend left = 10] (7, 0.75);
    
   \fill (3.5, 2) circle (2pt);
  \draw[-{Stealth[length=6pt]}, thick]
    (3.5, 2) to [bend right = 10] (5, 1.75);
    
    \fill (5.25, 0.6) circle (2pt);
  \draw[-{Stealth[length=6pt]}, thick]
    (5.25, 0.6) to [bend right = 10] (6.7, 0.6);
    
   \fill (3.5, 1) circle (2pt);
  \draw[-{Stealth[length=6pt]}, thick]
    (3.5, 1) to [bend right = 10] (5.5, 1);

\end{tikzpicture}}
    \caption{Visualization of the intuition for $\mathrm{IterPredPart}$.}
    \label{fig:3-iterative-partition}
\end{figure}

From the iterative $\Reach$ and $\Effect$ sets and the MDP, we obtain a list of sets of causes.
We use this list in \cref{alg:iterative_partition} to construct the iterative partition, called \texttt{C-IT}.
For each slice, it generates a partial partition like $\predpart$, retaining states outside $\Effect$ for future iterations.
The resulting partition is more fine-grained than $\predpart$.

\begin{algorithm}
    \begin{algorithmic}
        \Procedure{Get \texttt{C-IT}}{MDP $\mdp$, cause list $\mathrm{CS}$}
            \State $\partition = \emptyset$
            \State $\states' = \states$

            \ForAll{$C$ in $\mathrm{CS}$}
                \State $\partition = \partition \cup \{ \{\state' \in \states' \mid \forall \delta \in C \colon \delta(\state') = \delta(\state) \land \exists \delta \in C \colon \delta(\state') = \mathrm{true} \} \mid \state \in \states'\}$
                \State $\states' = \states' \backslash \{\state \in \states' \mid \exists \delta \in C \colon \delta(\state) = \mathrm{true}\}$
            \EndFor

            \If{$\abs{\states'} > 0$}
                $\partition = \partition \cup \{\states'\}$
            \EndIf
            
            \Return $\partition$
     \EndProcedure
    \end{algorithmic}
    \caption{Deriving the iterative causal partition.}
    \label{alg:iterative_partition}
\end{algorithm}

\para{Choosing iterative assignment sets.}
There are several ways to define iterations over $\Reach$ and $\Effect$ sets.
One option is to fix a threshold as for $\predpart$, and re-use it for future iterations on new $\Effect$ sets.
Alternatively, we predefine a desired number of $k$ iterations, adapt thresholds dynamically, and recursively split by the median value of remaining states.

\subsection{Causal Graph Partition}
Traditionally, causality captures dependencies between features, which are visualized in causal graphs as edges between causally dependent features.
These dependencies are usually assumed to be given or estimated from data using statistical causal discovery methods.
Here, we can omit those estimations since we have access to the full model.
Intuitively, a causal graph partition ignores features 
that do not cause any part of the effect.
Using this information, we build a causal graph and identify the indices of irrelevant features $I_\text{irr}\subseteq [1..n]$.
Then, $\cgpart$ groups states that only differ in irrelevant features:
\[
    \cgpart(\mdp,I_\text{irr}) = 
    \{
        \{ \state' \in \states \mid \forall i\in [1..n]\setminus I_\text{irr} \colon \state'_i = \state_i\} \mid \state \in \states
    \}
\]

Given the MDP $\mdp$ and $\Effect$, $\cgpart$ is always at least as large as a corresponding $\predpart$, formalized as follows.

\begin{theorem}\label{thm:cgLeqFC}
    Given an MDP $\mdp$ with sets $\Reach$ and $\Effect$, 
    a cause-effect cover $C$ of $\Effect$ w.r.t. $\Reach$, and $I_\text{irr}$ the irrelevant features given the causal graph for $\mdp$ and $\Effect$.
    Then, $\forall \states_{CG}\in\cgpart(\mdp,I_\text{irr}) \colon \exists \states_P\in\predpart(\mdp,C) \colon \states_{CG}\subseteq \states_P$.
    Thus, $\abs{\cgpart(\mdp,I_\text{irr})} \geq \abs{\predpart(\mdp,C)}$.
\end{theorem}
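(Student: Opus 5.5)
The plan is to show that $\cgpart(\mdp,I_\text{irr})$ refines $\predpart(\mdp,C)$. The cardinality bound then follows at once. The key requirement is an interpretation of ``irrelevant features'' that the causal graph guarantees. I will read it as follows: a state-variable index $i$ lies in $I_\text{irr}$ only if no cause $\featcause\in C$ contains a literal over any Boolean feature $\boolvarval_{i,k}$ derived from $\statevar_i$. In other words, the relevant variables are at least those mentioned by some cause in the cover. This is the natural definition, since the causal graph links to the effect exactly those features that occur in causes. It is also the step I expect to be delicate, because the paper defines the causal graph only informally. If the graph is built from all of $\Causes$ rather than from $C$, the property still holds because $C\subseteq\Causes$.

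\textbf{Step 1 (causes depend only on relevant variables).} Fix $\featcause\in C$ and two states $\state,\state'$ with $\state_i=\state'_i$ for all $i\in[1..n]\setminus I_\text{irr}$. By the definition of the encoding $\totint_\state$ in \cref{sec:3-MDP2partialassignment}, the truth value of each feature $\boolvarval_{i,k}$ in $\totint_\state$ is a function of the single coordinate $\state_i$. This holds both for the ordinal predicate $\statevar_i\geq\statevarval_{i,k}$ and for the categorical predicate $\statevar_i=\statevarval_{i,k}$. We have $\featcause(\state)=1$ iff $\featcause\subseteq\totint_\state$, and every literal of $\featcause$ concerns a feature of some relevant variable $i\notin I_\text{irr}$. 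Since $\state$ and $\state'$ agree on these coordinates, each such literal is satisfied by $\totint_\state$ iff it is satisfied by $\totint_{\state'}$. Hence $\featcause(\state)=\featcause(\state')$.

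\textbf{Step 2 (refinement).} Take any block $\states_{CG}\in\cgpart(\mdp,I_\text{irr})$, generated by some state $\state$. Every $\state'\in\states_{CG}$ agrees with $\state$ on all relevant coordinates, so Step 1 gives $\featcause(\state')=\featcause(\state)$ for all $\featcause\in C$. Therefore $\state'$ lies in the $\predpart$ block generated by $\state$, call it $\states_P$, and so $\states_{CG}\subseteq\states_P$.

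\textbf{Step 3 (cardinality).} Both $\cgpart$ and $\predpart$ are partitions of $\states$; in each case the blocks are equivalence classes of an equivalence relation. The map sending each $\states_{CG}$ to the unique $\states_P$ containing it is well defined, because the $\predpart$ blocks are disjoint. It is surjective, because every $\states_P$ is nonempty and is covered by the $\cgpart$ blocks of its elements. Hence $\abs{\cgpart(\mdp,I_\text{irr})}\geq\abs{\predpart(\mdp,C)}$. The cover property of $C$ is not actually needed; only the fact that its members mention relevant variables only is used.
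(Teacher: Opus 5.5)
Your proof is correct and follows the paper's argument. The appendix defines $I_\text{irr}$ exactly as the indices $i$ for which no $\featcause\in C$ contains a feature $\boolvarval_{i,k}$, concludes that states in one $\cgpart$ block agree on every cause in $C$ and therefore lie in a common $\predpart$ block, and then derives the cardinality bound. Your surjection argument in Step~3 is a tidier version of the paper's informal case distinction, and you are right that the cover property of $C$ plays no role.
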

\begin{proof}[Proof sketch]
    The first claim follows from \cref{def:featcause}: An irrelevant feature cannot appear in any cause.
    The second claim follows from the first. 
    \ifarxivelse{\cref{app:causality-proofs}}{\cite[Appendix A]{techreport}} provides the full proof.
\end{proof}

\begin{remark}[Inputs to the partition algorithms]
    \textcolor{violet}{
    Under the same hyperparameters, all three partitions are based on, or start from, the same set of input causes.
    Specifically, the set $C$ of predicate assignments used for the one-shot partition is the set of causes that is obtained automatically using one $\Effect$ set. 
    The same set $C$ is used to derive the irrelevant features for the causal-graph partition.
    Further, $C$ is the first iteration step for threshold-based iterative partitions.
    }
\end{remark}

\section{From Partition to Abstraction}\label{sec:4-title}

\begin{figure*}[t]
    \centering
    \tikzset{
    >=Stealth,
    state/.style={rectangle, rounded corners=7pt, draw, minimum width=18pt, minimum height=14pt, inner sep=2pt},
    action/.style={circle, fill=black, inner sep=1.2pt},
    adv/.style={rectangle, draw, fill=black!10, minimum size=6pt, inner sep=0pt},
    every label/.style={font=\small},
    edge/.style={->, thick},
    line/.style={thick},
}

\begin{subfigure}[b]{0.28\linewidth}
\centering
\begin{tikzpicture}[yscale=0.7]
  \node[state] (p)  at (1.8, 0.4) {$p$};
  \node[action] (pa) at (1.8,-0.5) {};
  \node[state] (q)  at (1.0,-1.5) {$q$};
  \node[state] (r)  at (2.6,-1.5) {$r$};
  \node[action] (qb) at (1.0,-2.7) {};
  \node[action] (rb) at (2.6,-2.7) {};
  \node[state] (s)  at (0.0,-3.9) {$s$};
  \node[state] (t)  at (1.8,-3.9) {$t$};
  \node[state] (u)  at (3.6,-3.9) {$u$};

  \draw[line] (p) -- (pa) node[midway,right] {$a$};
  \draw[edge] (pa) -- (q) node[midway,above,sloped,font=\scriptsize] {$0.5$};
  \draw[edge] (pa) -- (r) node[midway,above,sloped,font=\scriptsize] {$0.5$};
  \draw[line] (q) -- (qb) node[midway,left] {$b$};
  \draw[line] (r) -- (rb) node[midway,right] {$b$};
  \draw[edge] (qb) -- (s) node[midway,above,sloped,font=\scriptsize] {$0.1$};
  \draw[edge] (qb) -- (t) node[midway,above,sloped,font=\scriptsize] {$0.9$};
  \draw[edge] (rb) -- (t) node[midway,above,sloped,font=\scriptsize] {$0.2$};
  \draw[edge] (rb) -- (u) node[midway,above,sloped,font=\scriptsize] {$0.8$};
\end{tikzpicture}
\caption{MDP}
\label{fig:abstraction-example-mdp}
\end{subfigure}\hfill
\begin{subfigure}[b]{0.22\linewidth}
\centering
\begin{tikzpicture}[yscale=0.7]
  \node[state] (p)  at (1.0, 0.4) {$\{p\}$};
  \node[action] (pa) at (1.0,-0.5) {};
  \node[state,minimum width=28pt] (qr) at (1.0,-1.5) {$\{q,r\}$};
  \node[action] (qrb) at (1.0,-2.7) {};
  \node[state] (s)  at (0.0,-3.9) {$\{s\}$};
  \node[state,minimum width=28pt] (tu) at (2.0,-3.9) {$\{t,u\}$};

  \draw[line] (p) -- (pa) node[midway,right] {$a$};
  \draw[edge] (pa) -- (qr) node[midway,right,font=\scriptsize] {$1$};
  \draw[line] (qr) -- (qrb) node[midway,right] {$b$};
  \draw[edge] (qrb) -- (s) node[midway,above,sloped,font=\scriptsize] {$0.05$};
  \draw[edge] (qrb) -- (tu) node[midway,above,sloped,font=\scriptsize] {$0.95$};
\end{tikzpicture}
\caption{WA}
\label{fig:abstraction-example-wa}
\end{subfigure}\hfill
\begin{subfigure}[b]{0.22\linewidth}
\centering
\begin{tikzpicture}[yscale=0.7]
  \node[state] (p)  at (1.0, 0.4) {$\{p\}$};
  \node[action] (pa) at (1.0,-0.5) {};
  \node[state,minimum width=28pt] (qr) at (1.0,-1.5) {$\{q,r\}$};
  \node[action] (qrb) at (1.0,-2.7) {};
  \node[state] (s)  at (0.0,-3.9) {$\{s\}$};
  \node[state,minimum width=28pt] (tu) at (2.0,-3.9) {$\{t,u\}$};

  \draw[line] (p) -- (pa) node[midway,right] {$a$};
  \draw[edge] (pa) -- (qr) node[midway,right,font=\scriptsize] {$[1,1]$};
  \draw[line] (qr) -- (qrb) node[midway,right] {$b$};
  \draw[edge] (qrb) -- (s) node[midway,above,sloped,font=\scriptsize] {$[0,0.1]$};
  \draw[edge] (qrb) -- (tu) node[midway,above,sloped,font=\scriptsize] {$[0.9,1]$};
\end{tikzpicture}
\caption{IMDP}
\label{fig:abstraction-example-imdp}
\end{subfigure}\hfill
\begin{subfigure}[b]{0.22\linewidth}
\centering
\begin{tikzpicture}[yscale=0.7]
  \node[state] (p)  at (1.0, 0.4) {$\{p\}$};
  \node[adv] (adv0)  at (1.0, -0.2) {};
  \node[action] (pa) at (1.0,-0.7) {};
  \node[state,minimum width=28pt] (qr) at (1.0,-1.5) {$\{q,r\}$};
  \node[adv] (adv1) at (0.5,-2.3) {};
  \node[adv] (adv2) at (1.5,-2.3) {};
  \node[action] (b1) at (0.3,-3.1) {};
  \node[action] (b2) at (1.7,-3.1) {};
  \node[state] (s)  at (0.0,-3.9) {$\{s\}$};
  \node[state,minimum width=28pt] (tu) at (2.0,-3.9) {$\{t,u\}$};

  \draw[line] (p) -- (adv0);
  \draw[line] (adv0) -- (pa) node[midway,right] {$a$};
  \draw[edge] (pa) -- (qr) node[midway,right,font=\scriptsize] {$1$};
  \draw[line] (qr) -- (adv1);
  \draw[line] (qr) -- (adv2);
  \draw[line] (adv1) -- (b1) node[midway,left,font=\small] {$b$};
  \draw[line] (adv2) -- (b2) node[midway,right,font=\small] {$b$};
  \draw[edge] (b1) -- (s) node[midway,above,sloped,font=\scriptsize] {$0.1$};
  \draw[edge] (b1) -- (tu) node[midway,above,sloped,font=\scriptsize] {$0.9$};
  \draw[edge] (b2) -- (tu) node[midway,right,font=\scriptsize] {$1$};
\end{tikzpicture}
\caption{SG}
\label{fig:abstraction-example-sg}
\end{subfigure}
    \caption{Example MDP and three possible abstractions after grouping states $\{q,r\}$ and $\{t,u\}$:
    (a) Original MDP; (b) weighted average (WA) MDP abstraction; (c) IMDP abstraction; (d) SG abstraction.}
    \label{fig:4-abstraction-example}
\end{figure*}

Given an MDP and a state space partition, we construct an abstract model via an \emph{aggregation method}.
This method maps each subset of the partition to a single abstract state and induces an abstract transition function that approximates the original.
\cref{sec:4-defs} provides three such methods: the weighted average method (WA) averages transition probabilities of aggregated states (\cref{def:wa_abstraction}); the IMDP method represents all possible transition probabilities as an interval (\cref{def:imdp_abstraction}); and the SG method lets an opponent player choose an original state from an abstract one (\cref{def:sg_abstraction}).
\cref{sec:4-discussion} discusses (dis-)advantages of these methods.

\subsection{Aggregation Methods}\label{sec:4-defs}

\begin{example}\label{ex:4-abstractions}

    We start with an example to provide the intuition for the three aggregation methods and showcase their differences.
    Consider the MDP in \cref{fig:abstraction-example-mdp} and a partition that groups $\{q,r\}$ and $\{t,u\}$, but leaves all other states as singletons.
    In all three abstractions, this partition forms the abstract state space,
    and successor distributions are aggregated by summing across concrete states in an abstract state.

    The \emph{WA abstraction} (\cref{fig:abstraction-example-wa}) obtains transition probabilities from an abstract state by \emph{averaging} the probabilities of the concrete states.
    E.g., $\abstransition(\{q,r\}, b)(\{s\}) = \frac 1 2 \cdot 0.1 + \frac 1 2 \cdot 0 = 0.05$.

    The \emph{IMDP abstraction} (\cref{fig:abstraction-example-imdp}) obtains transition probabilities via the minimum and maximum of the probabilities of its concrete states.
    When taking action $b$ in $\{q,r\}$, the probability for $\{s\}$ is in $[0,0.1]$
    and for $\{t,u\}$ is in $[0.9,0.2+0.8] = [0.9,1]$.
    As we prove in \cref{sec:4-discussion}, the best- and worst-case value of the IMDP always contains the original MDP's value.

    The \emph{SG abstraction} (\cref{fig:abstraction-example-sg}) frames the abstract model as a 2-player game. 
    Abstract states belong to the opponent and offer a choice of multiple successor states (gray squares).
    Each of these states corresponds to a set of original states that have equivalent outgoing distributions for each action. 
    Here, states $q$ and $r$ behave differently,
    resulting in two choices in $\{q,r\}$.
\end{example}

\para{Action- and target-consistent partition.}
Before formally defining the aggregation methods, we provide two notions that simplify the definitions.
\emph{Action-consistency} requires that all states $\state$ in an abstract state $\absstate$ share the same set of available actions.
This set then constitutes the set of available abstract actions $\absactions(\absstate)$, i.e., $\actions(\state) = \absactions(\absstate)$.
This simplifies transferring policies between abstraction and original model.
The assumption is easily satisfied by refining any given partition $\partition$ to be action-consistent by splitting groups of states into new sets that agree on their available actions.
Formally, for every subset $\absstate\in\partition$ and original state $\state\in\absstate$, the refined partition contains the set $\{\state' \in \absstate \mid \actions(\state') = \actions(\state)\}$.
Related work often assumes that all actions are available in all states~\cite{DBLP:conf/hybrid/JacksonLFL21,simao2021alwayssafe}.

Similarly, we require \emph{target-consistency}, i.e., no subset of the partition contains both target and non-target states. 
While it is possible to soundly define the abstractions without this requirement, target-consistency simplifies the definitions (an abstract state $\absstate$ is a target iff $\absstate\subseteq\targetset$). 
Target-consistency is implicitly ensured in~\cite{simao2021alwayssafe} and explicitly required in~\cite{kattenbelt2010gamebasedabstraction,DBLP:conf/hybrid/JacksonLFL21}.

\begin{definition}[Weighted Average (WA) Abstraction] 
    \label{def:wa_abstraction}
    For 
    a $\partition$ and MDP $\fullmdp$, the WA abstraction MDP is $\absmdp = (\absstates,\absactions,\abstransition)$ with:
    $\absstates = \partition$, 
    $\absactions=\actions$ and for all $\state\in\absstate$, $\actions(\absstate)=\actions(\state)$.
    For $\absstate,\absstate'\in\absstates$ and $\action\in\absactions(\absstate)$, the transition function is $\abstransition(\absstate, \action,\absstate') = \sum_{\state \in \absstate} \sum_{\state' \in \absstate'} \frac{1}{\abs{\absstate}}\transition(\state, \action, \state')$. 
\end{definition}

\begin{definition}[IMDP Abstraction]
    \label{def:imdp_abstraction}
    For $\partition$ and MDP $\fullmdp$, the IMDP abstraction is $\absimdp= (\absstates,\absactions,\itransitionlb,\itransitionub)$ with $\absstates$, and $\absactions$ as in \cref{def:wa_abstraction}.
    For $\absstate,\absstate'\in\absstates$ and $\action\in\absactions(\absstate)$, the lower and upper bounds on the transition probabilities are
    $\absitransitionlb(\absstate, a, \absstate') = \min_{\state \in \absstate} \sum_{\state' \in \absstate '} \transition(\state, a, \state')$ and 
    $\absitransitionub(\absstate, a, \absstate') = \max_{\state \in \absstate} \sum_{\state' \in \absstate '} \transition(\state, a, \state')$. 
\end{definition}

While the definitions of WA and IMDP abstraction follow naturally from \cref{ex:4-abstractions}, formally capturing the idea of the SG abstraction is more involved.
We introduce the following notation:
$\transition_\partition(\state) = \{(\action,\mathsf d) \mid \action \in \actions(\state), \mathsf d \in \distribution(\partition)\colon \mathsf d(\absstate) = \sum_{\state'\in\absstate} \transition(\state,\action,\state')\}$, i.e., $\transition_\partition(\state)$ is the set of all action-distribution pairs available in $\state$, where $\mathsf d$ is the transition function lifted to the partition. 
For example, in \cref{fig:abstraction-example-sg}, we have 
$\transition_\partition(\{q,r\}) = \{(b, [\{s\}\mapsto 0.1, \{t,u\}\mapsto 0.9]), (b,[\{t,u\}\mapsto1]\}$.
The subsets of the partition correspond to opponent states, where first, the opponent chooses an original state, and then the agent chooses an action (and a distribution over successor states). 
Thus, the agent states merge all original states that have the same action-distribution pairs.

\begin{definition}[SG Abstraction]\label{def:sg_abstraction}
    For $\partition$ and MDP $\fullmdp$, the abstract SG is $\abssg = (\absstates,\absstatesMax,\absstatesMin, \absactions, \abstransition)$ with
    $\absstatesMin = \partition$ and 
    $\absstatesMax = \{ \transition_\partition(\state) \mid \state \in \states \}$.
    $\absactions = \actions \cup \absstatesMax$, where for $\absstate\in\absstatesMin$, we have $\actions(\absstate) = \{ \transition_\partition(\state) \mid \state\in\absstate \}$, and picking the action surely leads to the selected successor, i.e., $\abstransition(\absstate, Y, Y) = 1$ for $Y\in\absstatesMax$.
    For $Y\in\absstatesMax$, recall that $Y$ is a set of action-distribution pairs $(\action,\mathsf d)$.
    Then, $\actions(Y) = \{\action \mid (\action,\mathsf d) \in Y\}$, and for $\absstate\in\absstatesMin$ we have $\abstransition(Y,\action,\absstate) = \mathsf d(\absstate)$. 
\end{definition}

\para{Definition Origins.}
The WA abstraction originates from~\cite{simao2021alwayssafe}.
We developed the IMDP abstraction independently, but it is effectively equivalent to transferring~\cite[Eqs. (5) and (6)]{DBLP:conf/hybrid/JacksonLFL21} to the discrete setting.
The SG abstraction adapts~\cite{kattenbelt2010gamebasedabstraction} to our notation, in particular adding action identifiers. 
The definitions do not require the given partition to be causal.

\subsection{Theoretical Comparison of Aggregation Methods}\label{sec:4-discussion}

We compare the aggregation methods with respect to  three measures of quality: 
(i) \emph{size}, (ii) \emph{value bounds}, i.e., how the value(s) of the abstraction relate to the original MDP's value, and (iii) \emph{policy performance}, i.e., how a worst-case optimal policy in the abstraction performs in the original MDP.

\para{Abstraction size.}
Effectively, all abstractions are of the same size with an abstract state for every subset of the partition. 
The SG abstraction contains the additional agent states, but they need not be stored explicitly, as there is a strict alternation between states of agent and opponent~\cite[Section 3]{kattenbelt2010gamebasedabstraction}.

\para{Value bounds.}
The WA abstraction does not provide any precision indication and the abstract MDP's value can be arbitrarily smaller or larger than the original MDP's. 
In contrast, the IMDP and SG abstraction yield an interval with a best- and worst-case value that is guaranteed to contain the original MDP's value.
The width of the interval is then a measure of their precision.
We summarize these insights in \cref{thm:4-abstraction-comparison}, where $\preceq$ indicates $\leq$ for $\max$ and $\geq$ for $\min$ objectives.
\begin{restatable}{theorem}{abstractioncomparison}\label{thm:4-abstraction-comparison}
    For MDP $\mdp$ and partition $\partition$, let $\absmdp$, $\absimdp$, and $\abssg$ be the abstract MDP, IMDP, and SG according to \cref{def:wa_abstraction,def:imdp_abstraction,def:sg_abstraction}.
    For every abstract state $\absstate\in\partition$ and original state $s\in\absstate$, the following hold:
    \begin{enumerate}
        \item \emph{WA no guarantee:}        
        It is possible that $\val_{\mdp}(\state) \preceq \val_{\absmdp}(\absstate)$ or that $\val_{\absmdp}(\absstate) \preceq \val_{\mdp}(\state)$.
        \label{thm:valueWA}
        \item \emph{IMDP guarantee:}
        $\wval_{\absimdp}(\absstate) \preceq \val_{\mdp}(\state) \preceq \bval_{\absimdp}(\absstate)$.
        \label{thm:valueIMDP}
        \item \emph{SG guarantee:}
        $\val_{\abssg}(\absstate) \preceq \val_{\mdp}(\state) \preceq \bval_{\abssg}(\absstate)$.
        \label{thm:valueSG}
    \end{enumerate}
\end{restatable}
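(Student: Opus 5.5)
I would treat the three items separately: item 1 needs counterexamples, items 2 and 3 are soundness arguments. Both soundness arguments go through the fixpoint (Bellman) characterisation of reachability values. I will argue for $\opt=\max$, so $\preceq$ is $\leq$; the $\min$ case is symmetric with the roles of $\opt$ and $\nopt$ swapped.

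\emph{Item~\ref{thm:valueWA}.} I would reuse the MDP of \cref{fig:abstraction-example-mdp}. Take $\targetset=\{s\}$, which is target-consistent with the partition there. In the original MDP, $\val_\mdp(q)=0.1$ and $\val_\mdp(r)=0$, while the WA abstraction gives $\val_{\absmdp}(\{q,r\})=0.05$. Then for $q$ the abstract value lies strictly below the concrete one, and for $r$ strictly above. This exhibits both strict orderings. Scaling the probabilities shows the gap can be made arbitrarily large, since the averaged value forgets which concrete state one is in.

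\emph{Item~\ref{thm:valueIMDP}.} I would define a function $g$ on abstract states by $g(\absstate)=\min_{\state\in\absstate}\val_\mdp(\state)$ and show that $g$ is a pre-fixpoint of the worst-case IMDP Bellman operator:
\[
g(\absstate)\ \geq\ \max_{a}\ \min_{\mathsf d\in\uncertaintyset(\absstate,a)} \sum_{\absstate'} \mathsf d(\absstate')\, g(\absstate').
\]
Here $\uncertaintyset(\absstate,a)$ is the set of distributions consistent with the bounds of \cref{def:imdp_abstraction}.

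To prove the inequality, fix $a$ and let $\state^\ast\in\absstate$ attain the minimum defining $g(\absstate)$. The lifted distribution $\mathsf d_{\state^\ast}(\absstate')=\sum_{\state'\in\absstate'}\transition(\state^\ast,a,\state')$ lies within the interval bounds by construction, so it is consistent. Since $g(\absstate')\leq\val_\mdp(\state')$ for every $\state'\in\absstate'$, we get
\[
\sum_{\absstate'}\mathsf d_{\state^\ast}(\absstate')\,g(\absstate') \ \leq\ \sum_{\state'}\transition(\state^\ast,a,\state')\,\val_\mdp(\state') \ \leq\ \val_\mdp(\state^\ast) = g(\absstate).
\]
Action-consistency guarantees that $a$ is available in $\state^\ast$, and target-consistency handles the target states.

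Since the worst-case IMDP value is the least fixpoint of this operator, one would like to conclude $\wval\leq g$. But the least-fixpoint argument gives only that the least fixpoint lies below every pre-fixpoint, which is the wrong direction here. I would therefore fall back on an explicit construction. Fix the opponent (nature) choice $\mathsf d_{\state^\ast_{\absstate,a}}$ for each $(\absstate,a)$; this yields a single consistent MDP $\mdp'$. Every abstract policy in $\mdp'$ is dominated by a concrete policy that always sits in the minimal-value state. This is the step I expect to be the main obstacle, because it needs a coupling or simulation argument, not just a fixpoint inequality: the chosen $\state^\ast$ changes from step to step. I would handle it by showing that the optimal abstract value in $\mdp'$ equals the value in $\mdp$ restricted to the representatives $\state^\ast_{\absstate,a}$. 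These representatives are compared via $\val_\mdp\geq g$ under value iteration from $0$, arguing by induction on the number of steps that each iterate of $\mdp'$ is at most $g$.

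The upper bound $\val_\mdp(\state)\leq\bval_{\absimdp}(\absstate)$ goes dually, with $h(\absstate)=\max_{\state\in\absstate}\val_\mdp(\state)$. Now the agent controls nature, and induction on value-iteration steps gives $\val^{(n)}_\mdp(\state)\leq\bval^{(n)}(\absstate)$ directly: for the action optimal at $\state$, nature can pick $\state$'s own lifted distribution, and the successor values are bounded by the induction hypothesis.

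\emph{Item~\ref{thm:valueSG}.} For the SG I would use the same step-indexed induction, which is cleaner here because the opponent explicitly picks a concrete state's action–distribution set $\transition_\partition(\state)$. For the lower bound, the claim is $\val^{(n)}_{\abssg}(\absstate)\leq\val^{(n)}_\mdp(\state)$ for all $\state\in\absstate$: the opponent may pick $\transition_\partition(\state)$, and then every agent action gives a successor value at most that of the corresponding concrete transition. For the upper bound, when both players cooperate, choosing the optimal concrete action at $\state$ matches the concrete step. Limits of value iteration then give both inequalities.
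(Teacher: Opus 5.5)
Your argument is sound for $\opt=\max$, and for item~2 it takes a leaner route than the paper.

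\textbf{Comparison with the paper.} The paper proves each IMDP inequality by induction on step-bounded values (e.g.\ $\wval^{\le k}(\absstate)\le\min_{\state\in\absstate}\val^{\le k}(\state)$). It feeds in \cref{thm:exist_interval_policy}, for which it builds the extremal greedy distribution (successors sorted by $f$, upper bounds filled first) and pushes it through a long chain of inequalities. You replace all of this by one observation: every concrete lifted distribution $\mathsf d_{\state}(\absstate')=\sum_{\state'\in\absstate'}\transition(\state,a,\state')$ lies in $\uncertaintyset(\absstate,a)$ by the very definition of $\absitransitionlb,\absitransitionub$. This gives the lemma's inequality $\min_{\mathsf d\in\uncertaintyset(\absstate,a)}\sum\mathsf d f\le\min_{\state\in\absstate}\sum\mathsf d_{\state} f$ in one line. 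It also avoids the paper's intermediate bound $\sum_{\absq}\absitransitionlb(\absq)f(\absq)$, which cannot dominate an expectation under a distribution in general (take $f\equiv 1$ with lower bounds summing to less than $1$). Fixing one consistent MDP $\mdp'$ further spares you from relating the static $\nopt_{\mdp\in\absimdp}$ semantics to the robust Bellman fixpoint.

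For item~3 you give a self-contained step-indexed proof, whereas the paper matches \cref{def:sg_abstraction} to Kattenbelt et al.\ and cites their Theorem~1.

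Your item~1 example is valid but weaker than the paper's. The value $0.05$ lies strictly between $\val_{\mdp}(r)=0$ and $\val_{\mdp}(q)=0.1$, a discrepancy any single number must show once a block mixes values. The paper's $\absmdp_1$ instead puts the WA value below \emph{every} concrete value of its block, and $\absmdp_2$ reaches gap $1$ at $s_3$.

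\textbf{A slip in the worst-case argument.} Your dismissal of the fixpoint argument is mistaken. For the worst-case operator $\Phi$, $\Phi g\le g$ together with Knaster--Tarski gives $\mu\Phi\le g$. That is exactly $\wval_{\absimdp}(\absstate)\le\min_{\state\in\absstate}\val_{\mdp}(\state)$, the direction you need (given $\wval=\mu\Phi$, which holds because interval uncertainty is chosen independently per state--action pair). Your fallback is just the Kleene form of the same argument, $x_{k+1}=\Phi_{\mdp'}x_k\le\Phi_{\mdp'}g\le g$, and needs no coupling. The claim that $\val_{\mdp'}(\absstate)$ \emph{equals} the concrete value of the representative is false in general: $\mdp'$ continues from each successor block with that block's minimal representative, not the actual successor. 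It is also unnecessary, since only $\le$ is used.

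\textbf{A gap in the claimed symmetry for $\opt=\min$.} Here the direction genuinely matters. A $g$ with $\Phi g\le g$ bounds a least fixpoint from above, but nothing analogous bounds it from below. For $\opt=\min$, the inequality $\val_{\mdp}(\state)\le\wval_{\absimdp}(\absstate)$ is a lower bound on an abstract least fixpoint.

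The literal dual of your worst-case argument fixes nature to a maximal-value representative and claims the iterates stay $\ge\max_{\state\in\absstate}\val_{\mdp}(\state)$. This already fails at iteration $0$, and the fixed MDP can have strictly smaller value. Take one block $\{s_1,s_2\}$ with a single action, $s_1\to s_2$ and $s_2\to$ target. Both concrete values are $1$, yet fixing $s_1$'s lifted distribution gives an abstract self-loop of value $0$.

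Prove this bound the way you prove your best-case bound. Show $\val^{(n)}_{\mdp}(\state)\le\wval^{(n)}(\absstate)$ by induction, using that for each action nature may pick $\mathsf d_{\state}$. With the induction hypothesis this gives $\max_{\mathsf d\in\uncertaintyset(\absstate,a)}\sum\mathsf d\,\wval^{(n)}\ge\sum_{\state'}\transition(\state,a,\state')\val^{(n)}_{\mdp}(\state')$. Alternatively, use Park induction on the concrete side with $G(\state)=\wval(\absstate)$. The other $\opt=\min$ bound and both SG bounds do dualize as you claim.
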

\begin{proof}
    The full proof is in \ifarxivelse{\cref{app:abstraction-proofs}}{\cite[Appendix B]{techreport}}.
    There, for the WA abstraction, we provide \textcolor{violet}{concrete counterexamples where reachability probabilities in abstract states are strictly smaller or larger than in the original MDP. }
    For the IMDP abstraction, we \textcolor{violet}{first show that there exist valid probability distributions giving upper and lower bounds on the values of the original MDP that are consistent with the IMDP. We then} use an inductive argument \textcolor{violet}{to prove the relevant inequalities.} 
    For the SG abstraction, the proof from~\cite[Theorem 1]{kattenbelt2010gamebasedabstraction} still applies to our definition.
\end{proof}

From a practical perspective, the value problem for all three models used as abstractions
can be solved efficiently with dynamic programming:
\emph{value iteration}~\cite{chatterjee2008valueiteration,Condon92} for MDPs or SGs;
and \emph{robust value iteration}~\cite{iyengar2005robust} for IMDPs.
Despite the differences in the underlying models,
the efficiency and scalability of these solution methods are relatively similar.

\para{Policy performance.}
An abstract policy can be applied in the original MDP as follows:
For a policy $\abspolicy$ in the abstract MDP or IMDP, we obtain a policy $\policy$ for the original MDP by setting $\policy(\state) = \abspolicy(\absstate)$ for every original state $\state$, where $\absstate$ is the unique state with $\state\in\absstate$.
For a policy in the abstract SG, every agent state corresponds to a set $\transition_\partition(\state)$. 
Thus, we can transfer an abstract agent-policy in the SG to the MDP by selecting the action chosen in the unique abstract agent state corresponding to the original $\state$.
As the abstractions differ widely in how they aggregate information, the policy performance on the original MDP can also differ depending on the MDP's structure and the given partition, as empirically demonstrated in \cref{sec:5-title}.

\section{Experimental Evaluation}\label{sec:5-title}

Our evaluation focusses on the following questions:
\begin{enumerate}[label=\textbf{(RQ\arabic*)}, leftmargin=*]
    \item How does the property affect performance?
    \item How do different causal partitions compare?
    \item How do different aggregation methods compare?
    \item Do the causal partitions generalize to larger models?
\end{enumerate}
\subsection{Setup}

\begin{figure*}[]
    \centering
    \includegraphics[width=0.3\linewidth]{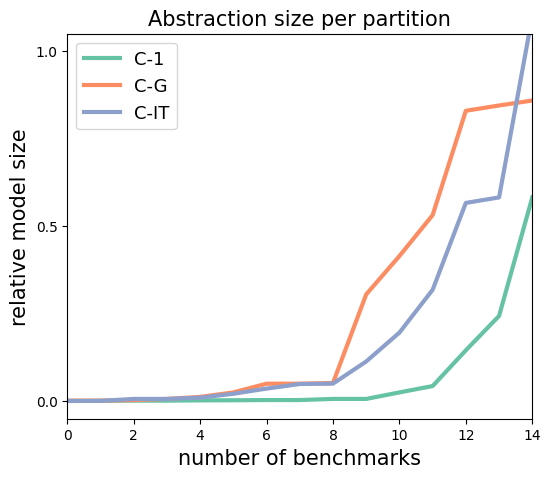} 
    \includegraphics[width=0.3\linewidth]{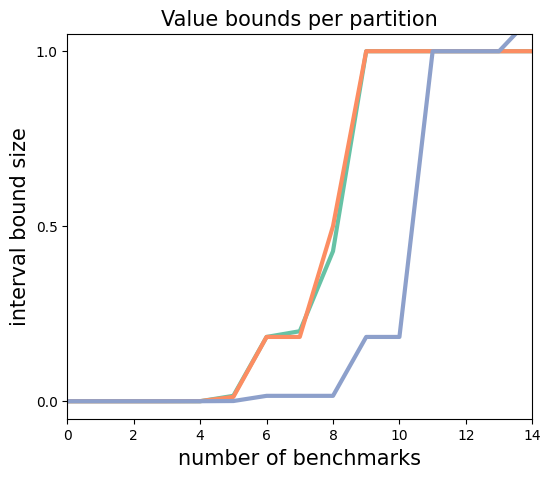}
    \includegraphics[width=0.3\linewidth]{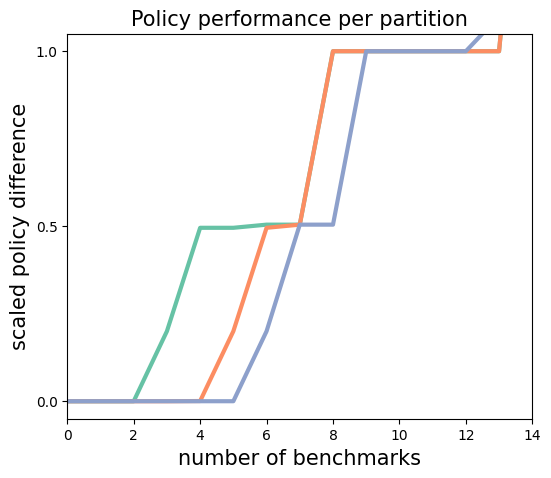}
    \caption{Performance of different causal partitions across benchmarks.}
    \label{fig:partition-performance-modified}
\end{figure*}

\para{Implementation.}
Our implementation builds on the model checkers \texttt{PRISM}~\cite{DBLP:conf/cav/KwiatkowskaNP11} and \texttt{Storm}~\cite{hensel2022storm}.
Specifically, we use version 1.11.1 of stormpy, the Python bindings of \texttt{Storm} for the general framework, and adapt \texttt{PRISM} for computing values of SGs and IMDPs.
The overall experimental pipeline is run on a 2023 MacBook Pro with Apple M3 Pro Chip and 18GiB RAM.
Cause computations were carried out with an end-to-end symbolic implementation~\cite{LieDub26} based on binary decision diagrams, and run on a workstation with Debian 12 Bookworm with AMD Ryzen Threadripper PRO 5965WX 24-Cores CPU and 8x64GiB Samsung DDR4-3200 RAM.
The most expensive step of feature cause calculation uses a timeout of up to 12 hours, see \ifarxivelse{\cref{app:timeout}}{\cite[Appendix C]{techreport}} for details.
\textcolor{violet}{Our implementation is available on Zenodo \cite{artifact}}.

\para{Benchmarks.}
We use the MDPs collected in \cite{practitionersJournalPreprint} that are available in PRISM format and have unbounded reachability or safety properties.
We exclude models where $\reachprob_{\min}{=}1$ or $\reachprob_{\max}{=}0$, as well as highly synthetic verification benchmarks with low inherent meaning in their state features. 
We configure the benchmark parameters to state spaces in the order of $10^4$ (small), $10^5$ (medium), and $10^6$ (large).
A full list is in \ifarxivelse{\cref{tab:app_benchmarks}}{\cite[Appendix C]{techreport}}.

\para{Algorithms.}
We consider all variants of $\Effect$ and $\Reach$ sets introduced in \cref{sec:3-title}, as well as the three partition types, namely causal graph (\texttt{C-G}), one-shot (\texttt{C-1}), and iterative (\texttt{C-IT}).
We combine these with the three aggregation methods \texttt{WA}, \texttt{IMDP}, and \texttt{SG}, resulting in names such as \texttt{C-IT-SG}.

\begin{remark}
    \textcolor{violet}
    {Our empirical evaluation focuses on reachability properties with minimizing and maximizing objectives. 
    The former is also called \emph{safety}, with the intuition that the probability of reaching a safety-critical state is minimized.
    Further, our definitions immediately includes reach-avoid properties, which we also analyse empirically. 
    Extending our work to reward-based properties would require only minor changes to definitions and proof statements, but does not affect the derivation of causes. 
    We exclude them, however, since checking reward-based properties on non-graph-preserving interval models is currently not supported by PRISM or Storm.}
\end{remark}

\para{Metrics.}
We consider three metrics: (1) the size of the abstraction relative to the original model size, (2) the difference between the worst and best case value for the IMDP and SG aggregation method, and (3) the relative difference between the optimal policy in the MDP and the worst case abstract policy, scaled by the difference between minimal and maximal reachability probability \ifarxivelse{(see \cref{app:exp-metrics})}{(see \cite[Appendix C]{techreport}}.
All metrics range from 0 to 1, and small values are preferable.

\begin{figure*}
    \centering
    \includegraphics[width=0.9\linewidth]{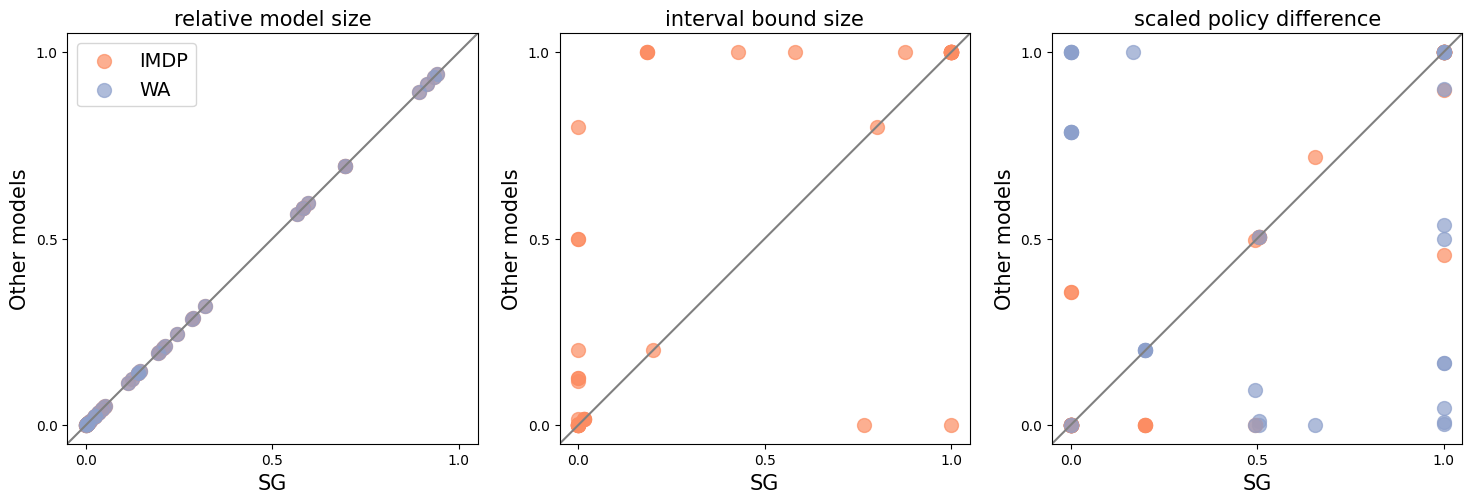}
    \caption{Comparing the performance of different abstraction methods.}
    \label{fig:abstraction-method-all}
\end{figure*}

\subsection*{\textbf{(RQ1)} How does the property affect performance?}
The $\Effect$ set can vary depending on interpretation (see \cref{fig:3-valid_effect}) and value threshold. 
We group the interpretations into those that capture \enquote{good} and \enquote{bad} effects. 
For maximizing properties, 3 and 1 capture \enquote{good} effects whereas 2 and 4 capture \enquote{bad} effects.
Moreover, for the \texttt{C-IT} partition, we can pre-determine a desired number of iterations. 
We analyse configurations using the \texttt{SG} method on small models.

\para{For the \texttt{C-1} partition, results are similar across hyperparameters.} 
The detailed analysis in \ifarxivelse{\cref{app:experiment-plots}, \cref{tab:oneshot-thresholds}}{\cite[Appendix C]{techreport}}, shows that the relative threshold with $\epsilon=0.1$ achieves the best policy performance across benchmarks.
We deem this the most important metric for the quality of our abstraction.
For $\Effect$ choices, interpretations that find causes for large parts of the state space (3 and 4 in \cref{fig:3-valid_effect}) lead to the best abstractions.

\para{For the \texttt{C-IT} partition, we receive better abstractions with fixed numbers of iterations.}
Like with \texttt{C-1}, we find that results are very similar across thresholds, and also when using 6 or 10 fixed iterations.
\ifarxivelse{\cref{fig:iterative-hyperparameters} and \cref{fig:iterative-best-parameters} in \cref{app:experiment-plots} include}{\cite[Appendix C]{techreport} includes} detailed analysis.
Between fixed thresholds or numbers of iterations, we observe that fixing the number of iterations leads to slightly larger abstractions, but with highly improved interval bounds and policy performance. 

\para{We use our findings to fix hyperparameters for the subsequent experiments.}
\textcolor{violet}{Overall we find that while some hyper-parameter values have little influence, such as the concrete threshold values, others matter greatly. In particular, we find that the semantics of the $\Effect$ set are decisive, and we observe the best performance with $\Effect$ sets covering a large part of the \enquote{good} states. Based on these findings, f}or the \texttt{C-1} partition, we continue with relative $\epsilon=0.1$ and use the $\Effect$ sets numbered 3 and 4 in \cref{fig:3-valid_effect}.
For the \texttt{C-IT} partition, 
\textcolor{violet}{we again observe that concrete threshold values have little impact, but fixing a larger number of iteration yields better performance. Therefore, }
we iterate 6 times, since that leads to the best ratio of performance and computational effort.
In both cases, we observe that \enquote{good} effect sets achieve better abstractions than \enquote{bad}
effects and include visualizations in the \ifarxivelse{\cref{app:exp_details} in \cref{fig:hyperparameters-good-bad}}{\cite[Appendix C]{techreport}}.
To still cover both interpretations, we continue with the hyperparameters on both types of effects.

\begin{figure*}
    \centering
    \includegraphics[width=0.9\linewidth]{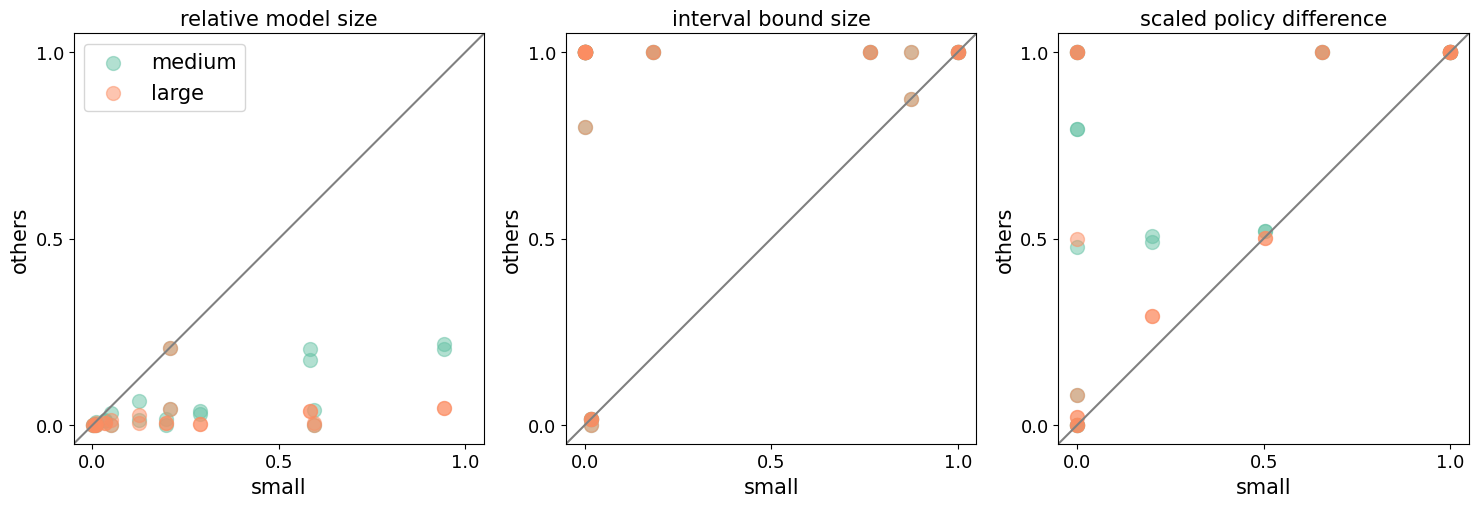}
    \caption{Generalizing causes from the small models to larger model sizes.}
    \label{fig:generalization-small-causes}
\end{figure*}

\subsection*{\textbf{(RQ2)} How do different causal partitions compare?}

We compare the \texttt{C-G}, \texttt{C-IT}, and \texttt{C-1} partitions with the previously fixed hyperparameters using the \texttt{SG} abstraction. 
The \texttt{C-G} partition uses the same causes as \texttt{C-1}.

\para{The \texttt{C-G} partition does not modify the model in 38\% of our benchmarks.}
In these cases, no state variables can be completely removed since all are causally relevant to the property in some states and the abstract and original model are the same. 
We exclude those benchmarks from further evaluation but include details in \ifarxivelse{\cref{app:exp_details}, \cref{fig:partition-performance-all}}{\cite[Appendix C]{techreport}}.

\para{\texttt{C-IT} partitions produce better abstractions than \texttt{C-1} and \texttt{C-G}.}
In terms of model size, the \texttt{C-1} partition produces the smallest abstract models on most benchmarks, retaining less than 20\% of the original states on 13 out of 14 benchmarks.
\texttt{C-G} returns the largest sized abstractions on most benchmarks.
However, all three partitions are able to significantly reduce model sizes across benchmarks.
On the qualitative metrics however, \texttt{C-IT}-based abstractions achieve the smallest value bounds and best policy performance, with small bounds on 10 out of 14 benchmarks and good policies on 5 out of 14 benchmarks.
These results are visualized in \cref{fig:partition-performance-modified}.
\textcolor{violet}{We find that \texttt{C-G}, which is usually employed by related works, is consistently outperformed. \texttt{C-IT} performs best, as it captures the most fine-grained interplay of causes. 
It is, however, expensive to derive.
\texttt{C-1}, with lower computational cost and often reasonable performance marks a middle ground.}

\subsection*{\textbf{(RQ3)} How do different aggregation methods compare?} 

We compare the performance of the aggregation methods in \cref{fig:abstraction-method-all}, with more details in \ifarxivelse{\cref{app:exp_details} (\cref{fig:5-abstractions-quantile,fig:5-abstractions-good-bad})}{\cite[Appendix C]{techreport}}.
Our empirical results confirm \cref{thm:4-abstraction-comparison}, i.e., the interval for the abstract IMDP and SG always contain the original value.

\para{\texttt{SG} abstractions perform best, followed by \texttt{IMDP} and \texttt{WA}.}
Since all abstractions are built from the same partition, abstract model size is always the same.
The \texttt{SG} abstractions mostly achieve smaller interval value bounds than the \texttt{IMDP} ones, often even achieving an interval size of 0.
The policy performance however differs more strongly between the abstraction methods.
\texttt{WA} abstractions often achieve a policy difference close to 0, but can also result in arbitrarily wrong policies without any guarantees.
Between \texttt{IMDP}s and \texttt{SG}s, the results are often similar.
As in the previous experiments, we observe that all methods perform better on effects that are interpreted as \enquote{good} than those that are \enquote{bad}\ifarxivelse{ (see \cref{app:experiment-plots})}{}.
\textcolor{violet}{Our findings here show a trade-off between performance and guarantees versus computational cost. 
Particularly, the \texttt{SG} aggregations offers guarantees and the best performance, as it tightly preserves uncertainty. 
This however comes at increased computational cost, compared to \texttt{WA}. 
The latter still often performs well and is easy to solve, but can be arbitrarily wrong, as the averaging may neglect specific behaviors.}

\subsection*{\textbf{(RQ4)} Do the causal partitions generalize to larger models?}

Using causality in MDPs and RL is often motivated by the potential of causal relations to generalize to scaled-up models. 
We test the potential of our causes to generalize to medium and large sized models \ifarxivelse{(see \cref{tab:app_benchmarks})}.
We apply causes retrieved on small models to larger settings, and report the results in \cref{fig:generalization-small-causes}.
Additionally, \ifarxivelse{\cref{app:exp_details}}{\cite[Appendix C]{techreport}} compares with medium-sized causes on the medium-sized models.

\para{The causes generalize well in terms of size but at reduced performance.}
The small causes decrease the relative size of larger models to less than 20\%, even in cases where the small abstraction maintained more than 60\% of the original size.
However, the policy difference often becomes worse, sometimes degrading to the worst possible policy.
Still, on several models, the abstract policy attains a policy difference close to 0 on the large models, indicating optimal performance.

\para{Abstract policies perform well on large models.}
The policy performance for large models is often closer to the original model performance than for medium abstractions, even if both are derived from small causes.
This is interesting, as obtaining causes for larger models is computationally more expensive.
Our findings demonstrate the potential of our approach and provide an avenue to efficiently apply property-driven causal abstractions even on large models, provided they are parameterized to allow for small variants.

\para{Causes fitted to the model size increase performance, but are computationally expensive.}
As we show in \ifarxivelse{\cref{app:exp_details} in \cref{fig:generalization-small-medium-causes} and \cref{fig:generalization-medium-models}}{\cite[Appendix C]{techreport}}, those abstractions built from medium-sized causes usually receive tighter value bounds and better policy performance.
The calculation of most general causes however becomes more computationally expensive on larger models, with several experimental configurations timing out even after 12 hours, as documented in \ifarxivelse{\cref{tab:app-fc-timeouts}}{\cite[Appendix C]{techreport}}. 
\textcolor{violet}{Our straight-forward approach for generalizing causes between models shows our framework's potential to scale to models of larger sizes without further increase in computational demands.}

\subsection{General findings and practical implications}

\textcolor{violet}{In our empirical evaluation, we observe an overall mixed performance across benchmarks, and across combinations of partitions and aggregation methods. 
We now provide further insights on our key takeaways from this evaluation.}

\textcolor{violet}{Concerning benchmarks, we find that our abstractions are effective when whole state variables or large parts of their domain are irrelevant to the property of interest.
In these cases, our abstractions become very small in size while maintaining good policy performance.
On the other hand, handcrafted and highly artificial models where state variables do not carry much structural meaning, and where the outcome is solely dependent on a single state, do not produce effective abstractions. 
These benchmarks can show extreme differences in values between states, leading to a singular cause, and an abstraction consisting of only two abstract states.}

\textcolor{violet}{
Regarding the practical implications to draw from our experiments, in particular, we find a trade-off between computational expenses and the informativeness of partitions and therein the preciseness of abstractions. Specifically, we find that a combination of C-IT-SG is the most reliable yet most expensive choice. C-1-WA on the other hand offers a lightweight heuristic without guarantees on abstraction quality, that nonetheless often works well in practice. However, combinations of C-1 or C-IT with the SG aggregation method is preferable when guarantees matter.}

\section{Conclusion}
We formalize feature causality in MDPs to derive several types of property-driven predicate-based causal abstractions.
We combine our causal state space partitions with three aggregation methods and analyze their theoretical (dis-)advantages.
Empirical evaluation shows that our proposed methods can significantly reduce model size while maintaining good policy performance on several standard benchmarks.
Further, the derived feature causes show potential to generalize to larger model sizes.
A relevant limitation is that our approach requires to exactly analyze the full model to obtain the causal information.
However, our aim is to gain a fundamental understanding of how causality can be applied to obtain abstractions that are capable of retaining relevant parts of the model.
Future work aims to tackle this limitation by, among others, approximating the causal predicates from data or learned models.

\section*{Acknowledgments}

This work was funded by the ERC Starting Grant 101077178 (DEUCE) and supported by the DFG project TRR 248 (see https://perspicuous-computing.science, project ID 389792660), and the NWO Veni grant VI.Veni.222.431.

\bibliographystyle{ieeetr}
\bibliography{references}

@article{LieDub26,
	author = {Liem, Edward and Dubslaff, Clemens},
	howpublished = {Technical Report},
	journal = {CoRR},
	title = {Implicit Computation of Filtered Prime Implicants},
	volume = {abs/2607.26787},
	year = {2026}}

@article{techreport,
	author = {Schmidt, Jule and Weininger, Maximilian and Dubslaff, Clemens and Parker, Dave and Jansen, Nils},
	journal = {CoRR},
	title = {Property-driven Causal Abstractions for Markov Decision Processes},
	volume = {abs/2607.26787},
	year = {2026}}

@inproceedings{lally2026robustcounterfactualinferencemarkov,
	author = {Lally, Jessica and Kazemi, Milad and Paoletti, Nicola},
	booktitle = {AAMAS},
	note = {to appear},
	title = {Robust Counterfactual Inference in {M}arkov Decision Processes},
	year = {2026}}

@inproceedings{PraChoTap24,
	author = {Pranger, Stefan and Chockler, Hana and Tappler, Martin and K\"{o}nighofer, Bettina},
	booktitle = {Advances in Neural Information Processing Systems},
	doi = {10.52202/079017-0881},
	editor = {Globerson, A. and Mackey, L. and Belgrave, D. and Fan, A. and Paquet, U. and Tomczak, J. and Zhang, C.},
	pages = {28103--28126},
	publisher = {Curran Associates, Inc.},
	title = {Test Where Decisions Matter: Importance-driven Testing for Deep Reinforcement Learning},
	url = {https://proceedings.neurips.cc/paper_files/paper/2024/file/317ccced29ed464df181c781cb436180-Paper-Conference.pdf},
	volume = {37},
	year = {2024}}

@inproceedings{GraSai97,
	author = {Graf, Susanne and Sa{\"{\i}}di, Hassen},
	booktitle = {{CAV}},
	pages = {72--83},
	publisher = {Springer},
	series = {Lecture Notes in Computer Science},
	title = {Construction of Abstract State Graphs with {PVS}},
	volume = {1254},
	year = {1997}}

@article{Condon92,
	author = {Condon, Anne},
	doi = {10.1016/0890-5401(92)90048-K},
	journal = {Inf. Comput.},
	number = {2},
	pages = {203--224},
	title = {The Complexity of Stochastic Games},
	url = {https://doi.org/10.1016/0890-5401(92)90048-K},
	volume = {96},
	year = {1992}}

@inproceedings{DBLP:conf/hybrid/JacksonLFL21,
	author = {Jackson, John and Laurenti, Luca and Frew, Eric W. and Lahijanian, Morteza},
	booktitle = {{HSCC}},
	pages = {6:1--6:11},
	publisher = {{ACM}},
	title = {Strategy synthesis for partially-known switched stochastic systems},
	url = {https://doi.org/10.1145/3447928.3456649},
	year = {2021}}

@inproceedings{BatJunKam20,
	author = {Batz, Kevin and Junges, Sebastian and Kaminski, Benjamin Lucien and Katoen, Joost{-}Pieter and Matheja, Christoph and Schr{\"{o}}er, Philipp},
	booktitle = {{CAV} {(2)}},
	pages = {512--538},
	publisher = {Springer},
	series = {Lecture Notes in Computer Science},
	title = {PrIC3: Property Directed Reachability for MDPs},
	volume = {12225},
	year = {2020}}

@article{LarSko91,
	author = {Larsen, Kim G. and Skou, Arne},
	doi = {https://doi.org/10.1016/0890-5401(91)90030-6},
	issn = {0890-5401},
	journal = {Information and Computation},
	number = {1},
	pages = {1-28},
	title = {Bisimulation through probabilistic testing},
	url = {https://www.sciencedirect.com/science/article/pii/0890540191900306},
	volume = {94},
	year = {1991}}

@inproceedings{FerPanPre04,
	author = {Ferns, Norm and Panangaden, Prakash and Precup, Doina},
	booktitle = {{UAI}},
	pages = {162--169},
	publisher = {{AUAI} Press},
	title = {Metrics for Finite Markov Decision Processes},
	year = {2004}}

@article{GivDeaGre03,
	author = {Givan, Robert and Dean, Thomas and Greig, Matthew},
	doi = {https://doi.org/10.1016/S0004-3702(02)00376-4},
	issn = {0004-3702},
	journal = {Artificial Intelligence},
	note = {Planning with Uncertainty and Incomplete Information},
	number = {1},
	pages = {163-223},
	title = {Equivalence notions and model minimization in Markov decision processes},
	url = {https://www.sciencedirect.com/science/article/pii/S0004370202003764},
	volume = {147},
	year = {2003}}

@inproceedings{RuaComPan15,
	author = {Ruan, Sherry Shanshan and Comanici, Gheorghe and Panangaden, Prakash and Precup, Doina},
	booktitle = {{AAAI}},
	pages = {3578--3584},
	publisher = {{AAAI} Press},
	title = {Representation Discovery for MDPs Using Bisimulation Metrics},
	year = {2015}}

@inproceedings{DimFinkbeinerTorfah-probHyper-MDP-ATVA2020,
	author = {Dimitrova, Rayna and Finkbeiner, Bernd and Torfah, Hazem},
	booktitle = {{ATVA}},
	pages = {484--500},
	publisher = {Springer},
	series = {Lecture Notes in Computer Science},
	title = {Probabilistic Hyperproperties of Markov Decision Processes},
	volume = {12302},
	year = {2020}}

@inproceedings{abraham2018hyperpctl,
	author = {{\'{A}}brah{\'{a}}m, Erika and Bonakdarpour, Borzoo},
	booktitle = {{QEST}},
	pages = {20--35},
	publisher = {Springer},
	series = {Lecture Notes in Computer Science},
	title = {HyperPCTL: {A} Temporal Logic for Probabilistic Hyperproperties},
	volume = {11024},
	year = {2018}}

@book{Eells91,
	author = {Eells, Ellery},
	collection = {Cambridge Studies in Probability, Induction and Decision Theory},
	place = {Cambridge},
	publisher = {Cambridge University Press},
	series = {Cambridge Studies in Probability, Induction and Decision Theory},
	title = {{Probabilistic Causality}},
	year = {1991}}

@book{Peters2017,
	address = {Cambridge, MA, USA},
	author = {Peters, Jonas and Janzing, Dominik and Sch\"olkopf, Bernhard},
	publisher = {MIT Press},
	title = {Elements of Causal Inference: Foundations and Learning Algorithms},
	year = {2017}}

@inproceedings{DubWeiBai22,
	author = {Dubslaff, Clemens and Weis, Kallistos and Baier, Christel and Apel, Sven},
	booktitle = {{ICSE}},
	pages = {325--337},
	publisher = {{ACM}},
	title = {Causality in Configurable Software Systems},
	year = {2022}}

@article{GorRub22,
	author = {Gorji, Niku and Rubin, Sasha},
	booktitle = {{AAAI}},
	pages = {5660--5667},
	publisher = {{AAAI} Press},
	title = {Sufficient Reasons for Classifier Decisions in the Presence of Domain Constraints},
	year = {2022}}

@inproceedings{BaiDubFun21,
	author = {Baier, Christel and Dubslaff, Clemens and Funke, Florian and Jantsch, Simon and Majumdar, Rupak and Piribauer, Jakob and Ziemek, Robin},
	booktitle = {{ICALP}},
	pages = {1:1--1:20},
	publisher = {Schloss Dagstuhl - Leibniz-Zentrum f{\"{u}}r Informatik},
	series = {LIPIcs},
	title = {From Verification to Causality-Based Explications (Invited Talk)},
	volume = {198},
	year = {2021}}

@article{Kazemi_Lally_Paoletti_2025,
	author = {Kazemi, Milad and Lally, Jessica and Paoletti, Nicola},
	doi = {10.1017/cbp.2025.2},
	journal = {Research Directions: Cyber-Physical Systems},
	pages = {e3},
	title = {Causal temporal reasoning for Markov decision processes},
	volume = {3},
	year = {2025}}

@book{puterman,
	author = {Puterman, Martin L.},
	doi = {10.1002/9780470316887},
	isbn = {978-0-47161977-2},
	publisher = {Wiley},
	series = {Wiley Series in Probability and Statistics},
	title = {{M}arkov Decision Processes: Discrete Stochastic Dynamic Programming},
	year = {1994}}

@book{BK08,
	author = {Baier, Christel and Katoen, Joost-Pieter},
	isbn = {978-0-262-02649-9},
	publisher = {{MIT} Press},
	title = {Principles of model checking},
	url = {https://mitpress.mit.edu/books/principles-model-checking},
	year = {2008}}

@article{DBLP:journals/ai/GivanLD00,
	author = {Givan, Robert and Leach, Sonia M. and Dean, Thomas L.},
	doi = {10.1016/S0004-3702(00)00047-3},
	journal = {Artif. Intell.},
	number = {1-2},
	pages = {71--109},
	title = {Bounded-parameter {M}arkov decision processes},
	url = {https://doi.org/10.1016/S0004-3702(00)00047-3},
	volume = {122},
	year = {2000}}

@book{halpern2016actualcausality,
	author = {Halpern, Joseph Y.},
	doi = {10.7551/mitpress/10809.001.0001},
	eprint = {https://direct.mit.edu/book-pdf/2262849/book\_9780262336611.pdf},
	isbn = {9780262336611},
	month = {08},
	publisher = {The MIT Press},
	title = {Actual Causality},
	url = {https://doi.org/10.7551/mitpress/10809.001.0001},
	year = {2016}}

@inproceedings{halpern2015causalitymodified,
	author = {Halpern, Joseph Y.},
	booktitle = {{IJCAI}},
	pages = {3022--3033},
	publisher = {{AAAI} Press},
	title = {A Modification of the Halpern-Pearl Definition of Causality},
	year = {2015}}

@article{dubslaff2024featurecausality,
	author = {Dubslaff, Clemens and Weis, Kallistos and Baier, Christel and Apel, Sven},
	journal = {J. Syst. Softw.},
	pages = {111915},
	title = {Feature causality},
	volume = {209},
	year = {2024}}

@inproceedings{li2006stateabstractionmdp,
	author = {Li, Lihong and Walsh, Thomas J. and Littman, Michael L.},
	booktitle = {AI{\&}M},
	title = {Towards a Unified Theory of State Abstraction for {MDP}s},
	year = {2006}}

@article{zeng2023surveycrl,
	author = {Zeng, Yan and Cai, Ruichu and Sun, Fuchun and Huang, Libo and Hao, Zhifeng},
	journal = {CoRR},
	title = {A Survey on Causal Reinforcement Learning},
	volume = {abs/2302.05209},
	year = {2023}}

@inproceedings{wang2024causalstateabstractions,
	author = {Wang, Zizhao and Wang, Caroline and Xiao, Xuesu and Zhu, Yuke and Stone, Peter},
	booktitle = {{AAAI}},
	pages = {15778--15786},
	publisher = {{AAAI} Press},
	title = {Building Minimal and Reusable Causal State Abstractions for Reinforcement Learning},
	year = {2024}}

@misc{deng2023crlsurvey,
	archiveprefix = {arXiv},
	author = {Deng, Zhihong and Jiang, Jing and Long, Guodong and Zhang, Chengqi},
	eprint = {2307.01452},
	primaryclass = {cs.LG},
	title = {Causal Reinforcement Learning: A Survey},
	url = {https://arxiv.org/abs/2307.01452},
	year = {2023}}

@inproceedings{simao2021alwayssafe,
	author = {Sim{\~{a}}o, Thiago D. and Jansen, Nils and Spaan, Matthijs T. J.},
	booktitle = {{AAMAS}},
	pages = {1226--1235},
	publisher = {{ACM}},
	title = {AlwaysSafe: Reinforcement Learning without Safety Constraint Violations during Training},
	year = {2021}}

@inproceedings{wang2022causaldynamicsabstraction,
	author = {Wang, Zizhao and Xiao, Xuesu and Xu, Zifan and Zhu, Yuke and Stone, Peter},
	booktitle = {{ICML}},
	pages = {23151--23180},
	publisher = {{PMLR}},
	series = {Proceedings of Machine Learning Research},
	title = {Causal Dynamics Learning for Task-Independent State Abstraction},
	volume = {162},
	year = {2022}}

@article{kattenbelt2010gamebasedabstraction,
	author = {Kattenbelt, Mark and Kwiatkowska, Marta Z. and Norman, Gethin and Parker, David},
	journal = {Formal Methods Syst. Des.},
	number = {3},
	pages = {246--280},
	title = {A game-based abstraction-refinement framework for Markov decision processes},
	volume = {36},
	year = {2010}}

@inproceedings{chatterjee2008valueiteration,
	author = {Chatterjee, Krishnendu and Henzinger, Thomas A.},
	booktitle = {25 Years of Model Checking},
	pages = {107--138},
	publisher = {Springer},
	series = {Lecture Notes in Computer Science},
	title = {Value Iteration},
	volume = {5000},
	year = {2008}}

@article{hensel2022storm,
	author = {Hensel, Christian and Junges, Sebastian and Katoen, Joost{-}Pieter and Quatmann, Tim and Volk, Matthias},
	journal = {Int. J. Softw. Tools Technol. Transf.},
	number = {4},
	pages = {589--610},
	title = {The probabilistic model checker Storm},
	volume = {24},
	year = {2022}}

@inproceedings{zhang2020invariantcausalblockmdp,
	author = {Zhang, Amy and Lyle, Clare and Sodhani, Shagun and Filos, Angelos and Kwiatkowska, Marta and Pineau, Joelle and Gal, Yarin and Precup, Doina},
	booktitle = {{ICML}},
	pages = {11214--11224},
	publisher = {{PMLR}},
	series = {Proceedings of Machine Learning Research},
	title = {Invariant Causal Prediction for Block MDPs},
	volume = {119},
	year = {2020}}

@article{ziemek2022probcausesmc,
	author = {Ziemek, Robin and Piribauer, Jakob and Funke, Florian and Jantsch, Simon and Baier, Christel},
	journal = {Innov. Syst. Softw. Eng.},
	number = {3},
	pages = {347--367},
	title = {Probabilistic causes in Markov chains},
	volume = {18},
	year = {2022}}

@article{sun2022swartd,
	author = {Sun, Hao},
	journal = {CoRR},
	title = {Toward Causal-Aware {RL:} State-Wise Action-Refined Temporal Difference},
	volume = {abs/2201.00354},
	year = {2022}}

@inproceedings{bareinboim2015banditscausal,
	author = {Bareinboim, Elias and Forney, Andrew and Pearl, Judea},
	booktitle = {{NIPS}},
	pages = {1342--1350},
	title = {Bandits with Unobserved Confounders: {A} Causal Approach},
	year = {2015}}

@article{gasse2021crlobsintdata,
	author = {Gasse, Maxime and Grasset, Damien and Gaudron, Guillaume and Oudeyer, Pierre{-}Yves},
	journal = {CoRR},
	title = {Causal Reinforcement Learning using Observational and Interventional Data},
	volume = {abs/2106.14421},
	year = {2021}}

@inproceedings{oura2025prcausalityuncertainmdp,
	author = {Oura, Ryohei and Ito, Yuji},
	booktitle = {{UAI}},
	pages = {3300--3321},
	publisher = {{PMLR}},
	series = {Proceedings of Machine Learning Research},
	title = {Probability-Raising Causality for Uncertain Parametric Markov Decision Processes with {PAC} Guarantees},
	volume = {286},
	year = {2025}}

@article{mendezmolina2020causalqlearning,
	author = {M{\'{e}}ndez{-}Molina, Arqu{\'{\i}}mides and Feliciano{-}Avelino, Ivan and Morales, Eduardo F. and Sucar, Luis Enrique},
	journal = {Res. Comput. Sci.},
	number = {3},
	pages = {95--104},
	title = {Causal Based Q-Learning},
	volume = {149},
	year = {2020}}

@book{pearl2009causalitybook,
	author = {Pearl, Judea},
	edition = {2},
	place = {Cambridge},
	publisher = {Cambridge University Press},
	title = {Causality},
	year = {2009}}

@article{baier2024foundationsprcausality,
	author = {Baier, Christel and Piribauer, Jakob and Ziemek, Robin},
	journal = {Log. Methods Comput. Sci.},
	number = {1},
	title = {Foundations of probability-raising causality in Markov decision processes},
	volume = {20},
	year = {2024}}

@inproceedings{DBLP:conf/cav/KwiatkowskaNP11,
	author = {Kwiatkowska, Marta Z. and Norman, Gethin and Parker, David},
	booktitle = {{CAV}},
	doi = {10.1007/978-3-642-22110-1\_47},
	pages = {585--591},
	publisher = {Springer},
	series = {Lecture Notes in Computer Science},
	title = {{PRISM} 4.0: Verification of Probabilistic Real-Time Systems},
	url = {https://doi.org/10.1007/978-3-642-22110-1\_47},
	volume = {6806},
	year = {2011}}

@article{practitionersJournalPreprint,
	author = {Hartmanns, Arnd and Junges, Sebastian and Quatmann, Tim and Weininger, Maximilian},
	doi = {10.1007/s10009-026-00848-y},
	journal = {Int J Softw Tools Technol Transfer},
	title = {The Revised Practitioner's Guide to {MDP} Model Checking Algorithms},
	year = {2026}}

@inproceedings{DBLP:conf/aaai/StrehlDL07,
	author = {Strehl, Alexander L. and Diuk, Carlos and Littman, Michael L.},
	booktitle = {{AAAI}},
	pages = {645--650},
	publisher = {{AAAI} Press},
	title = {Efficient Structure Learning in Factored-State MDPs},
	year = {2007}}

@article{iyengar2005robust,
	author = {Iyengar, Garud N},
	journal = {Mathematics of Operations Research},
	pages = {257--280},
	publisher = {INFORMS},
	title = {Robust dynamic programming},
	volume = {30(2)},
	year = {2005}}

@inproceedings{DBLP:conf/birthday/SuilenBB0025,
	author = {Suilen, Marnix and Badings, Thom and Bovy, Eline M. and Parker, David and Jansen, Nils},
	booktitle = {Principles of Verification {(3)}},
	pages = {126--154},
	publisher = {Springer},
	series = {Lecture Notes in Computer Science},
	title = {Robust Markov Decision Processes: {A} Place Where {AI} and Formal Methods Meet},
	volume = {15262},
	year = {2024}}

@article{DBLP:journals/sttt/BadingsSSJ23,
	author = {Badings, Thom and Sim{\~{a}}o, Thiago D. and Suilen, Marnix and Jansen, Nils},
	journal = {Int. J. Softw. Tools Technol. Transf.},
	number = {3},
	pages = {375--391},
	title = {Decision-making under uncertainty: beyond probabilities},
	volume = {25},
	year = {2023}}

@software{artifact,
  author       = {Schmidt, Jule and
                  Weininger, Maximilian and
                  Dubslaff, Clemens and
                  Parker, David and
                  Jansen, Nils},
  title        = {Property-Driven Causal Abstractions for {M}arkov
                   Decision Processes ({FMCAD} 2026 Artifact)
                  },
  year         = 2026,
  publisher    = {Zenodo},
  doi          = {10.5281/zenodo.21825827},
  url          = {https://doi.org/10.5281/zenodo.21825827},
  note         = {https://doi.org/10.5281/zenodo.21825827}
}

\clearpage
\appendices
\crefalias{section}{appendix} 
\crefalias{subsection}{appendix} 
\crefalias{subsubsection}{appendix}
\ifarxivelse{\section{Proofs for \cref{sec:3-title}}
\label{app:causality-proofs}

\subsection{Proof of \cref{thm:cgLeqFC}}

Intuitively, \cref{thm:cgLeqFC} claims that for a given MDP $\mdp$ with $\Reach$, $\Effect$, and a cause-effect cover $C$, the cardinality of $\predpart(\mdp, C)$ is always smaller than or equal to the cardinality of $\cgpart(\mdp, I_{\text{irr}})$.

\begin{proof}
    The second claim follows from the first.

    Formally, the set of irrelevant features is
    \[
        I_{\text{irr}} = \{i \colon \nexists \featcause \in C \colon \boolvarval_{i, k} \subset \gamma, k \in [0..\abs{\statevar_i}]\mid i \in [1..n]\}
    \]

    \begin{enumerate}
        \item Let $i \in I_{\text{irr}}$ be the index of an irrelevant feature. 
        Then, by definition, $\boolvarval_{i,k}\not\subset \featcause, k \in [1..\abs{\statevar_i}, \featcause \in C]$.
        Otherwise, it would violate \cref{def:featcause}, \cref{def:fc2}.
        Let $\states_{CG} \in \cgpart(\mdp, I_{\text{irr}})$.
        By definition, $\forall \state, \state' \in \states_{CG}$, the states only differ in irrelevant features, i.e., $\forall i \in [1..n] \backslash I_{\text{irr}}: \statevarval'_i = \statevarval_i$.
        Hence, $\featcause(\state) = \featcause(\state')$ forall $\featcause \in C$, i.e., $\state, \state'$ are in the same $\states_P \in \predpart(\mdp, C)$.
        Therefore, the first claim holds.
        \item If for all sets $\states_{CG}$ there is a set $\states_P$ such that $\states_{CG} = \states_P$, then $\abs{\cgpart(\mdp, I_{\text{irr}})} = \abs{\predpart(\mdp, C)}$.
        If some $\states_P\in\predpart(\mdp,C)$ is a superset of multiple $\states_{CG}\in\cgpart(\mdp,I_\text{irr})$, then $\abs{\cgpart(\mdp, I_{\text{irr}})} > \abs{\predpart(\mdp, C)}$.
    \end{enumerate}

\end{proof}

\section{Proofs for \cref{sec:4-title}}
\label{app:abstraction-proofs}

\subsection{Proof of \cref{thm:4-abstraction-comparison} - \cref{thm:valueWA}}

We show that the \texttt{WA} abstraction can lead to arbitrarily wrong values using the counter-examples in \cref{fig:app-ce-wa}.
In the MDP $\mdp$, the maximum probability to reach $s_2$ is $0.6$, by first playing $a$ and then $b$.
Grouping together the states $\{s_0,s_1\}$, we obtain the abstract MDP $\absmdp_1$. 
There, the maximum probability to reach $s_2$ is $1/2$ (by playing $b$), strictly smaller than in the original MDP.
Grouping together the states $\{s_0,s_1,s_3\}$, we obtain the abstract MDP $\absmdp_2$. 
There, the maximum probability to eventually reach $s_2$ is 1 (by forever playing $b$), strictly larger than in the original MDP.

\begin{figure*}
\tikzset{
    >=Stealth,
    state/.style={rectangle, rounded corners=7pt, draw, minimum width=18pt, minimum height=14pt, inner sep=2pt},
    action/.style={circle, fill=black, inner sep=1.2pt},
    adv/.style={rectangle, draw, fill=black!10, minimum size=6pt, inner sep=0pt},
    every label/.style={font=\small},
    edge/.style={->, thick},
    line/.style={thick},
}
    \centering
\begin{subfigure}[b]{0.32\linewidth}
\centering
\begin{tikzpicture}
    \node[state] (s0) at (0,0)  {$s_0$};
    \node[state] (s1) at (2,0)  {$s_1$};
    \node[state,double] (s2) at (4,0)  {$s_2$};
    \node[state] (s3) at (2,-1.5) {$s_3$};

    \node[action] (s0a) at (1,0) {};
    \draw[line] (s0) -- (s0a) node[midway, above] {$a$};
    \path[edge] (s0a) edge node[pos=0.5, right,  font=\scriptsize] {$0.4$} (s3);
    \draw[edge] (s0a) -- (s1)  node[midway, above, font=\scriptsize] {$0.6$};

    \draw[edge] (s0) -- (s3) node[midway, below] {$b$};

    \draw[edge] (s1) -- (s2)  node[midway, above] {$b$};

    \path[edge] (s1) edge[loop above] node[above] {$a$}   (s1);
    \path[edge] (s2) edge[loop above] node[above] {$a,b$} (s2);
    \path[edge] (s3) edge[loop below] node[below] {$a,b$} (s3);
\end{tikzpicture}
\caption{Original MDP $\mdp$.}
\end{subfigure}
\hfill
\begin{subfigure}[b]{0.32\linewidth}
\centering
\begin{tikzpicture}
    \node[state] (s01) at (0,0)   {$\{s_0,s_1\}$};
    \node[state,double] (s2)  at (3,0.8) {$\{s_2\}$};
    \node[state] (s3)  at (3,-0.8){$\{s_3\}$};

    \node[action] (a_b) at (1.5, 0) {};
    \draw[line] (s01) -- (a_b) node[midway, above] {$b$};
    \draw[edge] (a_b) -- (s2)  node[midway, above, font=\scriptsize] {$0.5$};
    \draw[edge] (a_b) -- (s3)  node[midway, above, font=\scriptsize] {$0.5$};

    \node[action] (a_a) at (1.5, -0.8) {};
    \draw[line] (s01) -- (a_a) node[midway, above] {$a$};
    \draw[edge] (a_a) -- (s3)  node[midway, above, font=\scriptsize] {$0.8$};
    \path[edge] (a_a) edge[bend left=50]
        node[pos=0.5, below, font=\scriptsize] {$0.2$} (s01);

    \path[edge] (s2) edge[loop right] node[right] {$a,b$} (s2);
    \path[edge] (s3) edge[loop right] node[right] {$a,b$} (s3);
    \path[edge] (s01) edge[loop above] node[above] {$a$}   (s01);
\end{tikzpicture}
\caption{Abstract MDP $\absmdp_1$, grouping $\{s_0, s_1\}$.}
\end{subfigure}
\hfill
\begin{subfigure}[b]{0.32\linewidth}
\centering
\begin{tikzpicture}
    \node[state] (s014) at (0,0)    {$\{s_0,s_1,s_3\}$};
    \node[state,double] (s3)   at (4,0) {$\{s_2\}$};

    \path[edge] (s014) edge[loop above] node[above] {$a$}   (s014);

    \node[action] (a_ab) at (2,0) {};
    \draw[line] (s014) -- (a_ab) node[midway, above] {$b$};
    \draw[edge] (a_ab) -- (s3)   node[midway, above, font=\scriptsize] {$1/3$};
    \path[edge] (a_ab) edge[bend left=50]
        node[pos=0.5, below, font=\scriptsize] {$2/3$} (s014);

    \path[edge] (s3) edge[loop below] node[below] {$a,b$} (s3);
\end{tikzpicture}
\caption{Abstract MDP $\absmdp_2$, grouping $\{s_0, s_1, s_3\}$.}
\end{subfigure}
    \caption{MDPs and their WA abstractions to show \cref{thm:valueWA} of \cref{thm:4-abstraction-comparison}.}
    \label{fig:app-ce-wa}
\end{figure*}

\subsection{Proof of \cref{thm:4-abstraction-comparison} - \cref{thm:valueIMDP}}
\label{sec:appendix_imdp_proof}

\para{Notes on notation.}
For a given IMDP $\absimdp$, we write $\intervalpolicy$ to refer to the choice of probability distribution, or choice of consistent MDP, from the set of MDPs consistent with $\absimdp$.
Further, we write $\abstransition^{\intervalpolicy}$ for the probability distribution induced by an interval policy $\intervalpolicy$.
We call $\uncertaintyset(\absstate, a)$ the uncertainty set, i.e., the set of possible probability distributions given an abstract state $\absstate$ and action $a$.
The uncertainty set of the IMDP contains all the MDPs consistent with the IMDP.

For ease of notation, we introduce the following shorthands:
\begin{itemize}
    \item $\transition(q') = \transition(\state, a, q')$
    \item $\abstransition^{\intervalpolicy}(\absq) = \abstransition^{\intervalpolicy} (\absstate, a, \absq)$
    \item $\absitransitionlb(\absq_i) = \absitransitionlb(\absstate, a, \absq_i)$
    \item $\absitransitionub(\absq_i) = \absitransitionub(\absstate, a, \absq_i)$
\end{itemize}

\para{Proof outline.}
We need to show four inequalities. The proof is done as follows.
\begin{enumerate}
    \item We show that there exist interval policies $\intervalpolicy$ that induce probability distributions giving upper and lower bounds on the values of the original MDP in \cref{thm:exist_interval_policy}.
    \item Then, we show that these interval distributions belong to MDPs consistent with the IMDP in \cref{thm:interval_policy_valid}.
    \item Finally, we use these findings together with \cref{thm:induction-1} and \cref{thm:induction-2} to prove the inequalities of \cref{thm:valueIMDP} by induction.
\end{enumerate}

The proof is analogous for all four inequalities. We include the full proof for $\opt=\max, \wval(\absstate) \leq \val(\state)$ and point out the differences where applicable for the three remaining cases.

\para{Necessary lemmas and their proofs.}

\begin{lemma}
    \label{thm:exist_interval_policy}
    There exist interval policies $\intervalpolicy$ such that the induced probability distributions give valid upper and lower bounds on the values of the original MDP.
    In all cases, we show the bounds $\forall \absstate \in \absstates, \forall a \in \actions$ where $a = \policy(s) = \abspolicy(\absstate) \forall \state \in \states$, as described in \cref{sec:4-discussion}.

    \para{Lower bound.}
    For $\opt=\max, \wval(\absstate) \leq \val(\state)$ and for $\opt=\min, \bval(\absstate) \leq \val(\state)$.
    Let $f \colon \absstates \to [0,1]$. Then:
    \[
        \min_{\abstransition^{\intervalpolicy} \in \uncertaintyset(\absstate, a)} \sum_{\absq \in \absstates} \abstransition^{\intervalpolicy}(\absq) \cdot f(\absq) \leq \min_{\state \in \absstate} \sum_{\absq \in \absstates} \sum_{\q' \in \absq} \transition(q') \cdot f(\absq).
    \]
    I.e., the value of the probability distribution $\abstransition^{\intervalpolicy}$ with the smallest value in $\absimdp$ is smaller or equal to the value of the state $\state \in \absstate$ with smallest value in $\mdp$.

    \para{Upper bound.}
    For $\opt=\max, \bval(\absstate) \geq \val(\state)$ and for $\opt=\min, \wval(\absstate) \geq \val(\state)$.
    Let $f \colon \absstates \to [0, 1]$. Then:
    \[
        \max_{\abstransition^{\intervalpolicy} \in \uncertaintyset(\absstate, a)} \sum_{\absq \in \absstates} \abstransition^{\intervalpolicy}(\absq) \cdot f(\absq) \geq \max_{\state \in \absstate} \sum_{\absq \in \absstates} \sum_{\q' \in \absq} \transition(q') \cdot f(\absq).
    \]
    I.e., the value of the probability distribution $\abstransition^{\intervalpolicy}$ with the largest value in $\absimdp$ is larger or equal to the value of the state $\state \in \absstate$ with largest value in $\mdp$.
    
\end{lemma}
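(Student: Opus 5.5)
The plan is to prove both inequalities the same way: for each concrete state $\state\in\absstate$, exhibit an element of the uncertainty set $\uncertaintyset(\absstate,a)$ that reproduces exactly the lifted successor distribution of $\state$. Comparing a minimum (or maximum) over a superset with the corresponding extremum over that subset then gives the claim. Fix $\absstate$ and $a$; by action-consistency $a$ is available in every $\state\in\absstate$. For such $\state$ define $\mathsf d_\state\colon\absstates\to[0,1]$ by $\mathsf d_\state(\absq)=\sum_{\q'\in\absq}\transition(\state,a,\q')$.

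\textbf{Key step: $\mathsf d_\state\in\uncertaintyset(\absstate,a)$.} Since the states of $\absstates$ partition $\states$, we have $\sum_{\absq}\mathsf d_\state(\absq)=\sum_{\q'\in\states}\transition(\state,a,\q')=1$, so $\mathsf d_\state$ is a probability distribution. By \cref{def:imdp_abstraction}, $\absitransitionlb(\absstate,a,\absq)$ is the minimum over $\state''\in\absstate$ of exactly the quantity $\mathsf d_{\state''}(\absq)$, and $\absitransitionub$ is the corresponding maximum. Hence
\[
\absitransitionlb(\absstate,a,\absq)\le \mathsf d_\state(\absq)\le \absitransitionub(\absstate,a,\absq)
\]
for every $\absq$. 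So $\mathsf d_\state$ respects all interval bounds, and it is a distribution some consistent MDP (the interval policy $\intervalpolicy$) may choose at $(\absstate,a)$. This is where I expect the only subtlety: making precise that the uncertainty set consists of exactly those distributions that lie within the bounds. A single $(\absstate,a)$-choice can be extended to a full consistent MDP by choosing, at every other pair, the lifted distribution of an arbitrary representative. This also shows the IMDP is nonempty.

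\textbf{Conclusion.} Let $f\colon\absstates\to[0,1]$. For the lower bound, the left-hand side is a minimum over $\uncertaintyset(\absstate,a)$, which contains every $\mathsf d_\state$. Therefore
\[
\min_{\abstransition^{\intervalpolicy}\in\uncertaintyset(\absstate,a)}\sum_{\absq}\abstransition^{\intervalpolicy}(\absq)f(\absq)\le\sum_{\absq}\mathsf d_\state(\absq)f(\absq)=\sum_{\absq}\sum_{\q'\in\absq}\transition(\state,a,\q')f(\absq)
\]
holds for every $\state\in\absstate$. Taking the minimum over $\state$ on the right yields the lower-bound claim. The upper bound is symmetric: the maximum over the uncertainty set dominates the value at each $\mathsf d_\state$, hence it dominates the maximum over $\state\in\absstate$. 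The statement's pairing with the objectives ($\wval$ or $\bval$ depending on $\opt$) is just bookkeeping: which inequality serves which value bound. It does not affect this one-step argument, which is pointwise in $f$ and is later fed into the inductive value-iteration argument.
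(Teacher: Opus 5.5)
Your proof is correct, but it follows a different route from the paper. The paper does not argue via membership of the concrete lifted distributions. Instead it constructs one explicit element of $\uncertaintyset(\absstate,a)$: it sorts the abstract successors by $f$, gives the low-valued ones their upper bounds and the high-valued ones their lower bounds, and lets a pivot $\absq_j$ absorb the remainder. It proves membership of this distribution separately in \cref{thm:interval_policy_valid}. It then pushes its value through a chain of inequalities (with \cref{thm:lem1-rem1,thm:lem1-rem2}) down to $\sum_{\absq}\min_{\state\in\absstate}\sum_{\q'\in\absq}\transition(\state,a,\q')\,f(\absq)$, and finally to the minimum over $\state\in\absstate$ of the concrete values. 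You instead note that each $\mathsf d_\state$ already lies in the uncertainty set, because the interval bounds are by definition the pointwise minimum and maximum of exactly these distributions. You then compare an extremum over a set with the extremum over a subset. Your route is shorter, needs no ordering, pivot, or separate validity lemma, and treats both bounds symmetrically. The paper's construction buys an explicit description of an extremal distribution, but that is not needed for the inequality. Your argument also avoids a weak point in the paper's chain. There, the intermediate quantity equals $\sum_{\absq}\absitransitionlb(\absstate,a,\absq)\,f(\absq)$, which in general is not an upper bound on the value of a distribution in the uncertainty set. For example, take $\{q,r\}$ and $b$ in \cref{fig:4-abstraction-example} with $f\equiv 1$: every such distribution has value $1$, while the lower bounds sum to $0.9$. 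The slip is in \cref{thm:lem1-rem1}, where $\sum_{i=1}^{n}\min_{\state\in\absstate}\sum_{\q'\in\absq_i}\transition(\state,a,\q')$ is replaced by $1$, although a sum of pointwise minima is only at most $1$. Your membership argument avoids this step.
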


\begin{proof}
    Let $\absstate \in \absstates$ an abstract state in $\absimdp$. 
    Let $n = \abs{\absstates}$.
    We define an ordering of abstract successor states of $\absstate$.
    For \emph{lower bounds}, let $\absq_1, \ldots, \absq_n$ be an ordering over abstract successor states of $\absstate$ where $\forall i \in [1..n) \colon f(\absq_i) \leq f(\absq_{i+1})$.
    I.e., the successors are ordered by value in ascending order.
    Analogously for \emph{upper bounds}, let $\absq_1, \ldots \absq_n$ be an ordering over the abstract successors of $\absstate$ where $\forall i \in [1, n) \colon f(\absq_i) \geq f(\absq_{i+1})$, i.e., ordered by value in descending order.

    In both cases, $j$ is the index of the state in the ordering such that $\sum_{i=1}^j \absitransitionub(\absq_i) > 1 - \text{Base}$, with $\text{Base} = \sum_{\absq \in \absstates}\absitransitionlb(\absq)$.

    Then, $\abstransition^{\intervalpolicy}$ is a fixed probability distribution, such that for all ordered successor states $\absq_i$,

    \[
        \abstransition^{\intervalpolicy}(\absq_i) = 
        \begin{cases}
            \absitransitionub(\absq_i) & i < j \\
            \absitransitionlb ( \absq_i) & i > j \\
            1 - \sum_{i=1}^{j-1} \absitransitionub( \absq_i ) - \sum_{i=j+1}^{n} \absitransitionlb(\absq_i) & i = j
        \end{cases}.
    \]

    \cref{thm:interval_policy_valid} shows that $\abstransition^{\intervalpolicy}$ is a valid distribution.

    We now show the claim of \cref{thm:exist_interval_policy}.
    We include the full proof for the lower bound $\opt=\max, \wval(\absstate) \leq \val(\state)$, which is identical to $\opt=\min, \bval(\absstate) \leq \val(\state)$.
    For upper bounds, we replace every $\min_{\state \in \absstate}$ with $\max_{\state \in \absstate}$ and every $\leq$ with $\geq$.

    \begin{align*}
        &\sum_{\absq \in \absistates} \abstransition^{\intervalpolicy}(\absstate, a, \absq ) \cdot f(\absq) \\
        &= \sum_{i=1}^{j-1} \absitransitionub(\absq_i) \cdot f(\absq_i) + \sum_{i=j+1}^n \absitransitionlb(\absq_i) \cdot f(\absq_i) \\
        &+ (1 - \sum_{i=1}^{j-1} \absitransitionub(\absq_i) - \sum_{i=j+1}^n \absitransitionlb(\absq_i)) \cdot f(\absq_j) \\
        &= \sum_{i=1}^{j-1} (\min_{\state \in \absstate} \sum_{\q' \in \absq} \transition(\q') +  \absitransitionub(\absq_i) - \min_{\state \in \absstate} \sum_{\q' \in \absq} \transition(\q')) \cdot f(\absq_i)\\
        &+ \sum_{i=j+1}^n \absitransitionlb(\absq_i) \cdot f(\absq_i) 
        + (1 - \sum_{i=1}^{j-1} \absitransitionub(\absq_i) - \sum_{i=j+1}^n \absitransitionlb(\absq_i)) \cdot f(\absq_j) \\
        &\leq \sum_{i=1}^{j-1} \min_{\state \in \absstate} \sum_{\q' \in \absq} \transition(\q') \cdot f(\absq_i) +  \sum_{i=1}^{j-1}(\absitransitionub(\absq_i) \\
        &- \min_{\state \in \absstate} \sum_{\q' \in \absq} \transition(\q')) \cdot f(\absq_j) 
        + \sum_{i=j+1}^n \absitransitionlb(\absq_i) \cdot f(\absq_i) \\
        &+ (1 - \sum_{i=1}^{j-1} \absitransitionub(\absq_i) - \sum_{i=j+1}^n \absitransitionlb(\absq_i)) \cdot f(\absq_j) \tag{Note: $f(\absq_i) \leq f(\absq_j) \forall i < j$} \\
        \end{align*}
    \begin{align*}
        &= \sum_{i=1}^{j-1} \min_{\state \in \absstate} \sum_{\q' \in \absq_i} \transition(\q') \cdot f(\absq_i) + \sum_{i=j+1}^n \absitransitionlb(\absq_i) \cdot f(\absq_i) \\
        &+ (1 - \sum_{i=1}^{j-1} \absitransitionub(\absq_i) - \sum_{i=j+1}^n \absitransitionlb(\absq_i) + \sum_{i=1}^{j-1} \absitransitionub(\absq_i) \\
        &- \sum_{i=1}^{j-1} \min_{\state \in \absstate} \sum_{\q' \in \absq_i} \transition(\q')) \cdot f(\absq_j) \\
        &\leq \sum_{i=1}^{j-1} \min_{\state \in \absstate} \sum_{\q' \in \absq_i} \transition(\q') \cdot f(\absq_i) + \sum_{i=j+1}^n \absitransitionlb(\absq_ia) \cdot f(\absq_i) \\
        &+ (\min_{\state \in \absstate} \sum_{\q' \in \absq_j} \transition(\q') + \sum_{i=j+1}^n (\absitransitionlb ( \absq_i) - \min_{\state \in \absstate} \sum_{\q' \in \absq_i} \transition(\q'))) \cdot f(\absq_j) \tag{\cref{thm:lem1-rem1}} \\
        &\leq \sum_{i=1}^{j-1} \min_{\state \in \absstate} \sum_{\q' \in \absq_i} \transition(\q') \cdot f(\absq_i) + \sum_{i=j+1}^n \absitransitionlb(\absq_i) \cdot f(\absq_i) \\
        &+ \min_{\state \in \absstate} \sum_{\q' \in \absq_j} \transition(\q') \cdot f(\absq_j) \\
        &+ \sum_{i=j+1}^n (\min_{\state \in \absstate} \sum_{\q' \in \absq_i} \transition(\q') - \absitransitionlb(\absq_i)) \cdot f(\absq_i) \tag{\cref{thm:lem1-rem2}} \\
        &= \sum_{i=1}^j \min_{\state \in \absstate} \sum_{\q' \in \absq_i} \transition(\q') \cdot f(\absq_i) + \sum_{i=j+1}^n (\absitransitionlb(\absq_i) \\
        &+ \min_{\state \in \absstate} \sum_{\q' \in \absq_i} \transition(\q') - \absitransitionlb(\absq_i)) \cdot f(\absq_i) \\
        &= \sum_{i=1}^{j} \min_{\state \in \absstate} \sum_{\q' \in \absq_i} \transition(\q') \cdot f(\absq_i)  
        + \sum_{i=j+1}^{n} \min_{\state \in \absstate} \sum_{\q' \in \absq_i} \transition(\q') \cdot f(\absq_i) \\
        &= \sum_{\absq \in \absistates} \min_{\state \in \absstate} \sum_{\q' \in \absq} \transition(\q') \cdot f(\absq) \\
        &\leq \min_{\state \in \absstate} \sum_{\absq \in \absistates} \sum_{\q' \in \absq} \transition(\state, a, \q') \cdot f(\absq)
    \end{align*}
    Since we can show the inequality and since $\min_{\abstransition^{\intervalpolicy} \in \uncertaintyset} \leq \abstransition^{\intervalpolicy}$ because $\abstransition^{\intervalpolicy} \in \uncertaintyset$, the claim holds for lower bounds.
\end{proof}

\begin{remark}
    \label{thm:lem1-rem1}
    For lower bounds, we need to show that

    \begin{align*}
        &1- \sum_{i=1}^{j-1} \absitransitionub(\absq_i) - \sum_{i=j+1}^n \absitransitionlb(\absq_i) 
        \\&+ \sum_{i=1}^{j-1}(\absitransitionub(\absq_i) - \min_{\state \in \absstate} \sum_{\q' \in \absq_i} \transition(\q'))\\
        &\leq \min_{\state \in \absstate} \sum_{\q' \absq_j}\transition(\q') + \sum_{i=j+1}^n ( \absitransitionlb(\absq_i) - \min_{\state \in \states} \sum_{\q' \in \absq_i} \transition(\q'))
    \end{align*}

    This is done as follows:
    \begin{align*}
        &1- \sum_{i=1}^{j-1} \absitransitionub(\absq_i) - \sum_{i=j+1}^n \absitransitionlb(\absq_i) 
        \\&+ \sum_{i=1}^{j-1}(\absitransitionub(\absq_i) - \min_{\state \in \absstate} \sum_{\q' \in \absq_i} \transition(\q'))\\
        &\leq \min_{\state \in \absstate} \sum_{\q' \absq_j}\transition(\q') + \sum_{i=j+1}^n ( \absitransitionlb(\absq_i) - \min_{\state \in \states} \sum_{\q' \in \absq_i} \transition(\q')) \\
        &\iff 1 - \sum_{i=j+1}^n \absitransitionlb(\absq_i) - \sum_{i=1}^{j-1} \min_{\state \in \absstate_i} \sum_{\q' \in \absq_i} \transition(\q') 
        \\&\leq \min_{\state \in \absstate} \sum_{\q' \absq_j}\transition(\q') + \sum_{i=j+1}^n ( \absitransitionlb(\absq_i) - \min_{\state \in \states} \sum_{\q' \in \absq_i} \transition(\q')) \\
        &\iff 1 \leq \min_{\state \in \absstate} \sum_{\q' \absq_j}\transition(\q') + 2 \cdot \sum_{i=j+1}^n \absitransitionlb(\absq_i) 
        \\&- \sum_{i=j+1}^n \min_{\state \in \states} \sum_{\q' \in \absq_i} \transition(\q')) + \sum_{i=1}^{j-1} \min_{\state \in \absstate} \sum_{\q' \in \absq_i} \transition(\q') \\
        &\iff 1 + 2 \cdot \sum_{i=j+1}^n \min_{\state \in \states} \sum_{\q' \in \absq_i} \transition(\q'))
        \\&\leq 2 \cdot \sum_{i=j+1}^n \absitransitionlb(\absq_i) + \sum_{i=1}^{n} \min_{\state \in \absstate} \sum_{\q' \in \absq_i} \transition(\q') \\
        &\iff 1 + 2 \cdot \sum_{i=j+1}^n \min_{\state \in \states} \sum_{\q' \in \absq_i} \transition(\q'))
        \\&\leq 2 \cdot \sum_{i=j+1}^n \absitransitionlb(\absq_i) + 1 \\
    \end{align*}
    \begin{align*}
        &\iff \sum_{i=j+1}^n \min_{\state \in \states} \sum_{\q' \in \absq_i} \transition(\q'))
        \leq  \sum_{i=j+1}^n \absitransitionlb(\absq_i) \tag{\cref{def:imdp_abstraction}}\\
    \end{align*}
\end{remark}

\begin{remark}
    \label{thm:lem1-rem2}
    For lower bounds, we show that:
    \begin{align*}
        &\sum_{i=j+1}^n( \absitransitionlb(\absq_i) - \min_{\state \in \absstate} \sum_{\q' \in \absq_i} \transition(\q')) \cdot f(\absq_j)
        \\&\leq \sum_{i=j+1}^n( \min_{\state \in \absstate} \sum_{\q' \in \absq_i} \transition(\q') - \absitransitionlb(\absq_i)) \cdot f(\absq_i) \\
    \end{align*}

    That is done by solving the following inequality:

    \begin{align*}
        &\sum_{i=j+1}^n( \absitransitionlb(\absq_i) - \min_{\state \in \absstate} \sum_{\q' \in \absq_i} \transition(\q')) \cdot f(\absq_j)
        \\&\leq \sum_{i=j+1}^n( \min_{\state \in \absstate} \sum_{\q' \in \absq_i} \transition(\q') - \absitransitionlb(\absq_i)) \cdot f(\absq_i) \\
        &\iff \sum_{i=j+1}^n \absitransitionlb(\absq_i) \cdot f(\absq_j) - \sum_{i=j+1}^n \min_{\state \in \absstate} \sum_{\q' \in \absq_i} \transition(\q') \cdot f(\absq_j) \\
        &\leq \sum_{i=j+1}^n \min_{\state \in \absstate} \sum_{\q' \in \absq_i} \transition(\q') \cdot f(\absq_i) - \sum_{i=j+1}^n \absitransitionlb(\absq_i) \cdot f(\absq_i) \\
        &\iff \sum_{i=j+1}^n \absitransitionlb(\absq_i) \cdot (f(\absq_j) + f(\absq_i)) 
        \\&\leq \sum_{i=j+1}^n \min_{\state \in \absstate} \sum_{\q' \in \absq_i} \transition(\q') \cdot (f(\absq_i) + f(\absq_j) \\
        &\iff \sum_{i=j+1}^n \absitransitionlb(\absq_i) 
        \leq \sum_{i=j+1}^n \min_{\state \in \absstate} \sum_{\q' \in \absq_i} \transition(\q') \tag{\cref{def:imdp_abstraction}} \\
    \end{align*}
\end{remark}

\begin{lemma}
    \label{thm:interval_policy_valid}
    The interval distribution for \emph{lower bounds} fixed in \cref{thm:exist_interval_policy} belongs to an MDP consistent with $\absimdp$, i.e., $\abstransition^{\intervalpolicy} \in \uncertaintyset$, where $\intervalpolicy$ is the lower bound interval distribution.
    The same holds for \emph{upper bounds}, i.e., $\abstransition^{\intervalpolicy} \in \uncertaintyset$, where $\intervalpolicy$ is the upper bound interval distribution.
\end{lemma}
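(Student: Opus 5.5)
The plan is to verify directly that the vector $\abstransition^{\intervalpolicy}$ built in \cref{thm:exist_interval_policy} is a probability distribution over $\absstates$ that respects the bounds $\absitransitionlb(\absq_i) \leq \abstransition^{\intervalpolicy}(\absq_i) \leq \absitransitionub(\absq_i)$ for every $i$. The lower- and upper-bound cases differ only in the ordering of the successors $\absq_1,\ldots,\absq_n$. The construction and the argument below never use how the ordering was chosen, so a single argument covers both.

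\emph{Step 1: the interval set is nonempty.} Fix $\absstate$ and $\action\in\absactions(\absstate)$. By action-consistency, every $\state\in\absstate$ has $\action$ available. Pick any concrete $\state\in\absstate$ and consider the lifted distribution $\mathsf d_\state(\absq)=\sum_{\q'\in\absq}\transition(\state,\action,\q')$. By \cref{def:imdp_abstraction}, $\absitransitionlb(\absq)\leq \mathsf d_\state(\absq)\leq \absitransitionub(\absq)$ for all $\absq$. Summing over $\absq$ and using $\sum_{\absq}\mathsf d_\state(\absq)=1$ gives
\[
\text{Base}=\sum_{\absq}\absitransitionlb(\absq)\leq 1\leq \sum_{\absq}\absitransitionub(\absq).
\]
This is the key fact that makes the greedy construction feasible.

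\emph{Step 2: well-definedness of the pivot $j$.} I would define $j$ as the least index with $\text{Base}+\sum_{i\leq j}\bigl(\absitransitionub(\absq_i)-\absitransitionlb(\absq_i)\bigr)\geq 1$. This is the intended reading of the threshold condition in \cref{thm:exist_interval_policy}; if the paper's literal condition differs, I would reformulate it this way. Such a $j$ exists by Step~1, since for $j=n$ the left-hand side equals $\sum_{\absq}\absitransitionub(\absq)\geq 1$. If $\text{Base}=1$, then $j=1$ and the construction collapses to all lower bounds, which is trivially consistent.

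\emph{Step 3: checking the constraints.} Normalisation is immediate, because the pivot entry is defined as $1$ minus all the other entries. For $i<j$ and $i>j$ the entries equal $\absitransitionub(\absq_i)$ and $\absitransitionlb(\absq_i)$ respectively, so they lie in the interval as long as $\absitransitionlb\leq\absitransitionub$, which holds by definition as a min versus a max over the same set. The main obstacle is the pivot entry. Rewrite it as
\[
\abstransition^{\intervalpolicy}(\absq_j)=\absitransitionlb(\absq_j)+\Bigl(1-\text{Base}-\sum_{i<j}\bigl(\absitransitionub(\absq_i)-\absitransitionlb(\absq_i)\bigr)\Bigr).
\]
Minimality of $j$ makes the bracketed term nonnegative, which gives the lower bound. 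The defining inequality of $j$ makes the bracketed term at most $\absitransitionub(\absq_j)-\absitransitionlb(\absq_j)$, which gives the upper bound.

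Hence $\abstransition^{\intervalpolicy}\in\uncertaintyset(\absstate,\action)$. Doing this for every abstract state and action yields an MDP consistent with $\absimdp$, for the lower-bound and upper-bound orderings alike.
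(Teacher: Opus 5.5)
Your argument is correct. Its skeleton matches the paper's proof: normalization holds by construction, the non-pivot entries lie in their intervals trivially, and all the work is at the pivot. However, your choice of pivot is not a cosmetic rereading; it is what makes the lemma true.

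The paper takes the least $j$ with $\sum_{i\le j}\absitransitionub(\absq_i) > 1-\text{Base}$. This counts the mass of $\absq_1,\dots,\absq_j$ twice, once through $\absitransitionub$ and again inside $\text{Base}$, and with that choice the pivot can exceed its upper bound. For example, suppose three concrete states in $\absstate$ have lifted distributions $(0.2,0.3,0.5)$, $(0.5,0.2,0.3)$ and $(0.3,0.5,0.2)$ over $\absq_1,\absq_2,\absq_3$. Then $\absitransitionlb\equiv 0.2$, $\absitransitionub\equiv 0.5$ and $\text{Base}=0.6$. The paper's rule yields $j=1$ for every ordering, with pivot value $1-0.2-0.2=0.6>\absitransitionub(\absq_1)$. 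Your rule yields $j=2$ and the admissible vector $(0.5,0.3,0.2)$.

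The paper's argument for $\absitransitionlb(\absq_j)\le\abstransition^{\intervalpolicy}(\absq_j)$ also fails as written. The strict inequality it derives does not conflict with normalization. What is actually needed is the minimality of $j$, exactly as you use it. Finally, the paper never shows that a pivot exists; your Step~1, $\text{Base}\le 1\le\sum_{\absq}\absitransitionub(\absq)$, supplies this.

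Two small remarks. First, when $j=1$ the bracketed term equals $1-\text{Base}$, so its nonnegativity comes from Step~1 rather than from minimality. Second, Step~1 already shows that every lifted $\mathsf d_\state$ lies in $\uncertaintyset(\absstate,\action)$. That fact alone gives both inequalities of \cref{thm:exist_interval_policy}, so the greedy construction is only needed if one wants an explicit optimizer of the inner robust problem.
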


\begin{proof}
    The interval distribution is chosen identically for both upper and lower bounds, differing only in the direction of ordering over abstract successor states.
    It is therefore sufficient to show the validity of the interval distribution $\abstransition^{\intervalpolicy}$ once.
    We need to show that $\forall (\absstate, a)$, $\abstransition^{\intervalpolicy}$ (1) is a valid probability distribution, and (2) lies within the interval bounds for each successor state $\absq \in \absstates$.
    \begin{enumerate}
        \item $\forall \absstate, a \in \absstates \times \actions$, $\abstransition^{\intervalpolicy}$ is a valid probability distribution:
        \begin{align*}
            \sum_{i=1}^{j-1} \absitransitionub(\absq_i) + \sum_{i=j+1}^n \absitransitionlb(\absq_i) + 1-
            \\
            \sum_{i=1}^{j-1} \absitransitionub(\absq_i) - \sum_{i=j+1}^n \absitransitionlb(\absq_i) = 1
        \end{align*}
        \item $\forall \absstate, a \in \absstates \times \actions$, $\abstransition^{\intervalpolicy}$ is within the valid interval bounds.
        \begin{itemize}
            \item For $i<j$ and $i>j$, this follows from \cref{def:imdp_abstraction} which explicitly includes the interval bounds, and
            
            $\abstransition^{\intervalpolicy}(\absq_i) = \absitransitionub(\absq_i)$ 
            and 
            $\abstransition^{\intervalpolicy}(\absq_i) = \absitransitionlb(\absq_i)$.
            \item For $i=j$, we show that $\absitransitionlb(\absq_j) \leq \abstransition^{\intervalpolicy}(\absq_j) \leq \absitransitionub(\absq_j)$.

            For the \emph{upper bound}, since
            $\sum_{k=1}^j \absitransitionub(\absq_k) > 1 - \text{Base}$ but by construction, $\sum_{k=1}^{j-1} \absitransitionub(\absq_k) \leq 1 - \text{Base}$, 
            we have that $\absitransitionub(\absq_j) \geq \abstransition^{\intervalpolicy}(\absq_j)$.

            We prove the \emph{lower bound} by contradiction. 
            Assume $\absitransitionlb(\absq_j) > \abstransition^{\intervalpolicy}(\absq_j)$.
            Then,
            \begin{align*}
                &\absitransitionlb(\absq_j) > 1 - \sum_{i=1}^{j-1} \absitransitionub(\absq_i) - \sum_{i=j+1}^n \absitransitionlb(\absq_i) \\
                \implies &\absitransitionlb(\absq_j) + \absitransitionlb(\absq_j) + \sum_{i=1}^{j-1} \absitransitionub(\absq_i) \\
                &+ \sum_{i=j+1}^n \absitransitionlb(\absq_i) > j
            \end{align*}
            This cannot hold since previously showed that $\sum_{\absq \in \absstates} \abstransition^{\intervalpolicy}(\absq) = 1$.
            Hence, $\absitransitionlb(\absq_j) \leq \abstransition^{\intervalpolicy} (\absq_j)$.
        \end{itemize}
    \end{enumerate}
\end{proof}

\para{Overall proof of \cref{thm:4-abstraction-comparison} - \cref{thm:valueIMDP}.}
Using the results from the above steps, we prove the inequalities of \cref{thm:4-abstraction-comparison} - \cref{thm:valueIMDP} by induction.

\begin{proof}
    The proof is fully analogous in all four cases.
    Therefore, we provide the full proof for $\opt=\max,\wval(\absstate) \leq \val(\state)$, i.e., the \emph{worst case lower bound} and point out differences for the remaining cases.
    
    We show by induction that 
    \[
        \forall k \in \mathbb{N}_0. \wval^{\leq k}(\absstate) \leq \min_{\state \in \absstate} \val^{\leq k}(\state), \forall \absstate \in \absstates, \state \in \absstate
    \]

    \para{Base case.} 
    If $\absstate \in \abstargetset$, then $\state \in \targetset \forall \state \in \absstate$. 
    Therefore
    \[
        \forall \state \in \absstate, \wval^{\leq 0}(\absstate) = 1 = \val^{\leq 0}(\state)
    \]
    \para{Induction hypothesis.}
    \[
        \forall \absstate \in \absstates, \forall k \in \mathbb{N}_0 \colon \wval^{\leq k}(\absstate) \leq \min_{\state \in \absstate} \val^{\leq k}(\state)
    \]
    \para{Induction step.} $\forall \absstate \in \absstates\colon$
        \begin{align*}
        \wval^{\leq k+1}(\absstate) 
        &= \max_{a \in \actions} \min_{\abstransition^{\intervalpolicy} \in \uncertaintyset(\absstate, a)} \sum_{\absq \in \absistates} \abstransition^{\intervalpolicy}(\absq) \cdot \wval^{\leq k}(\absq) \tag{k-step VI \cite{chatterjee2008valueiteration}} \\
        &\leq \max_{a \in \actions} \sum_{\absq \in \absistates}(\min_{\state \in \absstate} \sum_{\q' \in \absq} \transition(\q')) \cdot \wval^{\leq k}(\absq) \tag{\cref{thm:exist_interval_policy}} \\
        &\leq \min_{\state \in \absstate} \max_{a \in \actions} \sum_{\absq \in \absistates} \sum_{\q' \in \absq} \transition(\q') \cdot \wval^{\leq k}(\absq) \\
        &\leq \min_{\state \in \absstate} \max_{a \in \actions} \sum_{\absq \in \absistates} \sum_{\q' \in \absq} \transition(\q') \cdot \min_{\q'' \in \absq} \val^{\leq k}(\q '') \tag{Induction hypothesis} \\
        &\leq \min_{\state \in \absstate} \max_{a \in \actions} \sum_{\absq \in \absistates} \sum_{\q' \in \absq} \transition(\q') \cdot \val^{\leq k}(\q') \tag{\cref{thm:induction-1}} \\
        &= \min_{\state \in \absstate} \max_{a \in \actions} \sum_{\q' \in \states} \transition(\q') \cdot \val^{\leq k}(\q') \tag{\cref{thm:induction-2}} \\
        &= \min_{\state \in \absstate} \val^{\leq k+1}(\state) \tag{k-step VI \cite{chatterjee2008valueiteration}} 
    \end{align*}

    For the \emph{best case lower bound} where $\opt=\min, \bval(\absstate) \leq \val(\state)$, we replace all occurrences of $\max_{a \in \actions}$ with $\min_{a \in \actions}$ and $\wval(\absstate)$ with $\bval(\absstate)$ in the induction. 

    For the remaining cases, the abstract target set definition changes to $\abstargetset = \{\ \absstate \mid \exists \state \in \targetset \land \absstate \}$.
    That changes the base case of induction as follows:
    If $\absstate \in \abstargetset$, then $\exists \state \in \absstate \colon \state \in \targetset$. 
    Therefore, $\exists \state \in \absstate \colon \wval^{\leq 0}(\absstate) = 1 = \val^{\leq 0}(\state)$.

    Then, for the \emph{worst case upper bound} where $\opt=\min, \wval(\absstate) \leq \val(\state)$, we replace all occurrences of $\min$ with $\max$ and conversely all occurrences of $\max$ with $\min$ and exchange all occurrences of $\leq$ with $\geq$.

    Finally, for the \emph{best case upper bound} where $\opt=\min, \bval(\absstate) \geq \val(\state)$, we replace all $\min_{\abstransition^{\intervalpolicy} \in \uncertaintyset}$ with $\max_{\abstransition^{\intervalpolicy} \in \uncertaintyset}$, and change all occurrences of $\leq$ with $\geq$.

    Thus, by convergence of value iteration \cite{chatterjee2008valueiteration}, $\forall \absstate \in \absstates \colon$

    $\opt=\max\colon$
    \[
        \wval(\absstate) = \lim_{k \to \infty} \wval^{\leq}(\absstate) \leq \lim_{k \to \infty} \min_{\state \in \states} \val^{\leq k}(\state) = \min_{\state \in \absstate} \val(\state)
    \]
    \[
        \bval(\absstate) = \lim_{k \to \infty} \bval^{\leq}(\absstate) \geq \lim_{k \to \infty} \max_{\state \in \states} \val^{\leq k}(\state) = \max_{\state \in \absstate} \val(\state)
    \]
    $\opt=\min\colon$
    \[
        \wval(\absstate) = \lim_{k \to \infty} \wval^{\leq}(\absstate) \leq \lim_{k \to \infty} \max_{\state \in \states} \val^{\leq k}(\state) = \max_{\state \in \absstate} \val(\state)
    \]
    \[
        \bval(\absstate) = \lim_{k \to \infty} \bval^{\leq}(\absstate) \leq \lim_{k \to \infty} \min_{\state \in \states} \val^{\leq k}(\state) = \min_{\state \in \absstate} \val(\state)
    \]
\end{proof}

\para{Remarks supporting the induction.}
The following remarks extend on the referenced induction steps in the proof of \cref{thm:4-abstraction-comparison} - \cref{thm:valueIMDP}.

\begin{remark}
    \label{thm:induction-1}
    Let $\opt \in \{\min, \max\}$.

    For the \emph{lower bound}, we have
    \begin{align*}
        \min_{\state \in \absstate} \opt_{a \in \actions} \sum_{\absq \in \absistates} \sum_{\q' \in \absq} \transition(\q') \cdot \min_{\q'' \in \absq} \val^{\leq k}(\q '') 
        \\
        \leq \min_{\state \in \absstate} \opt_{a \in \actions} \sum_{\absq \in \absistates} \sum_{\q' \in \absq} \transition(\q') \cdot \val^{\leq k}(\q')
    \end{align*}

    since
    \[
        \min_{\q'' \in \absq}\val^{\leq k}(\q'') \leq V^{\leq k}(\q'), \forall \q' \in \absq.
    \]

    Analogously, for the \emph{upper bound}, we have
    \begin{align*}
        \max_{\state \in \absstate} \opt_{a \in \actions} \sum_{\absq \in \absistates} \sum_{\q' \in \absq} \transition(\q') \cdot \max_{\q'' \in \absq} \val^{\leq k}(\q '') 
        \\
        \geq \max_{\state \in \absstate} \opt_{a \in \actions} \sum_{\absq \in \absistates} \sum_{\q' \in \absq} \transition(\q') \cdot \val^{\leq k}(\q')
    \end{align*}
    
    since
    \[
        \max_{\q'' \in \absq}\val^{\leq k}(\q'') \geq V^{\leq k}(\q'), \forall \q' \in \absq.
    \]
\end{remark}

\begin{remark}
    \label{thm:induction-2}
    Since $\bigcup_{\absq \in \absstates} q' \in \absq = \states$ by definition, the following holds for all four combinations of $\opt \in \{\min, \max\}$ and $\opt' \in \{\min, \max\}$.
    \begin{align*}
        \opt_{\state \in \absstate} \opt'_{a \in \actions} \sum_{\absq \in \absstates} \sum_{q' \in \absq} \transition(q') \cdot \val^{\leq k}(q')\\
        =
        \opt_{\state \in \absstate} \opt'_{a \in \actions} \sum_{q' \in \states} \transition(q') \cdot \val^{\leq k}(q')
    \end{align*}
\end{remark}

 \subsection{Proof of \cref{thm:4-abstraction-comparison} - \cref{thm:valueSG}} 
\label{sec:appendix_sg_proof}

\begin{proof}
    Given an MDP $\mdp$ and a partition of its state space $\partition$.
    Let $\abssg$ be the abstract stochastic game obtained from $\mdp$ and $\partition$ by applying \cref{def:sg_abstraction} and let $\abssg'$ be the abstract stochastic game obtained from $\mdp$ and $\partition$ by applying \cite[Definition 10]{kattenbelt2010gamebasedabstraction}.

    Correctness of value bounds is proved in \cite[Definition 10]{kattenbelt2010gamebasedabstraction} for $\abssg'$.
    The definition of $\abssg$ is identical in terms of reachability, only adding action labels for the purpose of policy generation. 
    Transition probabilities, and therefore reachability values of the games, are computed identically.
    The notational differences translate as follows:
    \begin{itemize}
        \item Abstract states are elements of $\partition$, called player 1 vertices in \cite{kattenbelt2010gamebasedabstraction} and opponent states in \cref{def:sg_abstraction}.
        In these states, policies choose a concrete lifted distribution.
        \item Agent states (player 2 states in \cite{kattenbelt2010gamebasedabstraction}) contain lifted distributions labeled with actions ($\transition_\partition$ in \cref{def:sg_abstraction}) or rewards ($\bar{\text{Steps}}$ in \cite{kattenbelt2010gamebasedabstraction}).
        \item Lifted distributions are defined idenically ($\mathsf d$ in \cref{def:sg_abstraction}, $\bar \mu$ in \cite{kattenbelt2010gamebasedabstraction}).
    \end{itemize}

    It follows that the results of \cite[Theorem 1]{kattenbelt2010gamebasedabstraction} w.r.t reachability objectives also hold for $\sg$.
\end{proof}

\section{Additional Details on Experiments}
\label{app:exp_details}

\subsection{Additional Evaluation of Experiments}
\label{app:experiment-plots}

\subsubsection*{(RQ1) How does the property used to generate the effect set affect performance?}

This section includes further evaluation details on the $\Effect$ configurations for the causal partitions.

\para{\texttt{C-1} partition.}
We evaluate different thresholds for the \texttt{C-1} partition, by counting the number of best results per threshold compared to other threshold, and use fuzzy values to count how often thresholds actually perform well.
\cref{tab:oneshot-thresholds} shows that percentile-based thresholds often lead to partitions of smaller cardinality, but perform worse than relative thresholds in terms of quality.
While the relative $\tau=0.4$ threshold often achieves the smallest interval bounds, for the fuzzy values, the relative  
$\tau=0.1$ threshold performs slightly better.
Further, we do not find observe significant differences in terms of threshold performance under different combinations of $\Reach$-$\Effect$ definitions, which we visualize in \cref{fig:oneshot-thresholds-ve}.
Overall, the differences between threshold performances are relatively small. 
We conclude that a relative threshold $\tau=0.1$ is the best threshold and use it in all further experiments.

\para{\texttt{C-IT} partition.}
We analyse the same threshold settings as previously and additionally compare this to thresholds derived from fixing a number of iterations beforehand.
As we observe in \cref{fig:iterative-hyperparameters}, the results for all three metrics are very similar for different thresholds and when fixing either 6 or 10 iterations.
We do, however, observe clear differences between setting thresholds and fixing the iterations in \cref{fig:iterative-best-parameters}.
Partitions from a fixed number of iterations perform better than or equal to the threshold-based iterations in terms of policy difference and tightness of value bounds.
This occasionally comes at the price of larger partition cardinality. 
For many benchmarks, however, the size is similar or equal between the two types of iterations.
Threshold-based iterations generally iterate fewer times, leading to results that are similar to the \texttt{C-1} partition.

\para{Semantically interpreting $\Effect$ sets.}

For both partitions, we observe slightly better performance on \enquote{good} compared to \enquote{bad} $\Effect$ sets.
A reason for that could be that what we interpret to be \enquote{good} effects is what the MDP benchmarks were designed for.

\begin{figure}[h]
    \centering
    \includegraphics[width=\linewidth]{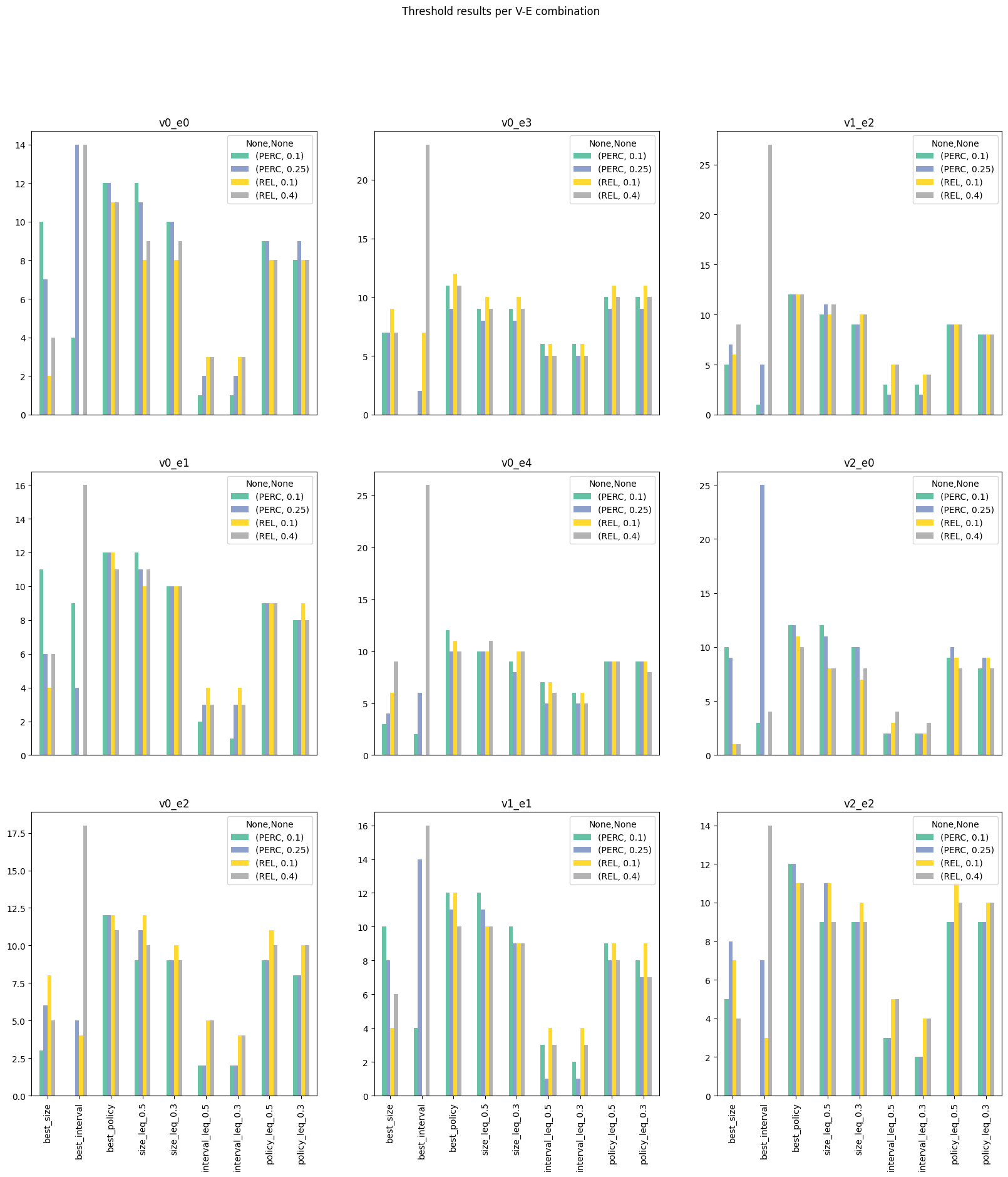}
    \caption{Performance of different thresholds for the \texttt{C-1} partition for different $\Effect$-$\Reach$ configurations.}
    \label{fig:oneshot-thresholds-ve}
\end{figure}

\begin{table}[]
    \centering
    \begin{tabular}{l|c|c|c|c}
         & \multicolumn{2}{c|}{percentile} & \multicolumn{2}{c}{relative} \\
                                    & 0.1 & 0.25 & 0.1 & 0.4 \\
        \hline \hline
        best size                   & \textbf{64} & 62 & 47 & 51 \\
        best interval size          & 23 & 82 & 14 & \textbf{158} \\
        best policy difference      & \textbf{98} & 93 & 95 & 88 \\
        \hline
        size $< 0.5$                & \textbf{95} & \textbf{95} & 89 & 88 \\
        size $< 0.3$                & \textbf{85} & 82 & 84 & 83 \\
        interval size $<0.5$        & 29 & 25 & \textbf{42} & 39 \\
        interval size $<0.3$        & 25 & 24 & \textbf{37} & 34 \\
        policy difference $< 0.5$   & 34 & 35 & \textbf{43} & 39 \\
        policy difference $< 0.3$   & 27 & 27 & \textbf{35} & 31 
    \end{tabular}
    \caption{Performance of thresholds for the \texttt{C-1} partition.}
    \label{tab:oneshot-thresholds}
\end{table}

\begin{figure}[h]
    \centering
    \includegraphics[width=\linewidth]{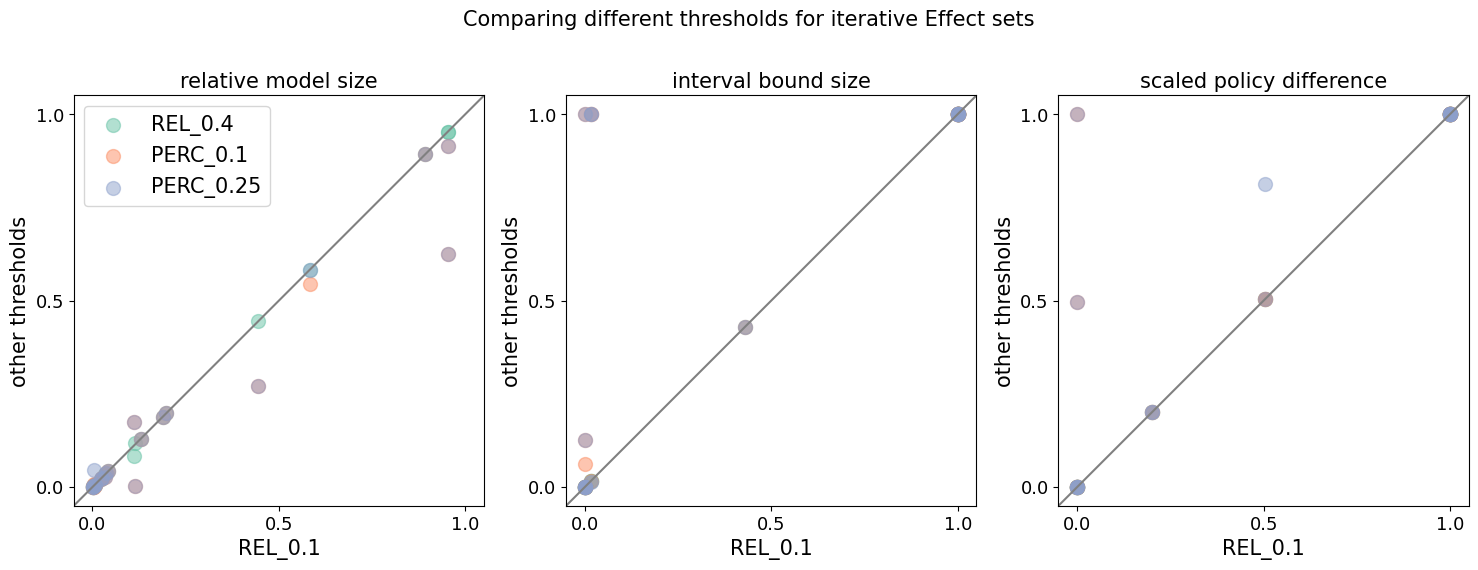} \\
    \includegraphics[width=\linewidth]{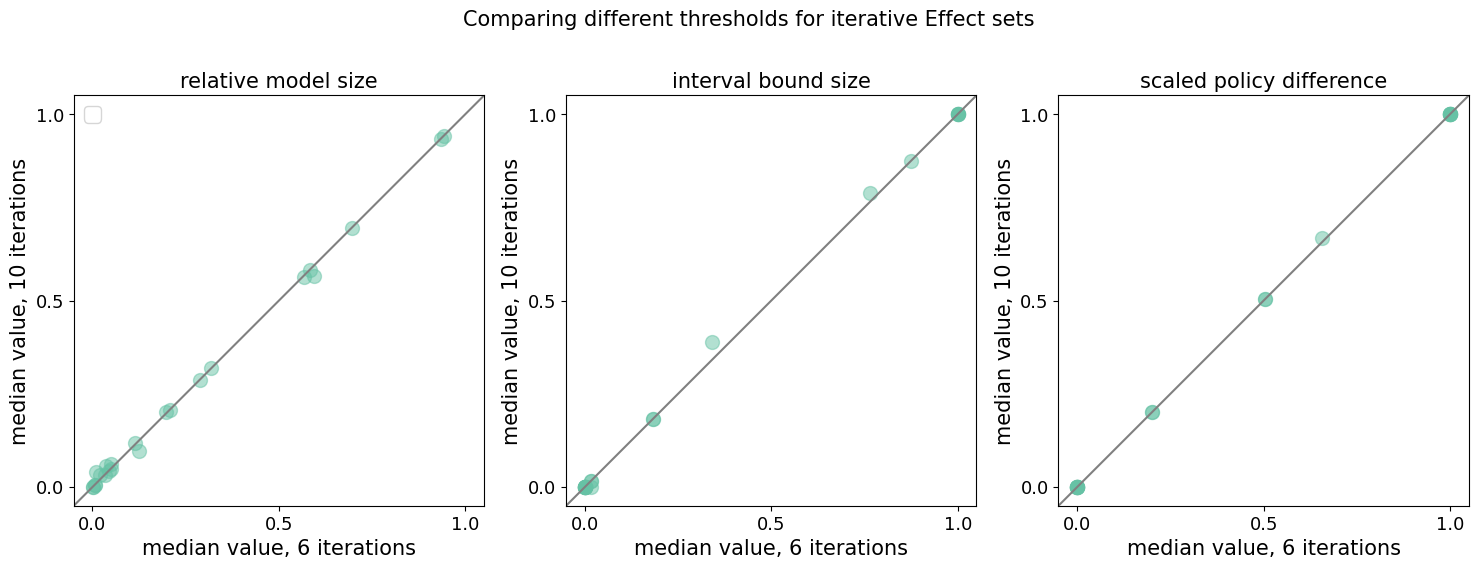}
    \caption{Performance of different iteration types for \texttt{C-IT}.}
    \label{fig:iterative-hyperparameters}
\end{figure}

\begin{figure}[h]
    \centering
    \includegraphics[width=\linewidth]{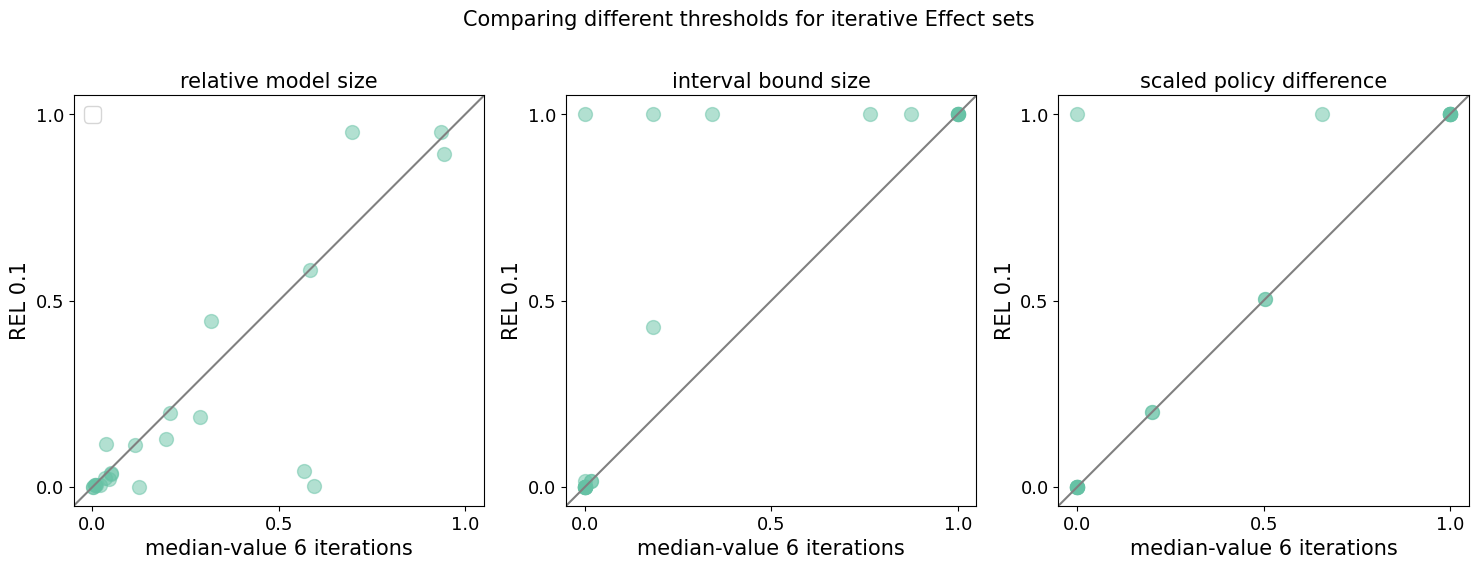}
    \caption{Comparing the best performing threshold-based iteration with the best fixed number of iterations.}
    \label{fig:iterative-best-parameters}
\end{figure}

\begin{figure}[h]
    \centering
    \includegraphics[width=\linewidth]{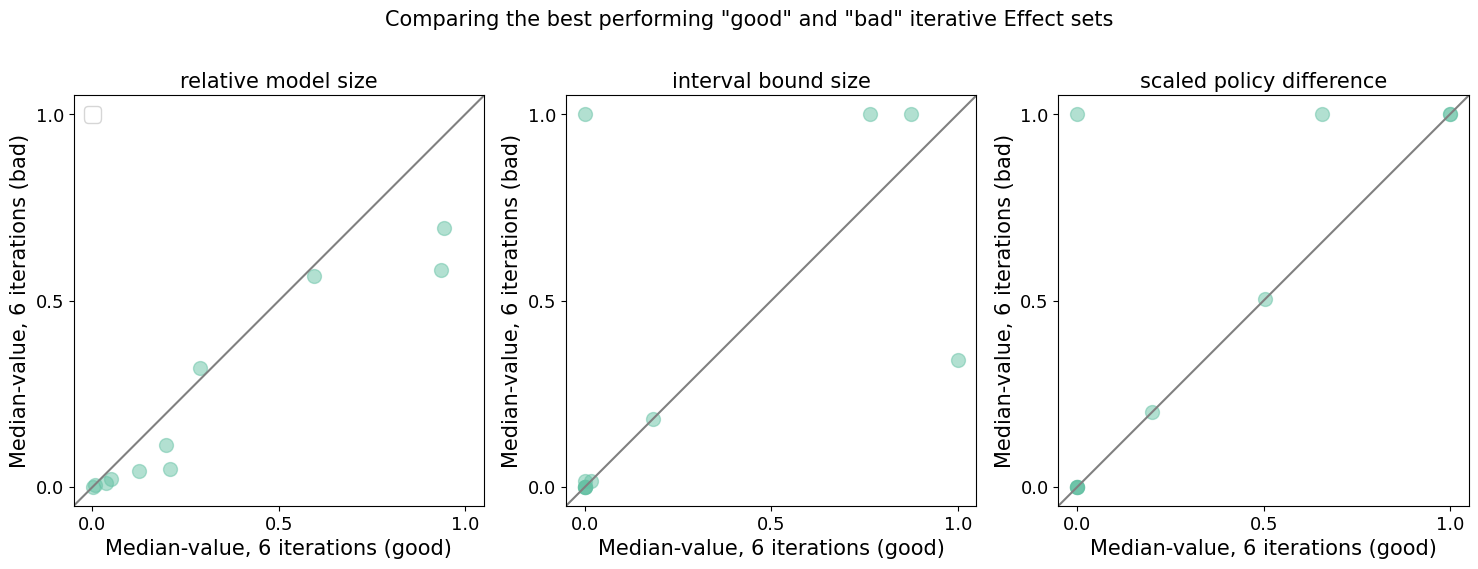} \\
    \includegraphics[width=\linewidth]{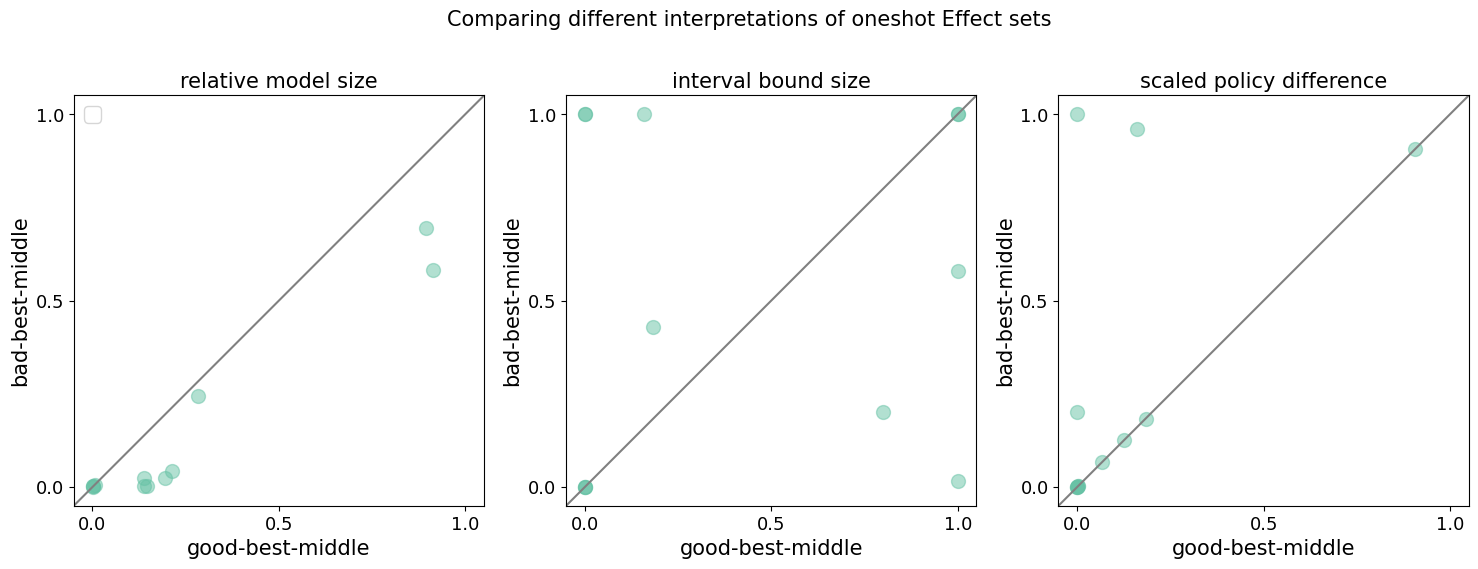}
    \caption{Good vs bad effect sets for \texttt{C-IT} and \texttt{C-1}.}
    \label{fig:hyperparameters-good-bad}
\end{figure}

\subsubsection*{(RQ2) How do different causal partitions compare?}

In 38\% of our benchmarks, the \texttt{C-G} partition does not exclude any features, leaving the original model untouched.
For fair comparison (since policy performance and value bounds receive a perfect score of 0, since the original model is exactly reproduced), we excluded those benchmarks from the analysis in \cref{sec:5-title}; for completeness, we include them in \cref{fig:partition-performance-all}.

\begin{figure*}[h]
    \centering
    \includegraphics[width=.27\linewidth]{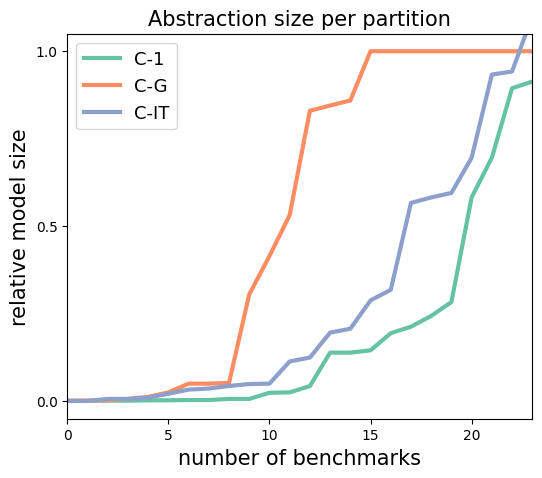} 
    \includegraphics[width=.27\linewidth]{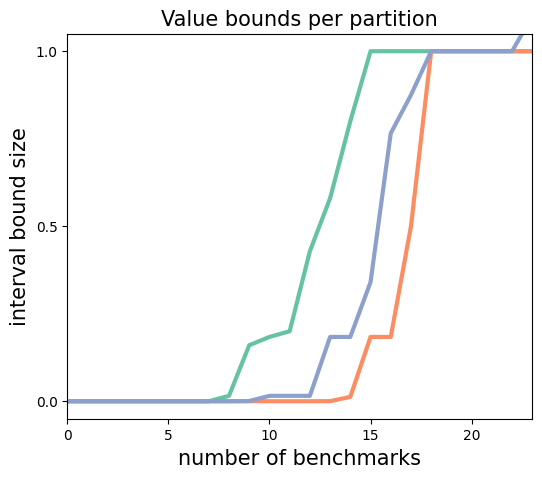} 
    \includegraphics[width=.27\linewidth]{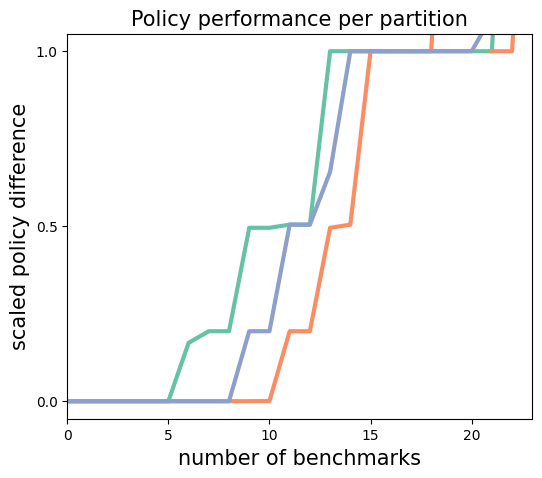}
    \caption{Comparison of partitions, including benchmarks where the causal graph retains the original model.}
    \label{fig:partition-performance-all}
\end{figure*}

\subsubsection*{(RQ3) How do different aggregation methods compare?}

\para{Extending: The SG aggregation works best, then IMDP, then WA.}
\cref{fig:5-abstractions-quantile} shows that the SG aggregation receives small intervals on more benchmarks than the IMDP one.
Whereas the IMDP models receive interval bounds of size 1 on approximatively half the benchmarks, the SG can improve bound tightness in ca. two thirds of our experiments.
In terms of policy performance, all three types of models perform well on most benchmarks, with slightly better results for the SG aggregation method.
Interestingly, the WA abstraction, which does not provide guarantees on the performance of the abstract model, also receives near perfect policies for almost half the experiments, and reasonable performance in ca. 35 out of 44 benchmarks.
It can however, also be arbitrarily wrong.
While not interesting in settings where guarantees are required, it can still be an interesting approach for example in reinforcement learning applications due to its simplicity.

\begin{figure}[h]
    \centering
    \includegraphics[width=.85\linewidth]{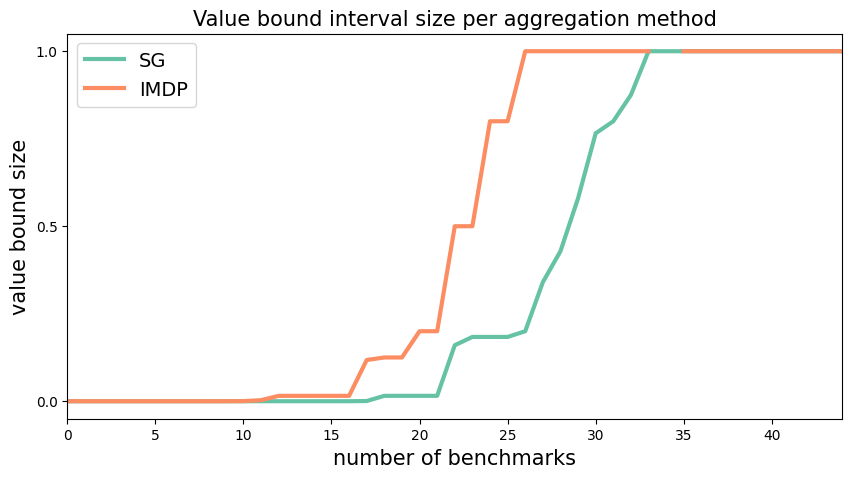} \\
    \includegraphics[width=.85\linewidth]{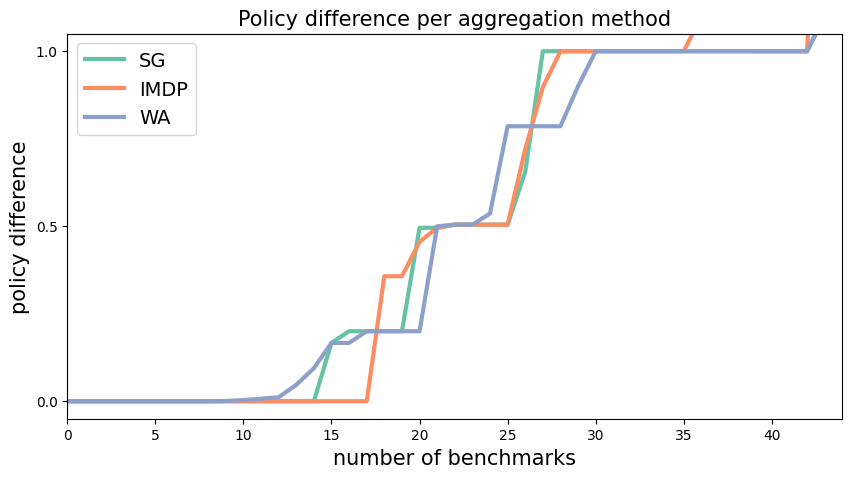}
    \caption{Comparison of the performance of different abstraction methods across benchmarks.}
    \label{fig:5-abstractions-quantile}
\end{figure}

\para{Comparing performance of different partitions on \enquote{good} vs. \enquote{bad} $\Effect$ sets.}
Here, we find differences, especially with respect to policy quality.
Across all abstraction models, \enquote{good} $\Effect$ sets usually result in larger abstract models.
While the tightness of value bounds shows mixed results, there is a visible tendency towards smaller bounds on \enquote{good} effects.
The distinction is clearer for policy performance, where the performance is either the same or better for \enquote{good} effects, with occasional outliers.
We show this in \cref{fig:5-abstractions-good-bad}.

\begin{figure}[h]
    \centering
    \includegraphics[width=\linewidth]{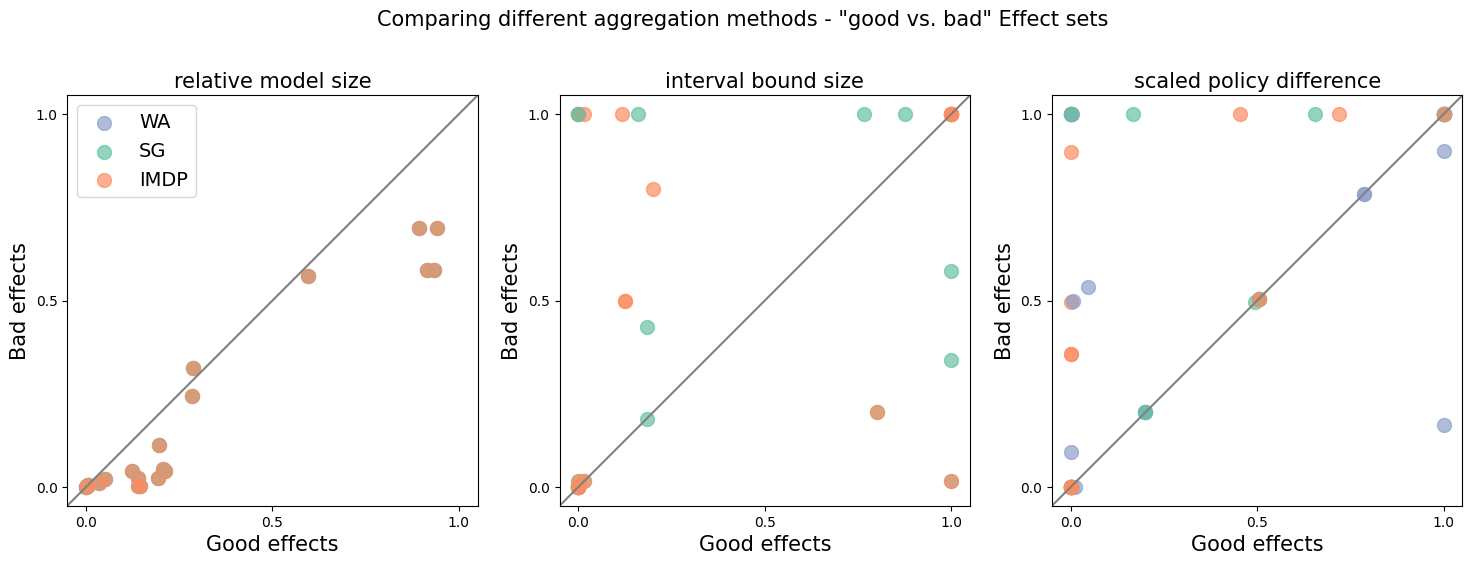}
    \caption{Comparison of the performance of different abstraction methods on \enquote{good} vs. \enquote{bad} $\Effect$ sets}
    \label{fig:5-abstractions-good-bad}
\end{figure}

\subsubsection*{(RQ4) Do the causal partitions generalize to larger model sizes?}

\begin{figure}[h]
    \centering
    \includegraphics[width=\linewidth]{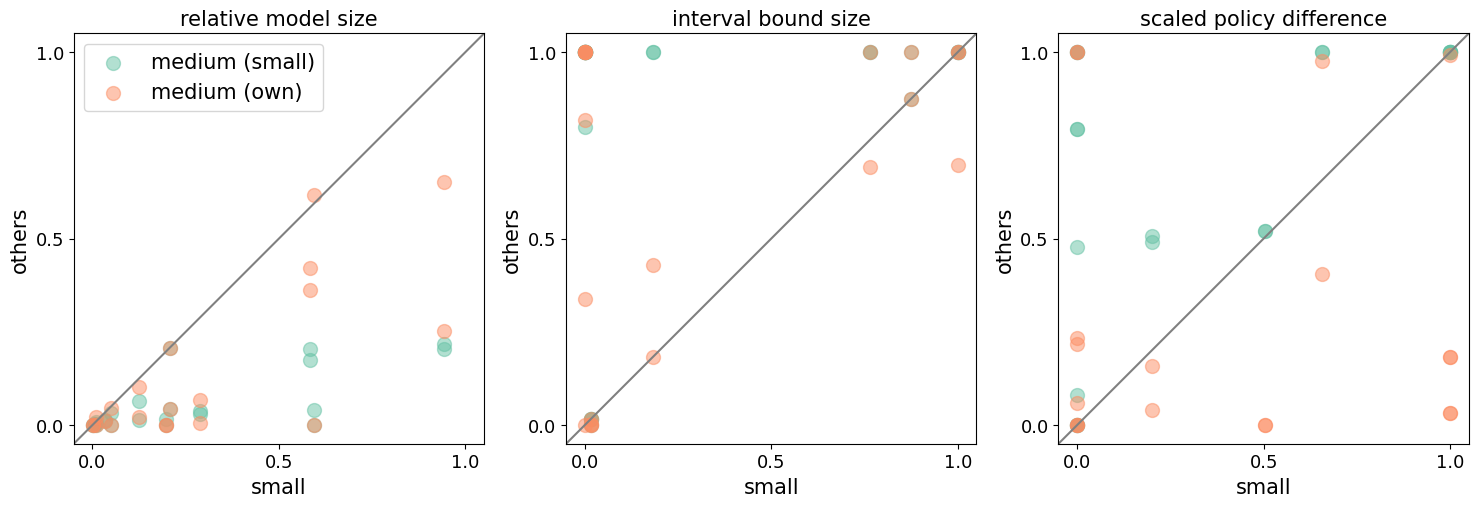}
    \caption{Generalization of causal abstractions. Comparing the performance of small abstractions with medium abstractions based on small or medium sized causes.}
    \label{fig:generalization-small-medium-causes}
\end{figure}

\begin{figure}[h]
    \centering
    \includegraphics[width=\linewidth]{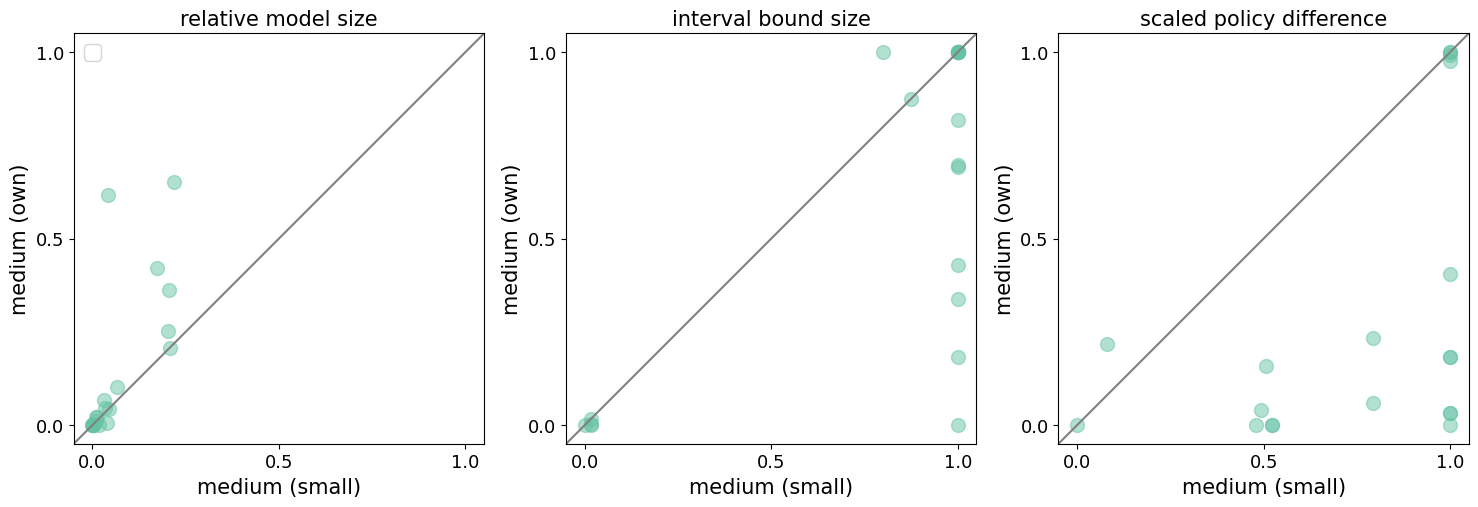}
    \caption{Generalization of causal abstractions. Comparing the performance of medium abstractions based on small or medium sized causes.}
    \label{fig:generalization-medium-models}
\end{figure}

\cref{fig:generalization-small-medium-causes} compares the performance of small abstractions from small causes with medium abstractions from small and medium sized causes.
While medium abstractions from small causes lead to relatively smaller abstractions, those from medium causes still reduce the relative size of states more than small abstractions. 
The interval bound tightness is often worse on medium abstractions than on small abstractions.
For policy quality, the performance of medium abstractions from medium causes is usually better than for small abstractions and also compared to medium abstractions from small models.
\cref{fig:generalization-medium-models} directly compares the performance between medium abstractions from small and medium causes. 
While those abstractions from medium causes are often relatively larger than those from small causes, they often receive much tighter interval bounds and better policy quality.

\subsection{Full overview of benchmarks}

We include a list of benchmarks in \cref{tab:app_benchmarks}.

\subsection{Feature causality time-outs}
\label{app:timeout}

We introduced timeouts for feature cause calculations.
The initial timeout is 4h and then 12h.
We report the settings that timed out after 4h but completed within 12h, and those that timed out even after 12h in \cref{tab:app-fc-timeouts}.

\subsection{Metrics}\label{app:exp-metrics}

For all experiments we report three performance metrics, formally defined below.
We write $\state_0$ for the initial state given by the property (since all properties in our evaluation specify a unique initial state), and $\absstate_0$ is the corresponding abstract state.
Further, we write $\reachprob^{\max}$ for the maximum reachability probability, i.e., $\max_{\policy\in\policies} \reachprob^{\policy}_{\state_0, \mdp}[\lozenge \targetset]$, and dually $\reachprob^{\min}$.
For policy performance, by normalizing the difference with $\reachprob^{\max} - \reachprob^{\min}$, we achieve a relative measure of performance: If, e.g., the optimal value in the original MDP is close to 0, even a policy achieving 0 has a small absolute difference.
Consequently, a policy difference of 0 indicates optimal performance ($\val^{\policy}(\state_0) = \val^{\abspolicy}(\state_0)$), while  a policy difference of 1 indicates $\abspolicy$ performs as bad as the worst possible policy, i.e., $\val^{\abspolicy}(\state_0) = \nopt_{\policy\in\policies} \reachprob^{\policy}_{\state, \mdp}[\lozenge \targetset]$.

\begin{align*}
    \mathrm{relative\_size} &= \abs{\absstates} / \abs{\states} \\
    \mathrm{value\_bound} &= \abs{\wval(\absstate_0) - \bval(\absstate_0)} \\
    \mathrm{policy\_difference} &= \abs{\val^{\policy}(\state_0) - \val^{\abspolicy}(\state_0)} / (\reachprob^{\max} - \reachprob^{\min})
\end{align*}

\FloatBarrier
\begin{table*}[]
    \centering
    \resizebox{\linewidth}{!}{
    \begin{tabular}{l|p{4cm}|l|c|c|l}
        Name & Parameters & Property & States & Transitions & Type of model \\
        \hline
        \hline
        avoid-mdp & N=10,slippery=0.2 & Pmax=? ["notbad" U "goal"] & 46781 & 415105 & grid world\\
            & N=12,slippery=0.2 && 84673 & 368825 &\\
            & N=15,slippery=0.2 && 272701 & 1350803 &\\
        \hline
        consensus & N=4,K=0 & Pmin=? [F "finished"\&"all\_coins\_equal\_1"] & 2504 & 8556 & communication protocol \\
            & N=4,K=2 && 22656 & 75232 &\\
            & N=4,K=10 && 43136 & 144352 & \\
        \hline
        consensus & N=4,K=0 & Pmax=? [F "finished"\&!"agree"] & 2504 & 8556 & communication protocol \\
            & N=4,K=2 && 22656 & 75232 &\\
            & N=4,K=10 && 43136 & 144352 & \\
        \hline
        csma & N=2,K=2 & Pmax=? [!"collision\_max\_backoff" U "all\_delivered"] & 1038 & 1282 & network protocol \\
            & N=2,K=3 && 36850 & 55862 &\\
            & N=2,K=4 && 761962 & 1327068 & \\
        \hline
        evade-mdp & N=4,slippery=0.7 & Pmax=? ["notbad" U "goal"] & 4463 & 41567 & grid world \\
            & N=6,slippery=0.7 && 63899 & 698635 &\\
            & N=8,slippery=0.7 && 398271 & 4601231 &\\
        \hline
        firewire\_dl & delay=3,deadline=100 & Pmin=? [F "done"] & 3662 & 4898 & internet protocol\\
            & delay=3,deadline=200 && 14824 & 17607 &\\
            & delay=3,deadline=500 && 113869 & 133522 &\\
        \hline
        mer & n=1,x=0.1 & Pmax=? [F "err\_G"] & 4356 & 17201 & resource scheduler \\
            & n=10,x=0.1 && 24282 & 95069 &\\
            & n=100,x=0.1 && 223542 & 873749 &\\
        \hline
        random-grid & N=20,pover=0.3,pstay=0.3 & Pmax=? ["first\_condition" U "second\_condition"] & 7980 & 124363 & grid world \\
            & N=40,pover=0.3,pstay=0.3 && 63960 & 1071943 &\\
            & N=80,pover=0.3,pstay=0.3 && 5111920 & 8892703 &\\
        \hline
        refuel-mdp & N=10,ENERGY=40 & Pmax=? [F "goal"] & 3856 & 24769 & grid world\\
            & N=40,ENERGY=40 && 52506 & 394218 &\\
            & N=80,ENERGY=40 && 161070 & 619178 &\\
        \hline
        sensor & N=2 & Pmax=? [F "condition"] & 7222 & 23786 & network protocol \\
            & N=3 && 66051 & 263526 &\\
            & N=4 && 462056 & 2156716 &\\
        \hline
        zeroconf & N=1000,K=1,reset=false & Pmax=? [F "correct"] & 31402 & 70643 & internet protocol \\
            & N=1000,K=2,reset=false && 88858 & 203550 &\\
            & N=1000,K=4,reset=false && 306585 & 703189 &\\
        \hline
        zeroconf & N=1000,K=1,reset=false & Pmin=? [F "correct"] & 31402 & 70643 & internet protocol \\
            & N=1000,K=2,reset=false && 88858 & 203550 &\\
            & N=1000,K=4,reset=false && 306585 & 703189 &\\
        \hline
    \end{tabular}
    }
    \caption{Full overview of benchmarks. \\
    Note that configurations of avoid-mdp with $N<10$ do not achieve Pmax$>$0, hence we cannot use them. Therefore, the smallest avoid-mdp configuration has more than $>10^4$ states.
    Zeroconf with reset=false does not allow for parameter settings that result in $<10^5$ states. Therefore we also use a larger model as the smallest here.
    }
    \label{tab:app_benchmarks}
\end{table*}

\begin{table*}[]
    \centering
    \begin{tabular}{l|c|c|c|c|c|c|c|c|c|c|c|c|c|c}
        & \rotatebox{90}{avoid-mdp (s)} & \rotatebox{90}{avoid-mdp (m)} & \rotatebox{90}{consensus-Pmax (m)} & \rotatebox{90}{consensus-Pmin (s)} & \rotatebox{90}{consensus-Pmin (m)} & \rotatebox{90}{evade-mdp (m)} & \rotatebox{90}{firewire\_dl (m)} & \rotatebox{90}{mer (s)} & \rotatebox{90}{mer (m)} & \rotatebox{90}{refuel-mdp (m)} & \rotatebox{90}{sensor (s)} & \rotatebox{90}{sensor (m)} & \rotatebox{90}{zeroconf-Pmax (m)} & \rotatebox{90}{zeroconf-Pmin (m)}\\
        \hline\hline
        C-1-PERC-0.1-v0e2 & & &\cellcolor{blue!25}& &\cellcolor{blue!25}&\cellcolor{green!25}& & &\cellcolor{blue!25}&\cellcolor{blue!25}& &\cellcolor{blue!25}&\cellcolor{blue!25}& \\
        C-1-PERC-0.1-v0e3 & & &\cellcolor{green!25}& & & &\cellcolor{blue!25}& &\cellcolor{blue!25}&\cellcolor{blue!25}& & \cellcolor{blue!25}& & \\
        C-1-PERC-0.1-v0e4 & & &\cellcolor{green!25}& & & & & & & & &\cellcolor{blue!25}&\cellcolor{blue!25}& \\
        C-1-PERC-0.1-v1e1 & &\cellcolor{green!25}&\cellcolor{green!25}& & & & & & & & & & & \\
        C-1-PERC-0.1-v1e2 & & &\cellcolor{green!25}& & & & & & & & &\cellcolor{blue!25}&\cellcolor{blue!25}& \\
        C-1-PERC-0.1-v2e2 & & & & & & & & & &\cellcolor{blue!25}& &\cellcolor{blue!25}& & \\
        C-1-PERC-0.25-v0e0 & & & & & &\cellcolor{green!25}& &\cellcolor{blue!25}& & & &\cellcolor{blue!25}& & \\
        C-1-PERC-0.25-v0e1 &\cellcolor{green!25}& & & & & &\cellcolor{blue!25}& & & & &\cellcolor{blue!25}& &\cellcolor{blue!25}\\
        C-1-PERC-0.25-v0e2 & & &\cellcolor{blue!25}&\cellcolor{green!25}&\cellcolor{green!25}&\cellcolor{blue!25}& & &\cellcolor{blue!25}&\cellcolor{blue!25}& & &\cellcolor{blue!25}&\cellcolor{blue!25}\\
        C-1-PERC-0.25-v0e3 & & & & & & &\cellcolor{blue!25}& &\cellcolor{blue!25}&\cellcolor{blue!25}&\cellcolor{blue!25}&\cellcolor{blue!25}& & \\
        C-1-PERC-0.25-v0e4 & & &\cellcolor{green!25}& & &\cellcolor{green!25}& & & &\cellcolor{blue!25}&\cellcolor{blue!25}&\cellcolor{blue!25}&\cellcolor{blue!25}& \\
        C-1-PERC-0.25-v1e1 &\cellcolor{blue!25}&\cellcolor{blue!25}& & & &\cellcolor{green!25}& & & &\cellcolor{blue!25}& & &\cellcolor{blue!25}& \\
        C-1-PERC-0.25-v1e2 & & &\cellcolor{green!25}& & &\cellcolor{green!25}& & & &\cellcolor{blue!25}& & &\cellcolor{blue!25}& \\
        C-1-PERC-0.25-v2e0 & & & & & & & & &\cellcolor{blue!25}& & &\cellcolor{blue!25}& &\cellcolor{blue!25}\\
        C-1-PERC-0.25-v2e2 & & & & & & & & &\cellcolor{blue!25}& & & & & \\
        C-1-REL-0.1-v0e0 & & & & & &\cellcolor{blue!25}&\cellcolor{blue!25}&\cellcolor{blue!25}&\cellcolor{blue!25}& & &\cellcolor{green!25}& & \\
        C-1-REL-0.1-v0e1 & & & & & & &\cellcolor{blue!25}& &\cellcolor{blue!25}& & & & & \\
        C-1-REL-0.1-v0e2 & & &\cellcolor{blue!25}& & & & & &\cellcolor{blue!25}&\cellcolor{blue!25}& &\cellcolor{blue!25}& & \\
        C-1-REL-0.1-v0e3 & & &\cellcolor{green!25}& & & &\cellcolor{blue!25}& &\cellcolor{blue!25}&\cellcolor{blue!25}& &\cellcolor{blue!25}& & \\
        C-1-REL-0.1-v0e4 & & & & & & &\cellcolor{blue!25}& & & & & & & \\
        C-1-REL-0.1-v1e1 & & & & & & & & &\cellcolor{blue!25}& & & & & \\
        C-1-REL-0.1-v2e0 & & & & & &\cellcolor{blue!25}& & &\cellcolor{blue!25}& & &\cellcolor{green!25}& & \\
        C-1-REL-0.1-v2e2 & & & & & & & & &\cellcolor{blue!25}&\cellcolor{blue!25}& &\cellcolor{blue!25}& & \\
        C-1-REL-0.4-v0e0 & & & & & & &\cellcolor{blue!25}&\cellcolor{blue!25}&\cellcolor{blue!25}& & & & & \\
        C-1-REL-0.4-v0e1 & & & & & & &\cellcolor{blue!25}& &\cellcolor{blue!25}& & & & & \\
        C-1-REL-0.4-v0e2 & & &\cellcolor{blue!25}& & & & & &\cellcolor{blue!25}&\cellcolor{blue!25}& &\cellcolor{blue!25}& & \\
        C-1-REL-0.4-v0e3 & & &\cellcolor{green!25}& & & &\cellcolor{blue!25}& &\cellcolor{blue!25}&\cellcolor{blue!25}& &\cellcolor{blue!25}& & \\
        C-1-REL-0.4-v0e4 & & & & & & &\cellcolor{blue!25}& & & & & & & \\
        C-1-REL-0.4-v1e1 &\cellcolor{blue!25}&\cellcolor{blue!25}& & & & & & &\cellcolor{blue!25}& & & & & \\
        C-1-REL-0.4-v2e0 &\cellcolor{blue!25}&\cellcolor{blue!25}& & & & & & &\cellcolor{blue!25}& & & & & \\
        C-1-REL-0.4-v2e2 & & &\cellcolor{green!25}& & & & & &\cellcolor{blue!25}&\cellcolor{blue!25}& &\cellcolor{blue!25}& & \\
        \hline
        C-IT-from0-iter-6 & & &\cellcolor{green!25}& & &\cellcolor{blue!25}&\cellcolor{blue!25}&\cellcolor{green!25}&\cellcolor{blue!25}&\cellcolor{blue!25}&\cellcolor{blue!25}&\cellcolor{blue!25}&\cellcolor{green!25}& \\
        C-IT-from0-iter-10 & & &\cellcolor{green!25}& & &\cellcolor{blue!25}&\cellcolor{blue!25}&\cellcolor{green!25}&\cellcolor{blue!25}&\cellcolor{blue!25}&\cellcolor{blue!25}&\cellcolor{blue!25}&\cellcolor{green!25}& \\
        C-IT-from1-iter-6 & &\cellcolor{green!25}& & & & &\cellcolor{blue!25}&\cellcolor{green!25}&\cellcolor{blue!25}&\cellcolor{blue!25}& &\cellcolor{blue!25}&\cellcolor{blue!25}&\cellcolor{blue!25}\\
        C-IT-from1-iter-10 & &\cellcolor{green!25}&\cellcolor{green!25}& & & &\cellcolor{blue!25}&\cellcolor{green!25}&\cellcolor{blue!25}&\cellcolor{blue!25}& &\cellcolor{blue!25}&\cellcolor{blue!25}&\cellcolor{blue!25}\\
        C-IT-from0-PERC-0.1 & & & & & & &\cellcolor{blue!25}&\cellcolor{green!25}&\cellcolor{green!25}&\cellcolor{green!25}&\cellcolor{green!25}&\cellcolor{green!25}&\cellcolor{green!25}&\cellcolor{green!25}\\
        C-IT-from0-PERC-0.25 & & & & & & &\cellcolor{blue!25}&\cellcolor{green!25}&\cellcolor{green!25}&\cellcolor{blue!25}&\cellcolor{blue!25}&\cellcolor{blue!25}&\cellcolor{blue!25}&\cellcolor{green!25}\\
        C-IT-from0-REL-0.1 & & & & & & &\cellcolor{blue!25}&\cellcolor{green!25}&\cellcolor{green!25}&\cellcolor{green!25}&\cellcolor{green!25}&\cellcolor{green!25}&\cellcolor{green!25}&\cellcolor{green!25}\\
        C-IT-from0-REL-0.4 & & & & & & &\cellcolor{blue!25}&\cellcolor{green!25}&\cellcolor{green!25}&\cellcolor{green!25}&\cellcolor{green!25}&\cellcolor{green!25}&\cellcolor{green!25}&\cellcolor{green!25}\\
        C-IT-from1-PERC-0.1 & & & & & & &\cellcolor{blue!25}&\cellcolor{green!25}&\cellcolor{green!25}&\cellcolor{green!25}& &\cellcolor{blue!25}&\cellcolor{blue!25}&\cellcolor{blue!25}\\
        C-IT-from1-PERC-0.25 & & & & & & &\cellcolor{blue!25}&\cellcolor{green!25}&\cellcolor{green!25}&\cellcolor{green!25}& &\cellcolor{blue!25}&\cellcolor{blue!25}&\cellcolor{blue!25}\\
        C-IT-from1-REL-0.1 & & & & & & &\cellcolor{blue!25}&\cellcolor{green!25}&\cellcolor{green!25}&\cellcolor{green!25}& &\cellcolor{blue!25}&\cellcolor{blue!25}&\cellcolor{green!25}\\
        C-IT-from1-REL-0.4 & & &\cellcolor{green!25}& & & &\cellcolor{blue!25}&\cellcolor{green!25}&\cellcolor{green!25}& \cellcolor{green!25}& &\cellcolor{blue!25}&\cellcolor{blue!25}&\cellcolor{green!25}\\
    \end{tabular}
    \caption{Timeouts when running feature causality. Cells colored green finished between 4-12 hours. Cells colored blue did not finish within 12 hours. Configurations and benchmarks without any timeouts are not included in the table.}
    \label{tab:app-fc-timeouts}
\end{table*}

\section{Artifact}
\label{app:artifact}

Code for this paper is publicly accessible on GitHub at \hyperref[https://github.com/ai-fm/causalabstractions.git]{https://github.com/ai-fm/causalabstractions.git}.
}{}

\end{document}